\newcolumntype{Y}{>{\RaggedRight\arraybackslash}X}
\algrenewcommand\textproc{}
\definecolor{myfavblue}{rgb}{0.05, 0.2, 0.8}
\definecolor{keywords}{RGB}{255,0,90}
\definecolor{comments}{RGB}{0,0,113}
\definecolor{red}{RGB}{160,0,0}
\definecolor{green}{RGB}{0,150,0}
\definecolor{C0}{rgb}{0.12156862745098039, 0.4666666666666667, 0.7058823529411765}  %
\definecolor{myblue}{HTML}{3182bd}
\definecolor{myred}{HTML}{de2d26}
\tikzset{
    every node/.style={font=\sffamily\Large},
    main node/.style={thick,circle,draw,font=\sffamily\huge}
}
\definecolor{mydarkblue}{rgb}{0,0.08,0.45}
\let\oldReturn\Return
\renewcommand{\Return}{\State\oldReturn}
\algnewcommand{\IfThenElse}[3]{%
  \State \algorithmicif\ #1\ \algorithmicthen\ #2\ \algorithmicelse\ #3}
\newcommand\reallywidecheck[1]{%
\savestack{\tmpbox}{\stretchto{%
  \scaleto{%
    \scalerel*[\widthof{\ensuremath{#1}}]{\kern-.6pt\bigwedge\kern-.6pt}%
    {\rule[-\textheight/2]{1ex}{\textheight}}%
  }{\textheight}%
}{0.5ex}}%
\stackon[1pt]{#1}{\scalebox{-1}{\tmpbox}}%
}
\DeclareFontFamily{U}{matha}{\hyphenchar\font45}
\DeclareFontShape{U}{matha}{m}{n}{
      <5> <6> <7> <8> <9> <10> gen * matha
      <10.95> matha10 <12> <14.4> <17.28> <20.74> <24.88> matha12
      }{}
\DeclareSymbolFont{matha}{U}{matha}{m}{n}
\DeclareMathSymbol{\widecheck}{\widecheck}{matha}{'131}
\begin{document}

\twocolumn[

\aistatstitle{
Scalable Gradients for Stochastic Differential Equations
}

\aistatsauthor{
Xuechen Li$^*$ \And
Ting-Kam Leonard Wong \And
Ricky T. Q. Chen \And
David Duvenaud
}

\aistatsaddress{ Google Research \And  University of Toronto
\And University of Toronto \\
Vector Institute
\And University of Toronto \\
Vector Institute
}]

\newcommand{\figleft}{{\em (Left)}}
\newcommand{\figcenter}{{\em (Center)}}
\newcommand{\figright}{{\em (Right)}}
\newcommand{\figtop}{{\em (Top)}}
\newcommand{\figbottom}{{\em (Bottom)}}
\newcommand{\captiona}{{\em (a)}}
\newcommand{\captionb}{{\em (b)}}
\newcommand{\captionc}{{\em (c)}}
\newcommand{\captiond}{{\em (d)}}

\newcommand{\newterm}[1]{{\bf #1}}

\def\figref#1{figure~\ref{#1}}
\def\Figref#1{Figure~\ref{#1}}
\def\twofigref#1#2{figures \ref{#1} and \ref{#2}}
\def\quadfigref#1#2#3#4{figures \ref{#1}, \ref{#2}, \ref{#3} and \ref{#4}}
\def\secref#1{section~\ref{#1}}
\def\Secref#1{Section~\ref{#1}}
\def\twosecrefs#1#2{sections \ref{#1} and \ref{#2}}
\def\secrefs#1#2#3{sections \ref{#1}, \ref{#2} and \ref{#3}}
\def\eqref#1{equation~\ref{#1}}
\def\Eqref#1{Equation~\ref{#1}}
\def\plaineqref#1{\ref{#1}}
\def\chapref#1{chapter~\ref{#1}}
\def\Chapref#1{Chapter~\ref{#1}}
\def\rangechapref#1#2{chapters\ref{#1}--\ref{#2}}
\def\algref#1{algorithm~\ref{#1}}
\def\Algref#1{Algorithm~\ref{#1}}
\def\twoalgref#1#2{algorithms \ref{#1} and \ref{#2}}
\def\Twoalgref#1#2{Algorithms \ref{#1} and \ref{#2}}
\def\partref#1{part~\ref{#1}}
\def\Partref#1{Part~\ref{#1}}
\def\twopartref#1#2{parts \ref{#1} and \ref{#2}}

\def\ceil#1{\lceil #1 \rceil}
\def\floor#1{\lfloor #1 \rfloor}
\def\1{\bm{1}}
\newcommand{\train}{\mathcal{D}}
\newcommand{\valid}{\mathcal{D_{\mathrm{valid}}}}
\newcommand{\test}{\mathcal{D_{\mathrm{test}}}}

\def\eps{{\epsilon}}

\def\reta{{\textnormal{$\eta$}}}
\def\ra{{\textnormal{a}}}
\def\rb{{\textnormal{b}}}
\def\rc{{\textnormal{c}}}
\def\rd{{\textnormal{d}}}
\def\re{{\textnormal{e}}}
\def\rf{{\textnormal{f}}}
\def\rg{{\textnormal{g}}}
\def\rh{{\textnormal{h}}}
\def\ri{{\textnormal{i}}}
\def\rj{{\textnormal{j}}}
\def\rk{{\textnormal{k}}}
\def\rl{{\textnormal{l}}}
\def\rn{{\textnormal{n}}}
\def\ro{{\textnormal{o}}}
\def\rp{{\textnormal{p}}}
\def\rq{{\textnormal{q}}}
\def\rr{{\textnormal{r}}}
\def\rs{{\textnormal{s}}}
\def\rt{{\textnormal{t}}}
\def\ru{{\textnormal{u}}}
\def\rv{{\textnormal{v}}}
\def\rw{{\textnormal{w}}}
\def\rx{{\textnormal{x}}}
\def\ry{{\textnormal{y}}}
\def\rz{{\textnormal{z}}}

\def\rvepsilon{{\mathbf{\epsilon}}}
\def\rvtheta{{\mathbf{\theta}}}
\def\rva{{\mathbf{a}}}
\def\rvb{{\mathbf{b}}}
\def\rvc{{\mathbf{c}}}
\def\rvd{{\mathbf{d}}}
\def\rve{{\mathbf{e}}}
\def\rvf{{\mathbf{f}}}
\def\rvg{{\mathbf{g}}}
\def\rvh{{\mathbf{h}}}
\def\rvu{{\mathbf{i}}}
\def\rvj{{\mathbf{j}}}
\def\rvk{{\mathbf{k}}}
\def\rvl{{\mathbf{l}}}
\def\rvm{{\mathbf{m}}}
\def\rvn{{\mathbf{n}}}
\def\rvo{{\mathbf{o}}}
\def\rvp{{\mathbf{p}}}
\def\rvq{{\mathbf{q}}}
\def\rvr{{\mathbf{r}}}
\def\rvs{{\mathbf{s}}}
\def\rvt{{\mathbf{t}}}
\def\rvu{{\mathbf{u}}}
\def\rvv{{\mathbf{v}}}
\def\rvw{{\mathbf{w}}}
\def\rvx{{\mathbf{x}}}
\def\rvy{{\mathbf{y}}}
\def\rvz{{\mathbf{z}}}

\def\erva{{\textnormal{a}}}
\def\ervb{{\textnormal{b}}}
\def\ervc{{\textnormal{c}}}
\def\ervd{{\textnormal{d}}}
\def\erve{{\textnormal{e}}}
\def\ervf{{\textnormal{f}}}
\def\ervg{{\textnormal{g}}}
\def\ervh{{\textnormal{h}}}
\def\ervi{{\textnormal{i}}}
\def\ervj{{\textnormal{j}}}
\def\ervk{{\textnormal{k}}}
\def\ervl{{\textnormal{l}}}
\def\ervm{{\textnormal{m}}}
\def\ervn{{\textnormal{n}}}
\def\ervo{{\textnormal{o}}}
\def\ervp{{\textnormal{p}}}
\def\ervq{{\textnormal{q}}}
\def\ervr{{\textnormal{r}}}
\def\ervs{{\textnormal{s}}}
\def\ervt{{\textnormal{t}}}
\def\ervu{{\textnormal{u}}}
\def\ervv{{\textnormal{v}}}
\def\ervw{{\textnormal{w}}}
\def\ervx{{\textnormal{x}}}
\def\ervy{{\textnormal{y}}}
\def\ervz{{\textnormal{z}}}

\def\rmA{{\mathbf{A}}}
\def\rmB{{\mathbf{B}}}
\def\rmC{{\mathbf{C}}}
\def\rmD{{\mathbf{D}}}
\def\rmE{{\mathbf{E}}}
\def\rmF{{\mathbf{F}}}
\def\rmG{{\mathbf{G}}}
\def\rmH{{\mathbf{H}}}
\def\rmI{{\mathbf{I}}}
\def\rmJ{{\mathbf{J}}}
\def\rmK{{\mathbf{K}}}
\def\rmL{{\mathbf{L}}}
\def\rmM{{\mathbf{M}}}
\def\rmN{{\mathbf{N}}}
\def\rmO{{\mathbf{O}}}
\def\rmP{{\mathbf{P}}}
\def\rmQ{{\mathbf{Q}}}
\def\rmR{{\mathbf{R}}}
\def\rmS{{\mathbf{S}}}
\def\rmT{{\mathbf{T}}}
\def\rmU{{\mathbf{U}}}
\def\rmV{{\mathbf{V}}}
\def\rmW{{\mathbf{W}}}
\def\rmX{{\mathbf{X}}}
\def\rmY{{\mathbf{Y}}}
\def\rmZ{{\mathbf{Z}}}

\def\ermA{{\textnormal{A}}}
\def\ermB{{\textnormal{B}}}
\def\ermC{{\textnormal{C}}}
\def\ermD{{\textnormal{D}}}
\def\ermE{{\textnormal{E}}}
\def\ermF{{\textnormal{F}}}
\def\ermG{{\textnormal{G}}}
\def\ermH{{\textnormal{H}}}
\def\ermI{{\textnormal{I}}}
\def\ermJ{{\textnormal{J}}}
\def\ermK{{\textnormal{K}}}
\def\ermL{{\textnormal{L}}}
\def\ermM{{\textnormal{M}}}
\def\ermN{{\textnormal{N}}}
\def\ermO{{\textnormal{O}}}
\def\ermP{{\textnormal{P}}}
\def\ermQ{{\textnormal{Q}}}
\def\ermR{{\textnormal{R}}}
\def\ermS{{\textnormal{S}}}
\def\ermT{{\textnormal{T}}}
\def\ermU{{\textnormal{U}}}
\def\ermV{{\textnormal{V}}}
\def\ermW{{\textnormal{W}}}
\def\ermX{{\textnormal{X}}}
\def\ermY{{\textnormal{Y}}}
\def\ermZ{{\textnormal{Z}}}

\def\vzero{{\bm{0}}}
\def\vone{{\bm{1}}}
\def\vmu{{\bm{\mu}}}
\def\vtheta{{\bm{\theta}}}
\def\va{{\bm{a}}}
\def\vb{{\bm{b}}}
\def\vc{{\bm{c}}}
\def\vd{{\bm{d}}}
\def\ve{{\bm{e}}}
\def\vf{{\bm{f}}}
\def\vg{{\bm{g}}}
\def\vh{{\bm{h}}}
\def\vi{{\bm{i}}}
\def\vj{{\bm{j}}}
\def\vk{{\bm{k}}}
\def\vl{{\bm{l}}}
\def\vm{{\bm{m}}}
\def\vn{{\bm{n}}}
\def\vo{{\bm{o}}}
\def\vp{{\bm{p}}}
\def\vq{{\bm{q}}}
\def\vr{{\bm{r}}}
\def\vs{{\bm{s}}}
\def\vt{{\bm{t}}}
\def\vu{{\bm{u}}}
\def\vv{{\bm{v}}}
\def\vw{{\bm{w}}}
\def\vx{{\bm{x}}}
\def\vy{{\bm{y}}}
\def\vz{{\bm{z}}}

\def\evalpha{{\alpha}}
\def\evbeta{{\beta}}
\def\evepsilon{{\epsilon}}
\def\evlambda{{\lambda}}
\def\evomega{{\omega}}
\def\evmu{{\mu}}
\def\evpsi{{\psi}}
\def\evsigma{{\sigma}}
\def\evtheta{{\theta}}
\def\eva{{a}}
\def\evb{{b}}
\def\evc{{c}}
\def\evd{{d}}
\def\eve{{e}}
\def\evf{{f}}
\def\evg{{g}}
\def\evh{{h}}
\def\evi{{i}}
\def\evj{{j}}
\def\evk{{k}}
\def\evl{{l}}
\def\evm{{m}}
\def\evn{{n}}
\def\evo{{o}}
\def\evp{{p}}
\def\evq{{q}}
\def\evr{{r}}
\def\evs{{s}}
\def\evt{{t}}
\def\evu{{u}}
\def\evv{{v}}
\def\evw{{w}}
\def\evx{{x}}
\def\evy{{y}}
\def\evz{{z}}

\def\mA{{\bm{A}}}
\def\mB{{\bm{B}}}
\def\mC{{\bm{C}}}
\def\mD{{\bm{D}}}
\def\mE{{\bm{E}}}
\def\mF{{\bm{F}}}
\def\mG{{\bm{G}}}
\def\mH{{\bm{H}}}
\def\mI{{\bm{I}}}
\def\mJ{{\bm{J}}}
\def\mK{{\bm{K}}}
\def\mL{{\bm{L}}}
\def\mM{{\bm{M}}}
\def\mN{{\bm{N}}}
\def\mO{{\bm{O}}}
\def\mP{{\bm{P}}}
\def\mQ{{\bm{Q}}}
\def\mR{{\bm{R}}}
\def\mS{{\bm{S}}}
\def\mT{{\bm{T}}}
\def\mU{{\bm{U}}}
\def\mV{{\bm{V}}}
\def\mW{{\bm{W}}}
\def\mX{{\bm{X}}}
\def\mY{{\bm{Y}}}
\def\mZ{{\bm{Z}}}
\def\mBeta{{\bm{\beta}}}
\def\mPhi{{\bm{\Phi}}}
\def\mLambda{{\bm{\Lambda}}}
\def\mSigma{{\bm{\Sigma}}}

\newcommand{\tens}[1]{\bm{\mathsfit{#1}}}
\def\tA{{\tens{A}}}
\def\tB{{\tens{B}}}
\def\tC{{\tens{C}}}
\def\tD{{\tens{D}}}
\def\tE{{\tens{E}}}
\def\tF{{\tens{F}}}
\def\tG{{\tens{G}}}
\def\tH{{\tens{H}}}
\def\tI{{\tens{I}}}
\def\tJ{{\tens{J}}}
\def\tK{{\tens{K}}}
\def\tL{{\tens{L}}}
\def\tM{{\tens{M}}}
\def\tN{{\tens{N}}}
\def\tO{{\tens{O}}}
\def\tP{{\tens{P}}}
\def\tQ{{\tens{Q}}}
\def\tR{{\tens{R}}}
\def\tS{{\tens{S}}}
\def\tT{{\tens{T}}}
\def\tU{{\tens{U}}}
\def\tV{{\tens{V}}}
\def\tW{{\tens{W}}}
\def\tX{{\tens{X}}}
\def\tY{{\tens{Y}}}
\def\tZ{{\tens{Z}}}

\def\gA{{\mathcal{A}}}
\def\gB{{\mathcal{B}}}
\def\gC{{\mathcal{C}}}
\def\gD{{\mathcal{D}}}
\def\gE{{\mathcal{E}}}
\def\gF{{\mathcal{F}}}
\def\gG{{\mathcal{G}}}
\def\gH{{\mathcal{H}}}
\def\gI{{\mathcal{I}}}
\def\gJ{{\mathcal{J}}}
\def\gK{{\mathcal{K}}}
\def\gL{{\mathcal{L}}}
\def\gM{{\mathcal{M}}}
\def\gN{{\mathcal{N}}}
\def\gO{{\mathcal{O}}}
\def\gP{{\mathcal{P}}}
\def\gQ{{\mathcal{Q}}}
\def\gR{{\mathcal{R}}}
\def\gS{{\mathcal{S}}}
\def\gT{{\mathcal{T}}}
\def\gU{{\mathcal{U}}}
\def\gV{{\mathcal{V}}}
\def\gW{{\mathcal{W}}}
\def\gX{{\mathcal{X}}}
\def\gY{{\mathcal{Y}}}
\def\gZ{{\mathcal{Z}}}

\def\sA{{\mathbb{A}}}
\def\sB{{\mathbb{B}}}
\def\sC{{\mathbb{C}}}
\def\sD{{\mathbb{D}}}
\def\sF{{\mathbb{F}}}
\def\sG{{\mathbb{G}}}
\def\sH{{\mathbb{H}}}
\def\sI{{\mathbb{I}}}
\def\sJ{{\mathbb{J}}}
\def\sK{{\mathbb{K}}}
\def\sL{{\mathbb{L}}}
\def\sM{{\mathbb{M}}}
\def\sN{{\mathbb{N}}}
\def\sO{{\mathbb{O}}}
\def\sP{{\mathbb{P}}}
\def\sQ{{\mathbb{Q}}}
\def\sR{{\mathbb{R}}}
\def\sS{{\mathbb{S}}}
\def\sT{{\mathbb{T}}}
\def\sU{{\mathbb{U}}}
\def\sV{{\mathbb{V}}}
\def\sW{{\mathbb{W}}}
\def\sX{{\mathbb{X}}}
\def\sY{{\mathbb{Y}}}
\def\sZ{{\mathbb{Z}}}

\def\emLambda{{\Lambda}}
\def\emA{{A}}
\def\emB{{B}}
\def\emC{{C}}
\def\emD{{D}}
\def\emE{{E}}
\def\emF{{F}}
\def\emG{{G}}
\def\emH{{H}}
\def\emI{{I}}
\def\emJ{{J}}
\def\emK{{K}}
\def\emL{{L}}
\def\emM{{M}}
\def\emN{{N}}
\def\emO{{O}}
\def\emP{{P}}
\def\emQ{{Q}}
\def\emR{{R}}
\def\emS{{S}}
\def\emT{{T}}
\def\emU{{U}}
\def\emV{{V}}
\def\emW{{W}}
\def\emX{{X}}
\def\emY{{Y}}
\def\emZ{{Z}}
\def\emSigma{{\Sigma}}

\newcommand{\etens}[1]{\mathsfit{#1}}
\def\etLambda{{\etens{\Lambda}}}
\def\etA{{\etens{A}}}
\def\etB{{\etens{B}}}
\def\etC{{\etens{C}}}
\def\etD{{\etens{D}}}
\def\etE{{\etens{E}}}
\def\etF{{\etens{F}}}
\def\etG{{\etens{G}}}
\def\etH{{\etens{H}}}
\def\etI{{\etens{I}}}
\def\etJ{{\etens{J}}}
\def\etK{{\etens{K}}}
\def\etL{{\etens{L}}}
\def\etM{{\etens{M}}}
\def\etN{{\etens{N}}}
\def\etO{{\etens{O}}}
\def\etP{{\etens{P}}}
\def\etQ{{\etens{Q}}}
\def\etR{{\etens{R}}}
\def\etS{{\etens{S}}}
\def\etT{{\etens{T}}}
\def\etU{{\etens{U}}}
\def\etV{{\etens{V}}}
\def\etW{{\etens{W}}}
\def\etX{{\etens{X}}}
\def\etY{{\etens{Y}}}
\def\etZ{{\etens{Z}}}

\newcommand{\pdata}{p_{\rm{data}}}
\newcommand{\ptrain}{\hat{p}_{\rm{data}}}
\newcommand{\Ptrain}{\hat{P}_{\rm{data}}}
\newcommand{\pmodel}{p_{\rm{model}}}
\newcommand{\Pmodel}{P_{\rm{model}}}
\newcommand{\ptildemodel}{\tilde{p}_{\rm{model}}}
\newcommand{\pencode}{p_{\rm{encoder}}}
\newcommand{\pdecode}{p_{\rm{decoder}}}
\newcommand{\precons}{p_{\rm{reconstruct}}}

\newcommand{\laplace}{\mathrm{Laplace}} %

\newcommand{\E}{\mathbb{E}}
\newcommand{\Ls}{\mathcal{L}}
\newcommand{\R}{\mathbb{R}}
\newcommand{\emp}{\tilde{p}}
\newcommand{\lr}{\alpha}
\newcommand{\reg}{\lambda}
\newcommand{\rect}{\mathrm{rectifier}}
\newcommand{\softmax}{\mathrm{softmax}}
\newcommand{\sigmoid}{\sigma}
\newcommand{\softplus}{\zeta}
\newcommand{\KL}{D_{\mathrm{KL}}}
\newcommand{\Var}{\mathrm{Var}}
\newcommand{\standarderror}{\mathrm{SE}}
\newcommand{\Cov}{\mathrm{Cov}}
\newcommand{\normlzero}{L^0}
\newcommand{\normlone}{L^1}
\newcommand{\normltwo}{L^2}
\newcommand{\normlp}{L^p}
\newcommand{\normmax}{L^\infty}

\newcommand{\parents}{Pa} %

\let\ab\allowbreak

\newcommand{\eq}[1]{\begin{align}#1\end{align}}
\newcommand{\eqn}[1]{\begin{align*}#1\end{align*}}
\newcommand{\mypmatrix}[1]{\begin{pmatrix}#1\end{pmatrix}}

\newcommand{\dee}{\mathop{\mathrm{d}\!}}
\newcommand{\dt}{\,\dee t}
\newcommand{\de}{\,\dee e}
\newcommand{\ds}{\,\dee s}
\newcommand{\dx}{\,\dee x}
\newcommand{\dX}{\,\dee X}
\newcommand{\dy}{\,\dee y}
\newcommand{\dY}{\,\dee Y}
\newcommand{\dz}{\,\dee z}
\newcommand{\dZ}{\,\dee Z}
\newcommand{\dv}{\,\dee v}
\newcommand{\du}{\,\dee u}
\newcommand{\dw}{\,\dee w}
\newcommand{\dr}{\,\dee r}
\newcommand{\dB}{\,\dee B} %
\newcommand{\db}{\,\dee b}
\newcommand{\dW}{\,\dee W} %
\newcommand{\dtau}{\,\dee \tau}
\newcommand{\dmu}{\,\dee \mu}
\newcommand{\dnu}{\,\dee \nu}
\newcommand{\dzeta}{\,\dee \zeta}
\newcommand{\domega}{\,\dee \omega}

\newcommand*{\matr}[1]{\mathbfit{#1}}
\newcommand*{\tran}{^\top}
\newcommand*{\conj}[1]{\overline{#1}}
\newcommand*{\hermconj}{^{\mathsf{H}}}

\newcommand{\latent}{\vz}
\newcommand{\hidden}{\vh}
\newcommand{\obs}{x}
\newcommand{\sol}{z}
\newcommand{\obsdim}{D_x}
\newcommand{\latentdim}{D}
\newcommand{\solvefunc}{\textnormal{ODESolve}}
\newcommand{\tstart}{{t_\textnormal{0}}}
\newcommand{\tend}{{t_\textnormal{1}}}
\newcommand{\lograte}{\lambda}%
\newcommand{\method}{Latent ODE}
\newcommand{\cnfx}{\sol}
\newcommand{\adj}{a}

\newcommand*{\A}{\mathcal{A}}
\newcommand*{\B}{\mathcal{B}}
\newcommand*{\F}{\mathcal{F}}
\newcommand*{\V}{\mathbb{V}}
\newcommand*{\N}{\mathcal{N}}
\newcommand*{\TV}{\text{TV}}
\newcommand*{\LL}{\left}
\newcommand*{\RR}{\right}
\newcommand*{\tPhi}{\tilde{\Phi}}

\newcommand*{\bracks}[1]{\left(#1\right)}  %
\newcommand*{\abracks}[1]{\left\langle#1\right\rangle}  %
\newcommand*{\sbracks}[1]{\left[#1\right]}  %
\newcommand*{\norm}[1]{\left\|#1 \right\|}  %

\newcommand{\plim}{\mathrm{plim}}

\newtheorem{defi}{Definition}
\numberwithin{defi}{section}
\newtheorem{mycond}{Condition}

\newtheorem{fact}[defi]{Fact}
\newtheorem{algo}[defi]{Algorithm}
\newtheorem{cond}[defi]{Condition}
\newtheorem{prop}[defi]{Proposition}

\newtheorem{lemm}[defi]{Lemma}
\newtheorem{theo}[defi]{Theorem}
\newtheorem{coro}[defi]{Corollary}

\newtheorem{remark}{\textbf{Remark}}

\newtheorem{exam}{\textbf{Example}}

\newtheorem{assu}[defi]{Assumption}

\makeatletter
\DeclareRobustCommand\widecheck[1]{{\mathpalette\@widecheck{#1}}}
\def\@widecheck#1#2{%
    \setbox\z@\hbox{\m@th$#1#2$}%
    \setbox\tw@\hbox{\m@th$#1%
       \widehat{%
          \vrule\@width\z@\@height\ht\z@
          \vrule\@height\z@\@width\wd\z@}$}%
    \dp\tw@-\ht\z@
    \@tempdima\ht\z@ \advance\@tempdima2\ht\tw@ \divide\@tempdima\thr@@
    \setbox\tw@\hbox{%
       \raise\@tempdima\hbox{\scalebox{1}[-1]{\lower\@tempdima\box
\tw@}}}%
    {\ooalign{\box\tw@ \cr \box\z@}}}
\makeatother

\def\ssum{\mathsmaller{\sum}}
\def\sint{\mathsmaller{\int}}

\newcommand{\Exp}[1]{ \mathbb{E} \left[ #1 \right] }
\newcommand{\loss}{ \mathcal{L} }
\newcommand{\param}{ {\bm{\theta}} }

\newcommand{\mymid}{ {\text{mid}} }
\DeclarePairedDelimiter\abs{\lvert}{\rvert}

\begin{abstract}
The adjoint sensitivity method scalably computes gradients of solutions to ordinary differential equations.
We generalize this method to stochastic differential equations, allowing time-efficient and constant-memory computation of gradients with high-order adaptive solvers.
Specifically, we derive a stochastic differential equation whose solution is the gradient, a memory-efficient algorithm for caching noise, and conditions under which numerical solutions converge.
In addition, we combine our method with gradient-based stochastic variational inference for latent stochastic differential equations.
We use our method to fit stochastic dynamics defined by neural networks, achieving competitive performance on a 50-dimensional motion capture dataset.
\end{abstract}

\section{Introduction}
Deterministic dynamical systems can often be modeled by ordinary differential equations (ODEs).
The adjoint sensitivity method
can efficiently compute gradients of ODE solutions with constant memory cost.
This method was well-known in the physics, numerical analysis, and control communities for decades~\cite{pearlmutter1995gradient,pontryagin2018mathematical,andersson2013general,andersson2019casadi}.
Recently, it was combined with modern reverse-mode automatic differentiation packages, enabling ODEs with millions of parameters to be fit to data~\cite{chen2018neural} and allowing more flexible density estimation and time series models~\cite{grathwohl2019ffjord,rubanova2019latent,jia2019}.

Stochastic differential equations (SDEs) generalize ODEs, adding instantaneous noise to their dynamics~\cite{oksendal2003stochastic,sarkka2019applied,sarkka2013bayesian}.
They are a natural model for phenomena governed by many small and unobserved interactions, such as motion of molecules in a liquid~\cite{brown1828xxvii}, allele frequencies in a gene pool~\cite{ewens2012mathematical}, or prices in a market~\cite{shreve2004stochastic}.
Previous attempts on fitting SDEs mostly relied on methods with poor scaling properties.
The \textit{pathwise approach} \cite{gobet2005sensitivity,yang1991monte}, 
a form of forward-mode automatic differentiation, scales poorly in time with the number of parameters and states in the model.
On the other hand, simply differentiating through the operations of an SDE solver~\cite{giles2006smoking} scales poorly in memory.

In this work, we generalize the adjoint method to stochastic dynamics defined by SDEs.
We give a simple and practical algorithm for fitting SDEs with tens of thousands of parameters, while allowing the use of high-order adaptive time-stepping SDE solvers. We call this approach the \textit{stochastic adjoint sensitivity method}.

\begin{table}[h!]
	\setlength{\tabcolsep}{4pt}
	\label{tab:complexity}
	\centering
	\begin{tabular}{lll}
		\toprule
		\multicolumn{1}{c}{Method} & \multicolumn{1}{c}{Memory} & \multicolumn{1}{c}{Time} \\
		\midrule
		Forward pathwise \cite{yang1991monte,gobet2005sensitivity} & $\mathcal{O}(1)$ & $\mathcal{O}(L D)$ \\
		Backprop through solver \cite{giles2006smoking} & $\mathcal{O}(L)$ & $\mathcal{O}(L)$ \\
		{Stochastic adjoint (ours)} & $\mathcal{O}(1)$ & $\mathcal{O}(L \log L)$ \\
		\bottomrule
	\end{tabular}
	\vspace{-1mm}
	\caption{Asymptotic complexity comparison.
		$L$ is the number of steps used in a fixed-step solve, and $D$ is the number of state and parameters.
		Both memory and time are expressed in units of the cost of evaluating the drift and diffusion functions once each.
	}
\end{table}

There are two main difficulties in generalizing the adjoint formulation for ODEs to SDEs.
The first is mathematical: 
SDEs are defined using nonstandard integrals that usually rely on It\^o calculus.
The adjoint method requires
solving the dynamics backwards in time from the end state. %
However, it is not clear exactly what ``running the SDE backwards'' means in the context of stochastic calculus, and when it correctly reconstructs the forward trajectory.
We address this problem in Section~\ref{sec:adjoint}, deriving a backward Stratonovich SDE whose dynamics compute the necessary gradient.

The second difficulty is computational:
To retrace the steps, one needs to reconstruct the noise sampled on the forward pass, ideally without storing it.
In Section~\ref{sec:tree}, we give an algorithm that allows querying a Brownian motion sample at any time point arbitrarily-precisely, while only storing a single random seed.

We combine our adjoint approach with a gradient-based stochastic variational inference scheme for efficiently marginalizing over latent SDE models with arbitrary differentiable likelihoods.
This model family generalizes several existing families such as latent ODEs~\cite{chen2018neural,rubanova2019latent},
Gaussian state-space models~\cite{kitagawa1996linear, turner2010state},
and deep Kalman filters~\cite{krishnan2017structured}, and can naturally handle irregularly-sampled times series and missing observations.
We train latent SDEs on toy and real datasets, demonstrating competitive performance compared to existing approaches for dynamics modeling. 

\section{Background: Stochastic Flows}

\subsection{Adjoint Sensitivity Method}
\label{subsec:bg_adjoint_for_odes}
The adjoint sensitivity method is an efficient approach to solve control problems relying on the adjoint (co-state) system~\cite{pontryagin2018mathematical}. 
\citet{chen2018neural} used this method to compute the gradient with respect to parameters of a \emph{neural ODE}, which is a particular model among many others inspired by the theory of dynamical systems~\cite{lu2017beyond,chang2018reversible,ruthotto2018deep,haber2017stable,chang2017multi,li2017maximum,weinan2017proposal}.
The method, shown in Algorithm~\ref{algo:ode_adjoint}, is scalable, since the most costly computation is a vector-Jacobian product defining its backwards dynamics.
In addition, since the gradient is obtained by solving another ODE, no intermediate computation is stored as in the case of regular backpropagation~\cite{rumelhart1988learning}. 
\begin{figure*}[h!]
	\begin{minipage}[t]{0.49\linewidth}
		\begin{algorithm}[H]
			\centering
			\caption{ \footnotesize{ODE Adjoint Sensitivity} } \label{algo:ode_adjoint}
			\begin{algorithmic}
				\Require {
					Parameters $\theta$, start time $\tstart$, stop time $\tend$, 
					final state $\sol_\tend$, loss gradient ${\partial \loss}/{\sol_\tend}$,
					dynamics $f(\sol, t, \theta)$.
					\newline
				}
				\vspace{.75mm}
				\vspace{.75mm}
				\Function{\textnormal{$\overline f$}}{$[\sol_t, \adj_t, \cdot], \; t, \; \theta$}:
				\Comment{Augmented dynamics}
				\State $v = f(\sol_t, -t, \theta)$
				\State \textbf{return} $\![
				-v, \;
				\adj_t {\partial v}/{\partial \sol}, \;
				\adj_t {\partial v}/{\partial \theta} \,]$ 
				\EndFunction
				
				\vspace{24mm}
				\vspace{.75mm}
				$\!\!\!\!\!\!\!\!\!\!\!\!\left[
				\begin{array}{c}
				\!\!\!\! \sol_\tstart \!\!\!\! \\
				\!\!\!\! {\partial \loss}/{\partial \sol_\tstart} \!\!\!\!\\
				\!\!\!\!  {\partial \loss}/{\partial \theta} \!\!\!\!
				\end{array} \right] = \textnormal{\texttt{odeint}}
				\!\!\left( \left[
				\begin{array}{c}
				\!\!\!\! \sol_\tend \!\!\!\! \\
				\!\!\!\! {\partial \loss}/{\partial \sol_\tend} \!\!\!\!\\
				\!\!\!\!  {\vzero}_p \!\!\!\!
				\end{array} \right]\!\!,
				\textnormal{$\overline f$}, -\tend, -\tstart \right)$
				\Ensure ${\partial \loss}/{\partial \sol_\tstart}, {\partial \loss}/{\partial \theta }$
			\end{algorithmic}
		\end{algorithm}%
	\end{minipage}
	\hfill
	\begin{minipage}[t]{0.49\linewidth}
		\begin{algorithm}[H]
			\centering
			\caption{ \footnotesize{SDE Adjoint Sensitivity (Ours)} } \label{algo:overall}
			\begin{algorithmic}
				\Require {
					Parameters $\theta$, start time $\tstart$, stop time $\tend$, 
					final state $\sol_\tend$, loss gradient ${\partial \loss}/{\sol_\tend}$,
					drift $f(\sol, t, \theta)$, 
					{\color{myfavblue} diffusion $\sigma(\sol, t, \theta)$,
						Wiener process sample $w(t)$}.
				}
				\vspace{1.25mm}
				\Function{\textnormal{$\overline f$}}{$[\sol_t, \adj_t, \cdot], \; t, \; \theta$}:  \Comment{Augmented drift}
				\State $v = f(\sol_t, -t, \theta)$
				\State \textbf{return} $\![
				-v, \;
				\adj_t {\partial v}/{\partial \sol}, \;
				\adj_t {\partial v}/{\partial \theta}]$ 
				\EndFunction
				\vspace{.75mm}
				{\color{myfavblue}
					\Function{\textnormal{$\overline \sigma$}}{$[\sol_t, \adj_t, \cdot], \; t, \; \theta$}:
					\Comment{Augmented diffusion}
					\State $v = \sigma(\sol_t, -t, \theta)$
					\State \textbf{return} $\![
					-v,
					\adj_t {\partial v}/{\partial \sol},
					\adj_t {\partial v}/{\partial \theta}]$ 
					\EndFunction}
				\vspace{.75mm}
				{\color{myfavblue}
					\Function{\textnormal{$\overline w$}}{$t$}:
					\Comment{Replicated noise}
					\State \textbf{return} $\![{-w(-t)}, {-w(-t)}, {-w(-t)}]$ 
					\EndFunction}

				\vspace{.75mm}
				$\!\!\!\!\!\!\!\!\!\!\!\!\left[
				\begin{array}{c}
				\!\!\!\! \sol_\tstart \!\!\!\! \\
				\!\!\!\! {\partial \loss}/{\partial \sol_\tstart} \!\!\!\!\\
				\!\!\!\!  {\partial \loss}/{\partial \theta} \!\!\!\!
				\end{array} \right] = {\color{myfavblue}\textnormal{\texttt{sdeint}}}
				\!\!\left( \left[
				\begin{array}{c}
				\!\!\!\! \sol_\tend \!\!\!\! \\
				\!\!\!\! {\partial \loss}/{\partial \sol_\tend} \!\!\!\!\\
				\!\!\!\!  {\vzero}_p \!\!\!\!
				\end{array} \right] \!\!,
				\textnormal{$\overline f$}, {\color{myfavblue}\textnormal{$\overline \sigma$}, \textnormal{$\overline w$}}, 
				-\tend, -\tstart \right)$
				\Ensure ${\partial \loss}/{\partial \sol_\tstart}, {\partial \loss}/{\partial \theta }$
			\end{algorithmic}
		\end{algorithm}
	\end{minipage}
	\caption{
		Pseudocode of the (ODE) adjoint sensitivity method (\emph{left}), and our generalization to Stratonovich SDEs (\emph{right}).
		Differences are highlighted in blue. Square brackets denote vector concatenation. 
	}
\end{figure*}

\subsection{Stochastic Differential Equations}

Consider a filtered probability space $(\Omega, \F, \{ \F_t \}_{t \in \mathbb{T}}, P)$ on which an $m$-dimensional adapted Wiener process (or Brownian motion) $\{ W_t \}_{t \in\mathbb{T} }$ is defined. 
For a fixed terminal time $T > 0$, we denote by $\mathbb{T} = [0, T] $ the time horizon.
We denote the $i$th component of $W_t$ by $W_t^{(i)}$. 
Due to space constraint, we refer the read to Appendix~\ref{app:notation} for more on notation. 

A stochastic process $\{Z_t\}_{t \in\mathbb{T}}$ can be defined by an It\^o SDE
\eq{\label{eq:sde}
    Z_T \!=\!
        z_0 \!+\!  
        \int_0^T \! b ( Z_t, t ) \dt  \!+\!  
        \sum_{i=1}^m  \int_0^T \! \sigma_i(Z_t, t) \dW_t^{(i)},
}
where $z_0 \in \R^d$ is the starting state,
and $b: \R^d \times \R \to \R^d$ and $\sigma_i:\R^d \times \R \to \R^d$ are the drift and diffusion functions, respectively. 
For ease of presentation, we let $m=1$ in the following unless otherwise stated. Our contributions can be easily generalized to cases where $m > 1$.
Here, the second integral on the right hand side of (\ref{eq:sde}) is the It\^{o} stochastic integral~\cite{oksendal2003stochastic}. When the coefficients are globally Lipschitz in both the state and time, there exists a unique strong solution to the SDE~\cite{oksendal2003stochastic}. 

\subsection{Neural Stochastic Differential Equations}
\label{subsec:bg_neural_sde}
Similar to neural ODEs, one can consider drift and diffusion functions defined by neural networks, %
 a model known as the \textit{neural SDE}~\citep{tzen2019neural,tzen2019theoretical,liu2019neural,jia2019}.

Among works on neural SDEs, none has enabled an efficient training framework.
In particular, \citet{tzen2019neural} and \citet{liu2019neural} considered computing the gradient by simulating the forward dynamics of an explicit Jacobian matrix.
This Jacobian has size of either the square of the number of parameters, or the number of parameters times the number of states, building on the pathwise approach~\cite{gobet2005sensitivity,yang1991monte}.
In contrast, our approach only requires a small number of cheap vector-Jacobian products, independent of the dimension of the parameter and state vectors.
These vector-Jacobian products have the same asymptotic time cost as evaluating the drift and diffusion functions, and can be easily computed by modern automatic differentiation libraries~\citep{maclaurin2015autograd,paszke2017automatic,abadi2016tensorflow,frostig2018compiling}. 

\subsection{Backward Stratonovich Integral}
\label{subsec:bg_backward_stratonovich_integral}
Our stochastic adjoint sensitivity method involves stochastic processes running both forward and backward in time. The Stratonovich stochastic integral, due to its symmetry, gives nice expressions for the backward dynamics and is more convenient for our purpose.
Our results can also be straightforwardly applied to It\^o SDEs, relying on a simple conversion rule~(see e.g.~\cite[Sec. 2]{platen1999introduction}).

Following the treatment of Kunita \cite{kunita2019stochastic}, we introduce the forward and backward Stratonovich integrals. Let $\{ \F_{s, t}\}_{s \le t; s,t \in \mathbb{T}}$ be a \emph{two-sided filtration}, where $\F_{s, t}$ is the $\sigma$-algebra generated by $\{W_v - W_u: s \le u \le v \le t\}$ for $s, t \in \mathbb{T}$ such that $s \le t$. For a continuous semimartingale $\{ Y_t \}_{t\in \mathbb{T}}$ adapted to the forward filtration $\{\F_{0, t}\}_{t \in \mathbb{T}}$, the \emph{Stratonovich stochastic integral} is
\eq{
    \sint_0^T Y_t \circ \dW_t \! = \!\!
        \lim_{ |\Pi| \to 0 } \sum_{k=1}^N
        \tfrac{
            \bracks{
                Y_{t_k} + Y_{t_{k-1}}
            }
        }{2}
        \bracks{
            W_{t_k} - W_{t_{k-1}}
        },
}
where $\Pi = \{0 = t_0 < \cdots < t_N = T \}$ is a partition of the interval $\mathbb{T} = [0, T]$, $|\Pi| = \max_k t_{k} - t_{k-1}$ denotes the size of largest segment of the partition, and the limit is to be interpreted in the $L^2$ sense. The It\^{o} integral uses instead the left endpoint $Y_{t_k}$ rather than the average. In general, the It\^{o} and Stratonovich integrals differ by a term of finite variation.

To define the backward Stratonovich integral, we consider the \emph{backward Wiener process} $\{\widecheck{W}_t\}_{t \in \mathbb{T}}$ defined as $\widecheck{W}_t = W_t - W_T\; \text{for all} \; t \in \mathbb{T}$ that is adapted to the backward filtration $\{\F_{t, T}\}_{t \in \mathbb{T}}$. For a continuous semimartingale $\widecheck{Y}_t$ adapted to the backward filtration, the \emph{backward Stratonovich integral} is
\eq{
    \sint_s^T \widecheck{Y}_t \circ \dee \widecheck{W}_t \! = \!\!
        \lim_{|\Pi| \to 0} \sum_{k=1}^N
        \tfrac{
            \bracks{
                \widecheck{Y}_{t_k} + 
                \widecheck{Y}_{t_{k-1}}
            }
        }{2}
        \bracks{\widecheck{W}_{t_{k-1}} - \widecheck{W}_{t_k} },
}
where $\Pi = \{0 = t_N < \cdots < t_0 = T \}$ is the partition. 

\subsection{Stochastic Flow of Diffeomorphisms}
\label{subsec:bg_stochastic_flow}
It is well known that an ODE defines a flow of diffeomorphisms \cite{arnold1978ordinary}. Here we consider the stochastic analog for the Stratonovich SDE
\eq{
 Z_T =
    z_0 +
    \int_0^T b ( Z_t, t ) \dt +
    \int_0^T \sigma( Z_t, t ) \circ \dW_t. \label{eq:stratonovich_sde}
}
Throughout the paper, we assume that both $b$ and $\sigma$ have infinitely many bounded derivatives w.r.t.\ the state, and bounded first derivatives w.r.t.\ time, i.e.\ $b, \sigma \in C^{\infty, 1}_b$, and thus the SDE has a unique strong solution.
 Let $\Phi_{s, t}(z) := Z^{s, z}_t$ be the solution at time $t$ when the process is started at $z$ at time $s$. 
Given a realization of the Wiener process, this defines a collection of continuous maps $\mathcal{S} = \{ \Phi_{s, t}\}_{s \le t; s, t \in \mathbb{T}}$ from $\mathbb{R}^d$ to itself. 

The following theorem shows that these maps are diffeomorphisms (after choosing a suitable modification) and that they satisfy backward SDEs.
\begin{theo}[{\cite[Theorem 3.7.1]{kunita2019stochastic}}] \label{thm:stochastic_flow}
\begin{enumerate}
\item[(a)] With probability $1$, the collection $\mathcal{S} = \{ \Phi_{s, t}\}_{s \le t; s, t \in \mathbb{T}}$ satisfies the flow property
\eq{
\Phi_{s, t} (z) = \Phi_{u, t} ( \Phi_{s, u} (z) ), \quad s \leq u \leq t, \; z \in \R^d. \label{eq:flow.property}
}
Moreover, each $\Phi_{s, t}$ is a smooth diffeomorphism from $\mathbb{R}^d$ to itself. 
We thus call $\mathcal{S}$ the stochastic flow of diffeomorphisms generated by the SDE (\ref{eq:stratonovich_sde}).
\item[(b)] The backward flow $\widecheck{\Psi}_{s, t} := \Phi_{s, t}^{-1}$ satisfies the backward SDE:
\eq{
    \widecheck{\Psi}_{s, t}(z) = 
        z - 
        &
        \int_s^t b(\widecheck{\Psi}_{u, t}(z), u) \du -
        \\&
        \int_s^t \sigma(\widecheck{\Psi}_{u, t}(z), u) \circ \dee \widecheck{W}_u,
        \label{eq:backward_stratonovich_sde}
}
for all $z \in \R^d$ and $s, t \in \mathbb{T}$ such that $s \le t$.
\end{enumerate}
\end{theo}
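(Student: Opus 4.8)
This statement is quoted from Kunita's monograph; here is how I would reconstruct the proof. The maps $\Phi_{s,t}$ exist and are unique because the $C^{\infty,1}_b$ hypothesis on $b,\sigma$ gives a unique strong solution of \eqref{eq:stratonovich_sde}. The flow property \eqref{eq:flow.property} is then immediate from this uniqueness: for fixed $s\le u\le t$ and $z$, the processes $r\mapsto\Phi_{u,r}(\Phi_{s,u}(z))$ and $r\mapsto\Phi_{s,r}(z)$ on $[u,t]$ both solve \eqref{eq:stratonovich_sde} with the same value at $r=u$, so they coincide; evaluating at $r=t$ gives the claim for each tuple, and once a jointly continuous version is available it holds simultaneously for all $(s,u,t,z)$ a.s.

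For smoothness in $z$ I would pass to the It\^o formulation (the conversion only adds $\tfrac12(\partial_x\sigma)\sigma$ to the drift, which is still $C^{\infty,1}_b$) and differentiate the equation formally in the initial condition. The Jacobian $J_{s,t}(z):=\partial_z\Phi_{s,t}(z)$ should solve the linear variational SDE
\eq{
J_{s,t}(z)=I+\int_s^t\partial_xb\bigl(\Phi_{s,u}(z),u\bigr)J_{s,u}(z)\du+\int_s^t\partial_x\sigma\bigl(\Phi_{s,u}(z),u\bigr)J_{s,u}(z)\circ\dW_u,
}
with each higher-order spatial derivative solving a linear SDE driven by the lower-order ones. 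Burkholder--Davis--Gundy and Gronwall then yield, for every $p\ge2$ and every multi-index, moment bounds of the type $\E\,\|\Phi_{s,t}(z)-\Phi_{s',t'}(z')\|^p\lesssim(|s-s'|+|t-t'|+|z-z'|)^{p/2}$ and the analogous bounds for the difference quotients. Kolmogorov's continuity criterion then produces a modification of $(s,t,z)\mapsto\Phi_{s,t}(z)$ that is jointly continuous together with all spatial derivatives, and a standard argument identifies the candidate processes above as the genuine derivatives, so $\Phi_{s,t}\in C^\infty$ a.s. Invertibility of $J_{s,t}(z)$ follows by applying Stratonovich's product rule to $J_{s,t}(z)K_{s,t}(z)$, where $K$ solves the adjoint linear SDE; hence $\det J_{s,t}\neq0$ everywhere and $\Phi_{s,t}$ is at least a local diffeomorphism.

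To obtain a \emph{global} diffeomorphism and part (b) together, I would define $\widecheck\Psi_{s,t}(z)$ by the right-hand side of \eqref{eq:backward_stratonovich_sde}, using the backward Stratonovich integral against $\widecheck W$ from Section~\ref{subsec:bg_backward_stratonovich_integral}; the same estimates applied to the backward equation show it is well posed, jointly continuous, and smooth. The heart of the matter is the backward equation for the forward flow itself: for fixed $t$, $s\mapsto\Phi_{s,t}(x)$ satisfies a Stratonovich SDE in the backward variable $s$, driven by $\widecheck W$ with coefficients built from $b,\sigma$ and $\partial_x\Phi_{\cdot,t}$ --- this is obtained by differentiating the flow identity $\Phi_{s,t}=\Phi_{s',t}\circ\Phi_{s,s'}$ at $s'=s^{+}$, the Stratonovich form being exactly what makes the second-order It\^o correction drop out. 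Substituting $x=\widecheck\Psi_{s,t}(z)$ into this and differentiating the relation $\Phi_{s,t}(\widecheck\Psi_{s,t}(z))=z$ (constant in $s$) with the Stratonovich substitution rule for composing a flow with a semimartingale, the terms rearrange to show that $\widecheck\Psi_{s,t}(z)$ indeed satisfies \eqref{eq:backward_stratonovich_sde}; in particular $\Phi_{s,t}\circ\widecheck\Psi_{s,t}=\mathrm{id}$, and the symmetric computation gives $\widecheck\Psi_{s,t}\circ\Phi_{s,t}=\mathrm{id}$, so $\Phi_{s,t}$ is a bijection with smooth inverse $\widecheck\Psi_{s,t}$.

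I expect the genuinely stochastic points to be the main obstacles: (i) setting up the backward Stratonovich integral so that integrands adapted to $\{\F_{t,T}\}_{t\in\mathbb{T}}$ are admissible and justifying the Stratonovich chain and substitution rules in the backward time direction, and (ii) promoting ``invertible Jacobian'' to ``global diffeomorphism'', which on the noncompact space $\R^d$ really requires the backward flow and the composition identity rather than a soft degree-theoretic argument. Everything else is routine once the It\^o--Stratonovich conversion is in place. A cleaner route that sidesteps both is a Wong--Zakai argument: approximate $W$ by piecewise-linear paths $W^n$, for which $\Phi^n_{s,t}$ is an honest ODE flow --- hence a diffeomorphism obeying the classical ODE time-reversal identity --- and then show $\Phi^n_{s,t}\to\Phi_{s,t}$ and $(\Phi^n_{s,t})^{-1}\to\widecheck\Psi_{s,t}$ locally uniformly in probability, along with their derivatives.
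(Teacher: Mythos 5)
This theorem is quoted verbatim from Kunita (Theorem 3.7.1 of the cited monograph); the paper supplies no proof of its own, so there is nothing in-paper to compare against, and your reconstruction should be judged against the standard argument of that reference --- which it matches in all essentials. You get the flow property~\eqref{eq:flow.property} from pathwise uniqueness, smoothness from the variational SDE for the Jacobian together with moment estimates and the Kolmogorov continuity criterion, invertibility of the Jacobian from the adjoint linear equation, and the global inverse from the backward flow solving~\eqref{eq:backward_stratonovich_sde} with respect to the two-sided filtration; Kunita's actual derivation of the backward equation proceeds by Taylor-expanding the composition $\Phi_{s,t}=\Phi_{t_{k},t}\circ\cdots\circ\Phi_{s,t_1}$ over a partition and passing to the limit, which is the rigorous form of your ``differentiate the flow identity at $s'=s^{+}$'' heuristic. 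Two small remarks: the spatial moment bound one actually proves is $\E\,\|\Phi_{s,t}(z)-\Phi_{s,t}(z')\|^p\lesssim|z-z'|^p$ (Lipschitz in the initial point via Gronwall and Burkholder--Davis--Gundy), not the $|z-z'|^{p/2}$ you wrote --- yours would still suffice for Kolmogorov at large $p$ but is not what the estimate gives; and your emphasis on the two genuinely delicate points (admissibility of backward-adapted integrands for the backward Stratonovich integral, and promoting an invertible Jacobian to a global diffeomorphism via the backward flow rather than a degree argument) is exactly where the real work in Kunita lies. The Wong--Zakai route you sketch at the end is a legitimate alternative and is also consistent with the paper's informal remark (Figure~\ref{fig:stochastic.flow}) that the Stratonovich form is what makes naive time reversal work.
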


The coefficients in (\ref{eq:stratonovich_sde}) and (\ref{eq:backward_stratonovich_sde}) differ by only a negative sign. This symmetry is due to our use of the Stratonovich integral (see Figure \ref{fig:stochastic.flow}).

\begin{figure}[ht]
\centering
\includegraphics[width=\columnwidth]{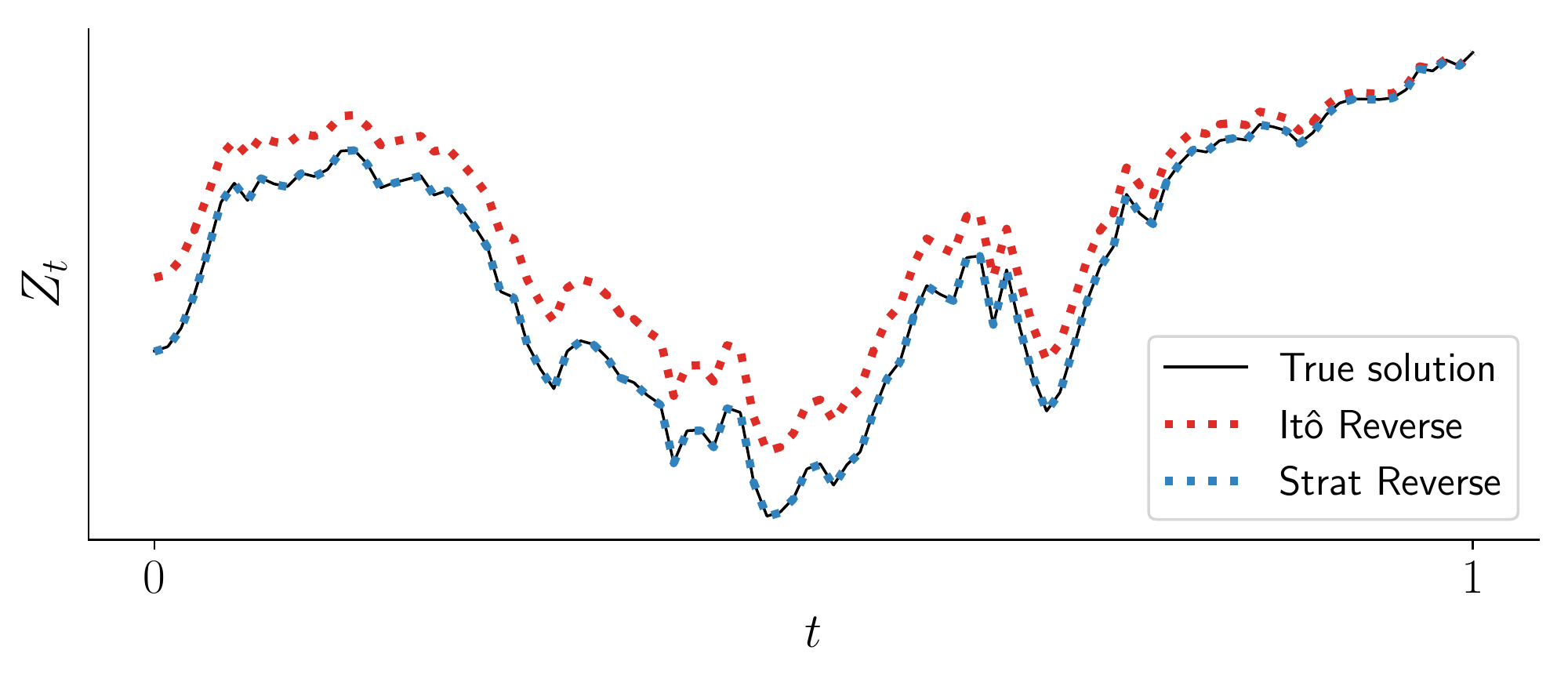}

\caption{
Negating the drift and diffusion functions for an It\^o SDE and simulating backwards from the end state gives the wrong reconstruction. 
Negating the drift and diffusion functions for the converted Stratonovich SDE gives the same path when simulated backwards. 
} \label{fig:stochastic.flow}
\vspace{-4mm}
\end{figure}

\section{Sensitivity via Stochastic Adjoint} \label{sec:adjoint}

We present our main contribution: a stochastic analog of the adjoint sensitivity method for SDEs. 
We use (\ref{eq:backward_stratonovich_sde}) to derive another backward Stratonovich SDE, which we call the \textit{stochastic adjoint process}.
The direct implication is a gradient computation algorithm that works by solving a set of dynamics in reverse time, and relies on cheap vector-Jacobian products without storing any intermediate quantities.

\subsection{Stochastic Adjoint Process}
The goal is to derive a stochastic adjoint process $\{\partial \loss / \partial Z_t\}_{t \in \mathbb{T}}$ that can be simulated by evaluating only vector-Jacobian products, where $\loss = \loss(Z_T)$ is a scalar loss of the terminal state from the forward flow $Z_T = \Phi_{0, T}(z_0)$.

We first derive a backward SDE for the process $\{\partial Z_T / \partial Z_t\}_{t \in \mathbb{T}}$, assuming that $Z_t = \widecheck{\Psi}_{t, T}(Z_T)$ follows the inverse flow from a deterministic end state $Z_T \in \R^d$ that does not depend on the realized Wiener process (Lemma~\ref{lemm:jacobian_dynamics}). 
We then extend to the case where $Z_T = \Phi_{0, T}(z_0)$ is obtained by the forward flow starting from a deterministic initial state $z_0 \in \R^d$ (Theorem \ref{thm:adjoint}).
This latter part is unconventional, and the resulting value cannot be interpreted as the solution to a backward SDE anymore due to loss of adaptedness. 
Instead, we will formulate the result with the \textit{It\^o map}~\cite{rogers2000diffusions2}. 
Finally, it is straightforward to extend the state $Z_t$ to include parameters of the drift and diffusion functions such that the desired gradient can be obtained for stochastic optimization; we comment on this step in Section~\ref{subsec:the_algorithm}.

We first present the SDE for the Jacobian matrix of the backward flow.

\begin{lemm}[Dynamics of ${\partial Z_T}/{\partial Z_t}$] \label{lemm:jacobian_dynamics}
Consider the stochastic flow generated by the backward SDE~(\ref{eq:backward_stratonovich_sde}) as in Theorem \ref{thm:stochastic_flow}(b). Letting $J_{s,t} (z) := \nabla \widecheck{\Psi}_{s, t}(z)$, we have
\eq{
    J_{s, t}(z) =
        I_d -
        &\int_s^t \nabla b(\widecheck{\Psi}_{r, t}(z), r) J_{r, t} (z) \dr - \\
        &\int_s^t \nabla \sigma( \widecheck{\Psi}_{r, t}(z), r ) J_{r, t} (z) \circ \dee \widecheck{W}_r,
    \label{eq:dynamics.J}
}
for all $s \le t$ and $z \in \R^d$.
Furthermore, letting $K_{s, t}(z) = [J_{s, t}(z)]^{-1}$, we have
\eq{
    K_{s, t}(z) =
        I_d + 
        &\int_s^t K_{r, t} (z) \nabla b(\widecheck{\Psi}_{r, t}(z), r) \dr + \\
        &\int_s^t K_{r, t} (z) \nabla \sigma( \widecheck{\Psi}_{r, t}(z), r ) \circ \dee \widecheck{W}_r,
    \label{eq:dynamics.K}
}
for all $s \le t$ and $z \in \R^d$.
\end{lemm}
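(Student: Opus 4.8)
## Proof Proposal

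The plan is to derive the SDE for $J_{s,t}(z) = \nabla \widecheck{\Psi}_{s,t}(z)$ by formally differentiating the backward integral equation~(\ref{eq:backward_stratonovich_sde}) with respect to the spatial variable $z$, and then to obtain the SDE for $K_{s,t}(z) = [J_{s,t}(z)]^{-1}$ by applying the Stratonovich product rule to the identity $J_{s,t}(z) K_{s,t}(z) = I_d$. The key reason the Stratonovich formulation is convenient here is that the chain rule and product rule take the same form as in ordinary calculus, so no It\^o correction terms appear.

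For part one, I would start from~(\ref{eq:backward_stratonovich_sde}) and differentiate both sides in $z$. Since $b, \sigma \in C^{\infty,1}_b$ and $\widecheck{\Psi}_{s,t}$ is a smooth diffeomorphism by Theorem~\ref{thm:stochastic_flow}(b), the map $z \mapsto \widecheck{\Psi}_{s,t}(z)$ is differentiable and one can interchange differentiation with the (backward Stratonovich) integral — this interchange is the step requiring care, and it is justified by the standard theory of stochastic flows in Kunita~\cite{kunita2019stochastic} (the derivative flow solves the variational equation). Applying the chain rule to $\nabla_z\, b(\widecheck{\Psi}_{r,t}(z), r) = \nabla b(\widecheck{\Psi}_{r,t}(z), r)\, J_{r,t}(z)$ and likewise for $\sigma$ yields~(\ref{eq:dynamics.J}) directly; the initial condition $J_{t,t}(z) = \nabla z = I_d$ matches the constant term.

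For part two, set $K_{s,t}(z) = J_{s,t}(z)^{-1}$, which is well-defined since each $\widecheck{\Psi}_{s,t}$ is a diffeomorphism and hence $J_{s,t}(z)$ is invertible. Differentiating $J_{s,t}(z) K_{s,t}(z) = I_d$ in $s$ (backward time) using the Stratonovich product rule gives $(\dee J_{s,t})\, K_{s,t} + J_{s,t}\, (\dee K_{s,t}) = 0$, so $\dee K_{s,t} = -K_{s,t}\,(\dee J_{s,t})\, K_{s,t}$. Substituting the differential form of~(\ref{eq:dynamics.J}), namely $\dee J_{s,t}(z) = -\nabla b(\widecheck{\Psi}_{s,t}(z), s) J_{s,t}(z)\, \dee s - \nabla \sigma(\widecheck{\Psi}_{s,t}(z), s) J_{s,t}(z) \circ \dee \widecheck{W}_s$ (reading off coefficients carefully, keeping in mind the backward orientation), and cancelling $K_{s,t} J_{s,t} = I_d$, one obtains $\dee K_{s,t}(z) = K_{s,t}(z)\,\nabla b(\widecheck{\Psi}_{s,t}(z), s)\, \dee s + K_{s,t}(z)\,\nabla \sigma(\widecheck{\Psi}_{s,t}(z), s) \circ \dee \widecheck{W}_s$, which integrates to~(\ref{eq:dynamics.K}) with initial condition $K_{t,t}(z) = I_d$.

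The main obstacle is rigorously justifying the differentiation-under-the-integral step for the backward Stratonovich integral and the validity of the Stratonovich product rule in the backward-time setting — i.e.\ confirming that the backward flow's derivative process genuinely solves the variational equation and that $K$ is a semimartingale adapted to the backward filtration so the product rule applies. I would handle this by invoking the stochastic flow results of Kunita~\cite{kunita2019stochastic} (which establish smoothness of $\Phi_{s,t}$ and $\widecheck{\Psi}_{s,t}$ in $z$ together with the corresponding variational SDEs), rather than reproving them; the rest is the routine symbolic manipulation sketched above.
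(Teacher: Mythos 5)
Your proposal is correct and follows essentially the same route as the paper: the paper obtains (\ref{eq:dynamics.J}) by differentiating (\ref{eq:backward_stratonovich_sde}) in $z$ under the stochastic integral, citing Kunita for the justification of the interchange and the required regularity, and then derives (\ref{eq:dynamics.K}) by applying the Stratonovich version of It\^o's formula to the matrix inverse, which is precisely your $\dee K = -K(\dee J)K$ computation.
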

The proof included in Appendix~\ref{app:jacobian_dynamics} relies on It\^o's lemma in the Stratonovich form~\cite[Theorem 2.4.1]{kunita2019stochastic}.
We stress that this lemma considers only the case where the endpoint $z$ is fixed and deterministic.

Now, we extend to the case where the endpoint is not deterministic, but rather computed from the forward flow.
To achieve this, we compose the state process and the loss function. 
Consider $A_{s, t}(z) = \partial \mathcal{L} (\Phi_{s, t}(z)) / \partial z$. 
The chain rule gives $A_{s, t}(z) = \nabla \mathcal{L}(\Phi_{s, t}(z)) \nabla \Phi_{s, t}(z)$. 
Let
\eq{
\label{eqn:A.Atilde}
\widecheck{A}_{s, t}(z) :=& A_{s, t}(\widecheck{\Psi}_{s, t}(z)) =\\& \nabla \mathcal{L}(z) \nabla \Phi_{s, t} (\widecheck{\Psi}_{s, t}(z)) = \nabla \mathcal{L}(z) K_{s, t}(z).
}
Note that $A_{s, t}(z) = \widecheck{A}_{s, t} (\Phi_{s, t}(z))$. 
Since $\nabla \mathcal{L}(z)$ is a constant, $(\widecheck{A}_{s, t}(z), \widecheck{\Psi}_{s, t}(z))$ satisfies the augmented backward SDE system
\eq{
\label{eqn:A.tilde.dynamics}
\begin{split}
\widecheck{A}_{s, t}(z) =&
    \nabla \mathcal{L}(z) + 
    \int_s^t \widecheck{A}_{r, t}(z) \nabla b(\widecheck{\Psi}_{r, t}(z), r) \dr + \\
    &\phantom{11111} \int_s^t \widecheck{A}_{r, t}(z) \nabla \sigma(\widecheck{\Psi}_{r, t}(z), r) \circ \dee \widecheck{W}_r, \\
\widecheck{\Psi}_{s, t}(z) =&
    z -
    \int_s^t b(\widecheck{\Psi}_{r, t}(z), r) \dr - \\
    &\phantom{11111} \int_s^t \sigma(\widecheck{\Psi}_{r, t}(z), r) \circ \dee \widecheck{W}_r.
\end{split}
}
Since the drift and diffusion functions of this augmented system are $C_b^{\infty, 1}$, the system has a unique strong solution.
Let $s=0$ and $t= T$.
Since (\ref{eqn:A.tilde.dynamics}) admits a strong solution, we may write
\eq{ \label{eqn:strong.solution}
\widecheck{A}_{0, T}(z) = \mathsf{F}(z, W_{\cdot}),
}
where $W_{\cdot} = \{W_t\}_{0 \leq t \leq T}$ denotes the path of the Wiener process and
\[
\mathsf{F} : \mathbb{R}^d \times C([0, 1], \mathbb{R}^m) \rightarrow \mathbb{R}^d
\]
is a deterministic measurable function (the {It\^{o} map})~\cite[Chapter V, Definition 10.9]{rogers2000diffusions2}. Intuitively, $\mathsf{F}$ can be thought as a black box that computes the solution to the backward SDE system~(\ref{eqn:A.tilde.dynamics}) given the position $z$ at time $T$ and the realized Wiener process sample. Similarly, we let $\mathsf{G}$ be the solution map for the forward flow~(\ref{eq:stratonovich_sde}). The next theorem follows immediately from (\ref{eqn:A.Atilde}) and the definition of $\mathsf{F}$.
\begin{theo} \label{thm:adjoint}
For $P$-almost all $\omega \in \Omega$, we have
\begin{equation} \label{eqn:adjoint.algorithm}
A_{0, T}(z) = \widecheck{A}_{0, T}( \mathsf{G}(z, W_{\cdot}) ) =
    \mathsf{F}(
        \mathsf{G}(z, W_{\cdot}) , W_{\cdot}
    ),
\end{equation}
where $\mathsf{G}(z, W_{\cdot}) = \Phi_{0, T}(z)$.
\end{theo}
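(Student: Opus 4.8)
The plan is to read the identity off the definitions assembled just before the statement, and then to take care of the one genuinely non-routine point: substituting a \emph{random} terminal value into the deterministic It\^o map $\mathsf{F}$.

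First I would recall that, by construction, $\widecheck{A}_{s,t}(z) = A_{s,t}(\widecheck{\Psi}_{s,t}(z))$ (this is exactly~(\ref{eqn:A.Atilde})), and that $\widecheck{\Psi}_{s,t} = \Phi_{s,t}^{-1}$ with both maps smooth diffeomorphisms of $\R^d$ by Theorem~\ref{thm:stochastic_flow}. Composing with $\Phi_{s,t}$ therefore gives $A_{s,t}(z) = \widecheck{A}_{s,t}(\Phi_{s,t}(z))$ for all $z$, $P$-a.s.; specializing to $s=0$, $t=T$ and writing $\Phi_{0,T}(z) = \mathsf{G}(z, W_\cdot)$ yields the first equality in~(\ref{eqn:adjoint.algorithm}). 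For the second equality it remains to identify $\widecheck{A}_{0,T}(\mathsf{G}(z,W_\cdot))$ with $\mathsf{F}(\mathsf{G}(z,W_\cdot), W_\cdot)$.

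This is where care is needed. By~(\ref{eqn:strong.solution}) we have $\widecheck{A}_{0,T}(y) = \mathsf{F}(y, W_\cdot)$ for each \emph{fixed, deterministic} $y\in\R^d$, but only up to a $P$-null set that may a priori depend on $y$, whereas $\mathsf{G}(z,W_\cdot) = \Phi_{0,T}(z)$ is a random point. To upgrade, I would note that the augmented backward system~(\ref{eqn:A.tilde.dynamics}) again has coefficients in $C_b^{\infty,1}$, so Theorem~\ref{thm:stochastic_flow} applies to it and provides a modification of $y \mapsto (\widecheck{A}_{0,T}(y), \widecheck{\Psi}_{0,T}(y))$ that is a.s. continuous (indeed smooth) in $y$; on the deterministic side, $\mathsf{F}(\cdot, w)$ is continuous in its first argument for $P$-a.e. realization $w$ of $W_\cdot$, by continuous dependence of strong solutions on the initial datum. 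Hence the identity $\widecheck{A}_{0,T}(y) = \mathsf{F}(y, W_\cdot)$, which holds simultaneously for all $y$ in a fixed countable dense subset of $\R^d$ outside a single null set, extends by continuity to all $y\in\R^d$ at once, $P$-a.s. Evaluating this "for all $y$" statement at the random point $y = \mathsf{G}(z, W_\cdot)$ then gives $\widecheck{A}_{0,T}(\mathsf{G}(z,W_\cdot)) = \mathsf{F}(\mathsf{G}(z,W_\cdot), W_\cdot)$, and combining with the previous paragraph completes the proof.

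The main obstacle is exactly this measurability/modification step: everything else is bookkeeping with the chain rule $A_{s,t}(z) = \nabla\mathcal{L}(\Phi_{s,t}(z))\nabla\Phi_{s,t}(z)$ and the defining relations of $\mathsf{F}$ and $\mathsf{G}$, while the only real content is legitimizing the substitution of the $\sigma(W_\cdot)$-measurable endpoint $\mathsf{G}(z,W_\cdot)$ into the deterministic It\^o map, i.e. choosing versions so the a.s. identity holds for all terminal values simultaneously. An alternative to the continuity route is to approximate $\mathsf{G}(z,W_\cdot)$ by simple $\sigma(W_\cdot)$-measurable random variables, apply the fixed-$y$ identity on each atom, and pass to the limit; I expect the flow/continuity argument above to be the cleanest.
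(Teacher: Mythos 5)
Your proof is correct and follows essentially the same route as the paper's, which simply composes the relation $A_{0,T}(z) = \widecheck{A}_{0,T}(\Phi_{0,T}(z))$ with the It\^o-map representation (\ref{eqn:strong.solution}). The extra care you take over the $y$-dependent null set --- passing to a continuous (smooth) modification of $y \mapsto \widecheck{A}_{0,T}(y)$ via the flow theory so that the identity $\widecheck{A}_{0,T}(y) = \mathsf{F}(y, W_{\cdot})$ holds for all $y$ simultaneously before substituting the random endpoint $\mathsf{G}(z, W_{\cdot})$ --- addresses a genuine subtlety that the paper's one-line proof leaves implicit, and your treatment of it is sound.
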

\begin{proof}
This is a consequence of composing $A_{0, T}(z) = \widecheck{A}_{0, T} (\Phi_{0, T}(z))$ and (\ref{eqn:strong.solution}).
\end{proof}
This shows that one can obtain the gradient by ``composing'' the backward SDE system (\ref{eqn:A.tilde.dynamics}) with the original forward SDE (\ref{eq:stratonovich_sde}) and ends our continuous-time analysis.

\subsection{Numerical Approximation} \label{subsec:numerical_approximation}
In practice, we compute solutions to SDEs with numerical solvers $\mathsf{F}_h$ and $\mathsf{G}_h$, where $h = T / L$ denotes the mesh size of a fixed grid. The approximate algorithm thus outputs $\mathsf{F}_h(\mathsf{G}_h(z, W_{\cdot}), W_{\cdot})$. The following theorem provides sufficient conditions for convergence. 

\begin{theo} \label{thm:approx.scheme}
Suppose the schemes $\mathsf{F}_h$ and $\mathsf{G}_h$ satisfy the following conditions: (i) $\mathsf{F}_h(z, W_{\cdot}) \rightarrow \mathsf{F}(z, W_{\cdot})$ and $\mathsf{G}_h(z, W_{\cdot}) \rightarrow \mathsf{G}(z, W_{\cdot})$ in probability as $h \rightarrow 0$, and (ii) for any $M > 0$, we have $\sup_{| z | \le M} | \mathsf{F}_h(z, W_{\cdot}) - \mathsf{F}(z, W_{\cdot}) | \rightarrow 0$ in probability as $h \rightarrow 0$. Then, for any starting point $z$ of the forward flow, we have
\[
\mathsf{F}_h(\mathsf{G}_h(z, W_{\cdot}), W_{\cdot}) \rightarrow \mathsf{F}(\mathsf{G}(z, W_{\cdot}), W_{\cdot}) = A_{0, T}(z)
\]
in probability as $h \rightarrow 0$.
\end{theo}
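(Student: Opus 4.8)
The plan is to view this as a stochastic version of Slutsky's lemma: the outer map $\mathsf{F}_h$ converges to $\mathsf{F}$ uniformly on compacts, the inner argument $\mathsf{G}_h(z, W_{\cdot})$ converges in probability to $\mathsf{G}(z, W_{\cdot})$, and $\mathsf{F}(\cdot, W_{\cdot})$ is continuous in its first argument, so the composition should converge. Concretely, I would first insert the intermediate quantity $\mathsf{F}(\mathsf{G}_h(z, W_{\cdot}), W_{\cdot})$ and use the triangle inequality,
\[
\left| \mathsf{F}_h(\mathsf{G}_h(z, W_{\cdot}), W_{\cdot}) - \mathsf{F}(\mathsf{G}(z, W_{\cdot}), W_{\cdot}) \right| \le \mathrm{I}_h + \mathrm{II}_h,
\]
with $\mathrm{I}_h = | \mathsf{F}_h(\mathsf{G}_h(z, W_{\cdot}), W_{\cdot}) - \mathsf{F}(\mathsf{G}_h(z, W_{\cdot}), W_{\cdot}) |$ and $\mathrm{II}_h = | \mathsf{F}(\mathsf{G}_h(z, W_{\cdot}), W_{\cdot}) - \mathsf{F}(\mathsf{G}(z, W_{\cdot}), W_{\cdot}) |$, thus reducing the theorem to showing $\mathrm{I}_h \to 0$ and $\mathrm{II}_h \to 0$ in probability.

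For $\mathrm{I}_h$ I would use a truncation argument. Because $\mathsf{G}_h(z, W_{\cdot}) \to \mathsf{G}(z, W_{\cdot})$ in probability by hypothesis (i) and $\mathsf{G}(z, W_{\cdot})$ is a.s.\ finite, the family $\{ \mathsf{G}_h(z, W_{\cdot}) \}_h$ is tight, so given $\eta > 0$ one can choose $M$ with $P( | \mathsf{G}_h(z, W_{\cdot}) | > M ) < \eta$ for all sufficiently small $h$. On the event $\{ | \mathsf{G}_h(z, W_{\cdot}) | \le M \}$ we have $\mathrm{I}_h \le \sup_{| y | \le M} | \mathsf{F}_h(y, W_{\cdot}) - \mathsf{F}(y, W_{\cdot}) |$, which converges to $0$ in probability by hypothesis (ii). Combining the two estimates gives $\mathrm{I}_h \to 0$ in probability.

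For $\mathrm{II}_h$, the first point is that for $P$-almost every $\omega$ the map $y \mapsto \mathsf{F}(y, W_{\cdot}(\omega))$ is continuous: this follows either from $\mathsf{F}(y, W_{\cdot}) = \widecheck{A}_{0, T}(y) = \nabla \mathcal{L}(y) K_{0, T}(y)$ together with spatial smoothness of $K_{0, T}(\cdot)$ (Lemma~\ref{lemm:jacobian_dynamics}) and of the flow (Theorem~\ref{thm:stochastic_flow}), or directly from the fact that the augmented system~(\ref{eqn:A.tilde.dynamics}) has $C_b^{\infty, 1}$ coefficients and hence generates a stochastic flow of diffeomorphisms, whose solution map is continuous in the terminal data. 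Given this, I would invoke the subsequence characterization of convergence in probability: along any subsequence $h_k \to 0$, extract a further subsequence along which $\mathsf{G}_{h_k}(z, W_{\cdot}) \to \mathsf{G}(z, W_{\cdot})$ almost surely; continuity of $\mathsf{F}(\cdot, W_{\cdot}(\omega))$ then yields $\mathsf{F}(\mathsf{G}_{h_k}(z, W_{\cdot}), W_{\cdot}) \to \mathsf{F}(\mathsf{G}(z, W_{\cdot}), W_{\cdot})$ almost surely along that subsequence. Since every subsequence admits a further subsequence converging a.s.\ to the same limit, $\mathrm{II}_h \to 0$ in probability, and a union bound over $\{ \mathrm{I}_h > \epsilon / 2 \}$ and $\{ \mathrm{II}_h > \epsilon / 2 \}$ closes the argument.

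I expect the main obstacle to be the careful handling of the joint randomness in $\mathsf{F}(\cdot, W_{\cdot})$. Unlike in the classical continuous mapping / Slutsky setting, the outer function is itself $\omega$-dependent, so one cannot quote an off-the-shelf lemma; the uniform-on-compacts control of hypothesis (ii) is exactly what licenses evaluating $\mathsf{F}_h$ at the random, merely-tight arguments $\mathsf{G}_h(z, W_{\cdot})$, and the subsequence trick is what lets the pathwise continuity of each realization $\mathsf{F}(\cdot, W_{\cdot}(\omega))$ be combined with convergence in probability of the inner approximation. The remaining work is routine $\epsilon$–$\eta$ bookkeeping.
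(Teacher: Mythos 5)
Your proof is correct, and it uses the same triangle-inequality decomposition through the intermediate quantity $\mathsf{F}(\mathsf{G}_h(z,W_\cdot),W_\cdot)$ as the paper; your term $\mathrm{I}_h$ is handled exactly as in the paper (tightness of $\{\mathsf{G}_h\}$ plus hypothesis (ii)). The only genuine difference is in the other term, $|\mathsf{F}(\mathsf{G}_h(z,W_\cdot),W_\cdot)-\mathsf{F}(\mathsf{G}(z,W_\cdot),W_\cdot)|$: the paper invokes a quantitative local Lipschitz estimate for the It\^o map (Lemma 2.1(iv) of Ocone--Pardoux), producing an a.s.-finite random constant $C_1$ with $\sup_{|y|\le 2M_1}|\nabla_y\mathsf{F}|\le C_1$, truncating $C_1$ at a level $M_2$, and then using $|\mathsf{G}-\mathsf{G}_h|\le \epsilon/M_2$ on a high-probability event; you instead use only a.s.\ continuity of $y\mapsto\mathsf{F}(y,W_\cdot(\omega))$ (which follows from the smoothness of the flow already established via Kunita's theory in Theorem~\ref{thm:stochastic_flow} and Lemma~\ref{lemm:jacobian_dynamics}) combined with the subsequence characterization of convergence in probability. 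Your route is slightly more elementary in that it needs a weaker regularity input (continuity rather than a locally bounded spatial gradient of the It\^o map) and avoids the external citation, at the cost of being purely qualitative; the paper's argument yields an explicit modulus on the high-probability event, which is closer in spirit to the quantitative error analysis pursued in Appendix~\ref{app:euler_cond2}. Both arguments are valid, and your observation that the $\omega$-dependence of the outer map is what prevents quoting an off-the-shelf continuous mapping theorem is exactly the right point of care.
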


See Appendix~\ref{app:approx.scheme} for the proof. 
Usual schemes such as the Euler-Maruyama scheme (more generally It\^o-Taylor schemes) converge pathwise (i.e. almost surely) from any fixed starting point~\cite{kloeden2007pathwise} and satisfies $(i)$. 
While $(ii)$ is strong, we note that the SDEs considered here have smooth coefficients, and thus their solutions enjoy nice regularity properties in the starting position. 
Therefore, it is reasonable to expect that the corresponding numerical schemes to also behave nicely as a function of \textit{both} the mesh size and the starting position. 
To the best of our knowledge, this property is not considered at all in the literature on numerical methods for SDEs (where the initial position is fixed), but is crucial in the proof of Theorem~\ref{thm:approx.scheme}. 
In Appendix \ref{app:euler_cond2}, we prove that condition $(ii)$ holds for the Euler-Maruyama scheme.
Detailed analysis for other schemes is beyond the scope of this paper.

\subsection{The Algorithm}\label{subsec:the_algorithm}
So far we have derived the gradient of the loss with respect to the initial state. 
We can extend these results to give gradients with respect to parameters of the drift and diffusion functions by treating them as an additional part of the state whose dynamics has zero drift and diffusion.
We summarize this in Algorithm~\ref{algo:overall}, assuming access only to a black-box solver \texttt{sdeint}.
All terms in the augmented dynamics, such as $a_t \partial f / \partial \theta$ and $a_t \partial \sigma / \partial \theta$ can be cheaply evaluated by calling $\texttt{vjp}(a_t, f, \theta)$ and $\texttt{vjp}(a_t, \sigma, \theta)$, respectively.	

\paragraph{Difficulties with non-diagonal diffusion.}
In principle, we can simulate the forward and backward adjoint dynamics with any high-order solver of choice. However, for general matrix-valued diffusion functions $\sigma$, to obtain a numerical solution with strong order\footnote{A numerical scheme is of strong order $p$ if $\Exp{ | X_T - X_{N\eta} | } \le C \eta^{p}$ for all $T > 0$, where $X_t$ and $X_{N\eta}$ are respectively the coupled true solution and numerical solution, $N$ and $\eta$ are respectively the iteration index and step size such that $N\eta = T$, and $C$ is independent of $\eta$.} beyond $1/2$, we need to simulate multiple integrals of the Wiener process such as $\int_0^t \int_0^s \dW_u^{(i)} \dW_s^{(j)}$, $i, j \in [m], i \ne j$.
These random variables are difficult to simulate and costly to approximate~\cite{wiktorsson2001joint}.

Fortunately, if we restrict our SDE to have diagonal noise, then even though the backward SDE for the stochastic adjoint will not in general have diagonal noise, it will satisfy a commutativity property~\cite{rossler2004runge}.
In that case, we can safely adopt certain numerical schemes of strong order 1.0 (e.g. Milstein~\cite{milstein1994numerical} and stochastic Runge-Kutta~\cite{rossler2010runge}) without approximating multiple integrals or the L\'evy area during simulation. 
We formally show this in Appendix~\ref{app:commutativity}.

One may also consider numerical schemes with high weak order~\cite{kloeden2013numerical}. 
However, analysis of this scenario is beyond the current scope.

\subsection{Software and Implementation}
We have implemented several common SDE solvers in PyTorch~\cite{paszke2017automatic} with adaptive time-stepping using a PI controller~\cite{burrage2004adaptive,ilie2015adaptive}. 
Following \texttt{torchdiffeq}~\cite{chen2018neural}, we have created a user-friendly subclass of \texttt{torch.autograd.Function} that facilitates gradient computation using our stochastic adjoint framework for SDEs that are subclasses of \texttt{torch.nn.Module}. 
We include a short code snippet covering the main idea of the stochastic adjoint in Appendix~\ref{app:adjoint_code}.
The complete codebase can be found at \url{https://github.com/google-research/torchsde}.

\section{Virtual Brownian Tree}\label{sec:tree}
Our formulation of the adjoint can be numerically integrated efficiently, since simulating its dynamics only requires evaluating cheap vector-Jacobian products, as opposed to whole Jacobians.
However, the backward-in-time nature introduces a new difficulty:
The same Wiener process sample path used in the forward pass must be queried again during the backward pass.
Na\"ively storing Brownian motion increments 
implies a large memory consumption and complicates the usage of adaptive time-stepping integrators, where the evaluation times in the backward pass may be different from those in the forward pass. 

To overcome this issue, we combine Brownian trees with splittable pseudorandom number generators (PRNGs) to give an algorithm that can query values of a Wiener process sample path at arbitrary times.
This algorithm, which we call the \emph{virtual Brownian tree}, has $\mathcal{O}(1)$ memory cost, and time cost logarithmic with respect to the inverse error tolerance.

\begin{figure}[ht]
{\includegraphics[width=\linewidth, clip, trim=0mm 0mm 0mm 0mm]{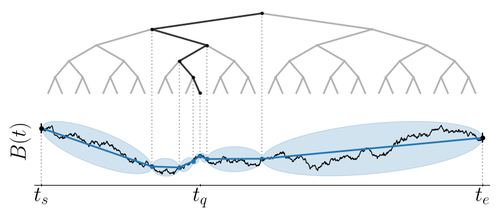}}
\caption{
Evaluating a Brownian motion sample at time $t_q$ using a virtual Brownian tree.
Our algorithm repeatedly bisects the interval, sampling from a Brownian bridge at each halving to determine intermediate values.
Each call to the random number generator uses a unique key whose value depends on the path taken to reach it.
}
\end{figure}

\subsection{Brownian Bridges and Brownian Trees}
L\'evy's \textit{Brownian bridge}~\citep{revuz2013continuous} states that given a start time $t_s$ and end time $t_e$ along with their respective Wiener process values $w_s$ and $w_e$, the marginal of the process at time $t \in (t_s, t_e)$ is a normal distribution:
\eq{
    \mathcal{N}
    \bracks{
        \frac{(t_e - t) w_s + (t - t_s) w_e}{t_e - t_s},
        \frac{(t_e - t) (t - t_s) }{t_e - t_s} I_d
    } \label{eq:brownian_bridge}
    .
}
We can recursively apply this formula to evaluate the process at the midpoint of any two distinct timestamps where the values are already known. Constructing the whole sample path of a Wiener process in this manner results in what is known as the \textit{Brownian tree}~\citep{gaines1997variable}.
Storing this tree would be memory-intensive, but we show how to reconstruct any node in this tree as desired.

\subsection{Brownian Trees using Splittable Seeds}
We assume access to a splittable PRNG~\citep{claessen2013splittable}, which has an operation \texttt{split} that deterministically generates two keys from an existing key.
Given a key, the function \texttt{BrownianBridge} samples deterministically from~(\ref{eq:brownian_bridge}). 
To obtain the Wiener process value at a specific time, we must first know or sample the values at the initial and terminal times.
Then, the virtual Brownian tree recursively samples from the midpoint of Brownian bridges, each sample using a key split from that of its parent node.
The algorithm terminates when the most recently sampled time is close enough to the desired time. 
We outline the full procedure in Algorithm~\ref{algo:brownian_tree}.
\begin{algorithm}[h]
\centering
\caption{Virtual Brownian Tree}
\label{algo:brownian_tree}
\begin{algorithmic}
    \Require {
        Seed $s$, query time $t$, error tolerance $\epsilon$,
        start time $t_s$, start state $w_s$, 
        end time $t_e$, end state $w_e$.
    }

    \vspace{.75mm}
    \State $t_m = (t_s + t_e) / 2$
    \State $s_m, s_l, s_r = \texttt{split}(s, \texttt{children=3})$
    \State $w_m = \texttt{BrownianBridge}(t_s, w_s, t_e, w_e, t_m, s_m)$ 
    \While{$\abs{t - t_m } > \epsilon$}
    \If{$t < t_m$} $t_e, x_e, s = t_m, w_m, s_l$
    \Else{} $t_s, x_s, s = t_m, w_m, s_r$
    \EndIf
    \State $t_m = (t_s + t_e) / 2$
    \State $s_m, s_l, s_r = \texttt{split}(s, \texttt{children=3}) $
    \State $w_m = \texttt{BrownianBridge}(t_s, w_s, t_e, w_e, t_m, s_m)$ 
    \EndWhile
    \Return{$w_m$}
\end{algorithmic}
\end{algorithm}
This algorithm has constant memory cost.
For a fixed-step-size solver taking $L$ steps, the tolerance that the tree will need to be queried at scales as $1 / L$.
Thus the per-step time complexity scales as $\log L$.
Our implementation uses an efficient \textit{count-based PRNG}~\cite{salmon2011parallel} which avoids passing large random states, and instead simply passes integers.
Table~\ref{tab:complexity} compares the asymptotic time complexity of this approach against existing alternatives.

\section{Latent Stochastic Differential Equations}
The algorithms presented in Sections~\ref{sec:adjoint} and \ref{sec:tree} allow us to efficiently compute gradients of scalar objectives with respect to SDE parameters, letting us fit SDEs to data.
This raises the question: Which loss to optimize?

Simply fitting SDE parameters to maximize likelihood will in general cause overfitting, and will result in the diffusion function going to zero.
In this section, we show how to do efficient variational inference in SDE models, and optimize the marginal log-likelihood to fit both prior (hyper-)parameters and the parameters of a tractable approximate posterior over functions.

\begin{figure*}[ht]
\begin{minipage}[t]{0.46\textwidth}
\centering
\scalebox{0.75}{
\begin{tikzpicture}[->,>=stealth',shorten >=1pt,auto,node distance=1.5cm,main node/.style={thick,circle,draw,font=\sffamily\Large}]
\tikzstyle{main}=[circle, minimum size = 10mm, thick, draw =black!80, node distance = 12mm]
\tikzstyle{latent}=[diamond, minimum size = 10mm, thick, draw =black!80, node distance = 12mm]
\tikzstyle{observation}=[circle, minimum size = 10mm, thick, draw =black!80, node distance = 12mm]
\tikzstyle{rnn}=[circle, minimum size = 3mm, thick, draw =black!80, node distance = 16mm]
\tikzstyle{connect}=[-latex]
\tikzstyle{box}=[rectangle, draw=black!100]
  \node[main, fill = white!100] (z_0) [] { $z_0$ };
  \node[latent] (z_{t_1}) [right=of z_0] { $z_{t_1}$ };
  \node[latent] (z_{t_2}) [right=of z_{t_1}] {$z_{t_2}$};
  \node[latent] (z_{t_n}) [right=of z_{t_2}] { $z_{t_n}$ };
  \node (zdots) at ($(z_{t_2})!.5!(z_{t_n})$) {\ldots};
  \node[circle, minimum size = 10mm, ultra thick, draw =black!80, node distance = 10mm, color=myred!90, fill=myred!10] (b) [above=of z_{t_2}] { $w(\cdot)$ };

  \node[rectangle, inner sep=4mm, fit= (z_{t_1}) (z_{t_2}) (z_{t_n}),label=above right:{\large \textbf{\texttt{\color{myblue} SDESolve}}}, xshift=6mm, yshift=-4mm] {};
  \node[rectangle, inner sep=6mm, draw=black!100, fit = (z_{t_1}) (z_{t_2}) (z_{t_n}), rounded corners=0.5cm, myblue!95, ultra thick] {};
  \node[diamond, minimum size = 10mm, thick, node distance = 8mm] (h_1) [below=of z_{t_1}] {};
  \node[diamond, minimum size = 10mm, thick, node distance = 8mm] (h_2) [below=of z_{t_2}] {};
  \node[diamond, minimum size = 10mm, thick, node distance = 8mm] (h_n) [below=of z_{t_n}] {};

  \node[circle, minimum size = 10mm, thick, node distance = 10mm, draw=black!80] (h_0) [below=of z_0] {$\theta$};

  \node[observation, node distance=0mm] (x_{t_1}) [below=of h_1] {$x_{t_1}$};
  \node[observation, node distance=0mm] (x_{t_2}) [below=of h_2] {$x_{t_2}$};
  \node[observation, node distance=0mm] (x_{t_n}) [below=of h_n] {$x_{t_n}$};
  \node (xdots) at ($(x_{t_2})!.5!(x_{t_n})$) {\ldots};

  \path (z_0) edge [connect] (z_{t_1})
        (z_{t_1}) edge [connect] (z_{t_2})
    (z_{t_2}) edge [connect] (zdots)
    (zdots) edge [connect] (z_{t_n})
    (b) edge [connect] (z_{t_n})
    (b) edge [connect] (z_{t_2})
    (b) edge [connect] (z_{t_1})

    (z_{t_n}) edge [connect] (x_{t_n})
    (z_{t_2}) edge [connect] (x_{t_2})
    (z_{t_1}) edge [connect] (x_{t_1})

    (h_0) edge [connect] (z_{t_1})
    (h_0) edge [connect] (z_{t_2})
    (h_0) edge [connect] (z_{t_n})
    ;
\end{tikzpicture}
}

(a) Generation
\end{minipage}
\hfill
\begin{minipage}[t]{0.46\textwidth}
	\centering
	\scalebox{0.75}{
		\begin{tikzpicture}[->,>=stealth',shorten >=1pt,auto,node distance=1.5cm,main node/.style={thick,circle,draw,font=\sffamily\Large}]
		\tikzstyle{main}=[circle, minimum size = 10mm, thick, draw =black!80, node distance = 12mm]
		\tikzstyle{latent}=[diamond, minimum size = 10mm, thick, draw =black!80, node distance = 12mm]
		\tikzstyle{observation}=[circle, minimum size = 10mm, thick, draw =black!80, node distance = 12mm]
		\tikzstyle{rnn}=[circle, minimum size = 3mm, thick, draw =black!80, node distance = 16mm]
		\tikzstyle{connect}=[-latex]
		\tikzstyle{box}=[rectangle, draw=black!100]
		\node[main, fill = white!100] (z_0) [] { $z_0$ };
		\node[latent] (z_{t_1}) [right=of z_0] { $z_{t_1}$ };
		\node[latent] (z_{t_2}) [right=of z_{t_1}] {$z_{t_2}$};
		\node[latent] (z_{t_n}) [right=of z_{t_2}] { $z_{t_n}$ };
		\node (zdots) at ($(z_{t_2})!.5!(z_{t_n})$) {\ldots};

		\node[circle, minimum size = 10mm, ultra thick, draw =black!80, node distance = 10mm, color=myred!90, fill=myred!10] (b) [above=of z_{t_2}] { $w(\cdot)$ };
		\node[rectangle, inner sep=4mm, fit= (z_{t_1}) (z_{t_2}) (z_{t_n}),label=above right:{\large \textbf{\texttt{\color{myblue} SDESolve}}}, xshift=6mm, yshift=-4mm] {};
		\node[rectangle, inner sep=6mm, draw=black!100, fit = (z_{t_1}) (z_{t_2}) (z_{t_n}), rounded corners=0.5cm, myblue!95, ultra thick] {};

		\node[diamond, minimum size = 10mm, thick, draw =black!80, node distance = 8mm] (mu_sigma) [below=of z_0] {$\phi$};
		\node[diamond, minimum size = 10mm, thick,  node distance = 8mm] (h_1) [below=of z_{t_1}] {};
		\node[diamond, minimum size = 10mm, thick,  node distance = 8mm] (h_2) [below=of z_{t_2}] {};
		\node[diamond, minimum size = 10mm, thick,  node distance = 8mm] (h_n) [below=of z_{t_n}] {};

		\node[observation, fill = black!10, node distance=0mm] (x_{t_1}) [below=of h_1] {$x_{t_1}$};
		\node[observation, fill = black!10, node distance=0mm] (x_{t_2}) [below=of h_2] {$x_{t_2}$};
		\node[observation, fill = black!10, node distance=0mm] (x_{t_n}) [below=of h_n] {$x_{t_n}$};
		\node (xdots) at ($(x_{t_2})!.5!(x_{t_n})$) {\ldots};

		\path (z_0) edge [connect] (z_{t_1})
		(z_{t_1}) edge [connect] (z_{t_2})
		(z_{t_2}) edge [connect] (zdots)
		(zdots) edge [connect] (z_{t_n})
		(b) edge [connect] (z_{t_n})
		(b) edge [connect] (z_{t_2})
		(b) edge [connect] (z_{t_1})
		
		(x_{t_1}) edge [connect, bend right = 10] (mu_sigma)
		(x_{t_2}) edge [connect, bend right = 10] (mu_sigma)
		(x_{t_n}) edge [connect, bend right = 10] (mu_sigma)
		
		(x_{t_1}) edge [connect] (z_0)
		(x_{t_2}) edge [connect] (z_0)
		(x_{t_n}) edge [connect] (z_0)
		
		(mu_sigma) edge [connect] (z_{t_1})
		(mu_sigma) edge [connect] (z_{t_2})
		(mu_sigma) edge [connect] (z_{t_n})
		;
		\end{tikzpicture}
	}
	
	(b) Recognition
\end{minipage}

\caption{
  Graphical models for the generative process (decoder) and recognition network (encoder) of the latent stochastic differential equation model.
  This model can be viewed as a variational autoencoder with infinite-dimensional noise.
  Red circles represent entire function draws from Brownian motion. 
  Given the initial state $z_0$ and a Brownian motion sample path $w(\cdot)$, the intermediate states $z_{t_1}, \dots, z_{t_n}$
  are deterministically approximated by a numerical SDE solver.
} \label{fig:latent_sde}
\end{figure*}

In particular, we can parameterize both a prior over functions and an approximate posterior using SDEs:
\begin{align}
\dee \tilde{Z_t} &= h_\theta(\tilde{Z_t}, t) \dt + \sigma(\tilde{Z_t}, t) \dee W_t, \label{eq:prior} \tag{prior}\\
\dZ_t &= h_\phi(Z_t, t) \dt + \sigma(Z_t, t) \dW_t, \label{eq:posterior} \tag{approx. post.}
\end{align}
where $h_\theta, h_\phi$, and $\sigma$
are Lipschitz in both arguments, and both processes have the same starting value: $ \tilde{Z_0} = Z_0 = z_0 \in \R^d$.

If both processes share the same diffusion function $\sigma$, then the KL divergence between them is finite (under additional mild regularity conditions; see Appendix~\ref{app:latent_sde_adjoint}), and can be estimated by sampling paths from the approximate posterior process.
Then, the evidence lower bound (ELBO) can be written as:
\begin{align}
\label{eq:variational_free_energy}
\log p(x_1, x_2, \dots, x_N | \theta) \geq \qquad\qquad\qquad\qquad\qquad\quad \\
\mathbb{E}_{Z_t} \left[
    \sum_{i=1}^N \log p( x_{t_i} | z_{t_{i}} )
    -\int_0^T \frac{1}{2} |u(z_t, t)|^2 \dt \right], \nonumber
\end{align}
where $u: \R^d \times [0, T] \to \R^m$ satisfies
\begin{align}
\sigma(z, t) u(z, t) = h_\phi(z, t) - h_\theta(z, t),
\end{align}
and the expectation is taken over the approximate posterior process defined by~(\ref{eq:posterior}).
The likelihoods of observations $x_1, \dots, x_N$ at times $t_1, \dots, t_N$ depend only on latent states $z_t$ at corresponding times.

To compute the gradient with respect to prior parameters $\theta$ and variational parameters $\phi$, we need only augment the forward SDE with an extra scalar variable whose drift is $\frac{1}{2} |u(Z_t, t)|^2$ and diffusion is zero.
The backward dynamics can be derived analogously using~(\ref{eqn:A.tilde.dynamics}). 
We include a detailed derivation in Appendix~\ref{app:latent_sde_adjoint}.
Thus, a stochastic estimate of the gradients of the loss w.r.t.\ all parameters can be computed in a single pair of forward and backward SDE solves.

The variational parameters $\phi$ can either be optimized individually for each sequence, or if multiple time series are sharing parameters, then an encoder network can be trained to input the observations and output $\phi$.
This architecture, shown in Figure~\ref{fig:latent_sde}, can be viewed as an infinite-dimensional Variational AutoEncoder (VAE)~\cite{kingma2013auto,rezende2014stochastic}, whose latent is an SDE-induced stochastic process.
We may generalize the above to cases where the diffusion is parameterized, which is then analogous to learning the prior of the latent code in VAEs.

\begin{figure*}[t]\label{fig:numerical_study}
\begin{minipage}[t]{0.33\linewidth}
\centering
{\includegraphics[width=0.98\textwidth]{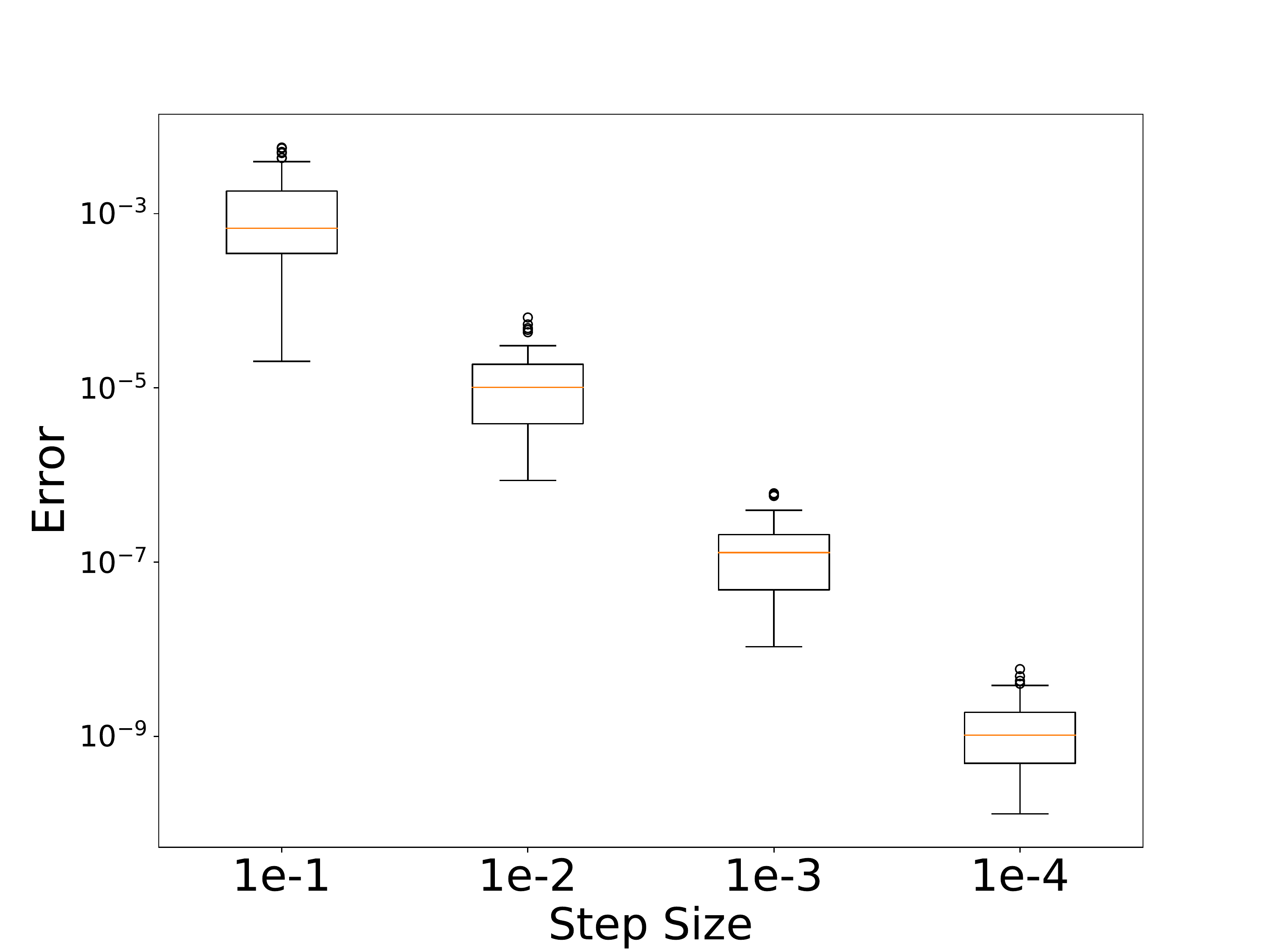}} \\ \vspace{-0.10cm}
(a) \footnotesize{Fixed Step Size vs Error}
\end{minipage}
\begin{minipage}[t]{0.33\linewidth}
\centering
\includegraphics[width=0.98\textwidth]{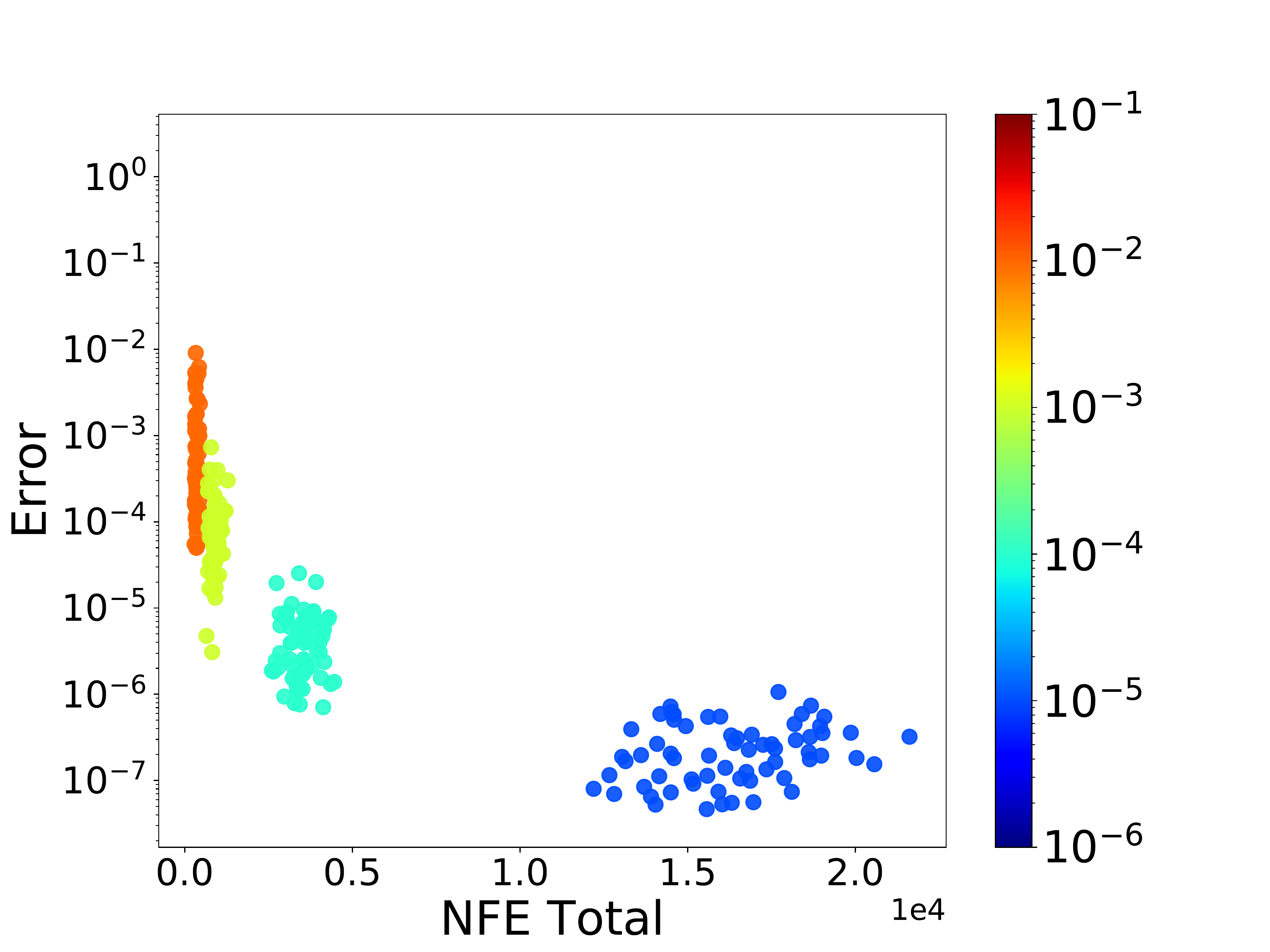} \\ \vspace{-0.10cm}
(b) \footnotesize{Forward NFE vs Error}
\end{minipage}
\begin{minipage}[t]{0.33\linewidth}
\centering
\includegraphics[width=0.98\textwidth]{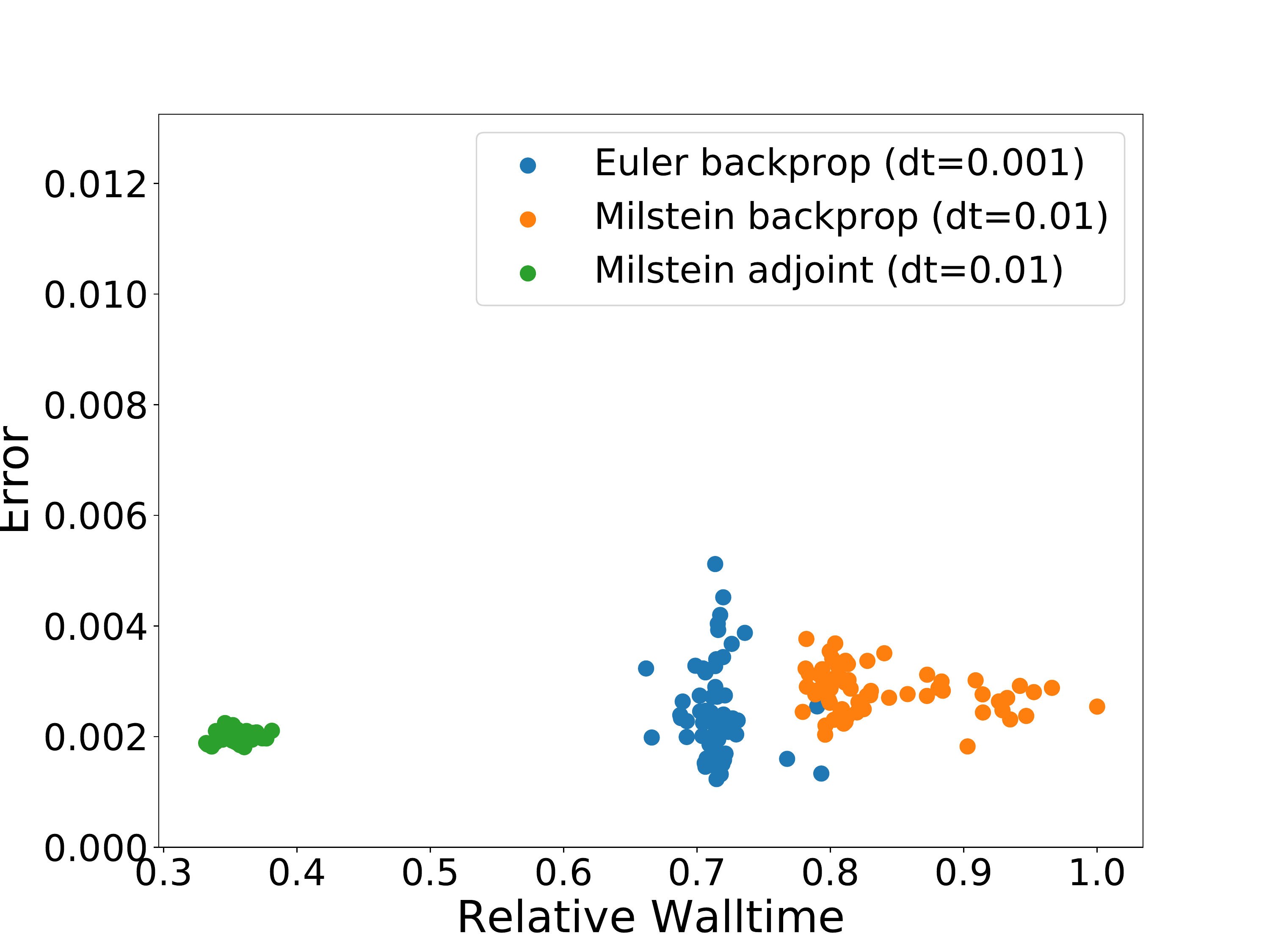} \\ \vspace{-0.10cm}
(c) \footnotesize{Efficiency Comparison}
\end{minipage}
\caption{
(a) Same fixed step size used in both forward and reverse simulation. Boxplot generated by repeating the experiment with different Brownian motion sample paths 64 times.
(b) Colors of dots represent tolerance levels and correspond to the colorbar on the right. 
Only \texttt{atol} was varied and \texttt{rtol} was set to $0$.
}
\end{figure*}

\section{Related Work}

\paragraph{Sensitivity Analysis for SDEs.}
Gradient computation is closely related to sensitivity analysis. 
Computing gradients with respect to parameters of vector fields of an SDE has been extensively studied in the stochastic control literature~\citep{kushner2013numerical}.
In particular, for low dimensional problems, this is done effectively using dynamic programming~\cite{baxterr2001infinite} and finite differences~\cite{glasserman1992some,l1994convergence}.
However, both approaches scale poorly with the dimensionality of the parameter vector. 

Analogous to REINFORCE (or the score-function estimator)~\citep{williams1992simple,kleijnen1996optimization,glynn1990likelihood}, \citet{yang1991monte} considered deriving the gradient as $\nabla \Exp{\mathcal{L} (Z_T) } = \Exp{ \mathcal{L}(Z_T) H }$ for some random variable $H$. However, $H$ usually depends on the density of $Z_T$ with respect to the Lebesgue measure which can be difficult to compute.
\citet{gobet2005sensitivity} extended this approach by weakening a non-degeneracy condition using Mallianvin calculus~\cite{nourdin2012normal}.

Closely related to the current approach is the pathwise method~\cite{yang1991monte}, which is also a continuous-time analog of the reparameterization trick~\citep{kingma2013auto,rezende2014stochastic}. Existing methods in this regime~\citep{tzen2019neural,gobet2005sensitivity,liu2019neural} all require simulating a (forward) SDE where each step requires computing entire Jacobian matrices. This computational cost is prohibitive for high-dimensional systems with a large number of parameters.

Based on the Euler discretization, \citet{giles2006smoking} considered %
simply performing reverse-mode automatic differentiation through all intermediate steps.
They named this method the \textit{adjoint approach}, which, by modern standards, is a form of ``backpropagation through the operations of a numerical solver''. This approach, widely adopted in the field of finance for calibrating market models~\cite{giles2006smoking}, has high memory cost, and relies on a fixed Euler-Maruyama discretization. 
Recently, this approach was also used by \citet{hegde2019deep} to learn parameterized drift and diffusion functions of an SDE.
In scientific computing, \citet{innes2019zygote} considered backpropagating through high-order implicit SDE solvers.

\citet{ryder2018black} perform variational inference over the state and parameters for Euler-discretized latent SDEs and optimize the model with backpropagation. 
This approach should not be confused with the formulation of variational inference for non-discretized SDEs presented in previous works~\cite{opper2019variational,ha2018adaptive,tzen2019neural} and our work, as it is unclear whether the limit of their discretization corresponds to that obtained by operating with continuous-time SDEs using Girsanov's theorem.

\paragraph{Backward SDEs.} Our stochastic adjoint process relies on the notion of backward SDEs devised by~\citet{kunita2019stochastic}, which is based on two-sided filtrations. 
This is different from the more traditional notion of backward SDEs where only a single filtration is defined~\cite{peng1990general,pardoux1992backward}. 
Based on the latter notion, forward-backward SDEs (FBSDEs) have been proposed to solve stochastic optimal control problems~\cite{peng1999fully}.
However, simulating FBSDEs is costly due to the need to estimate conditional expectations in the backward pass \cite{pardoux1992backward}. 

\paragraph{Bayesian Learning of SDEs.}
Recent works considered the problem of inferring an approximate posterior SDE given observed data under a prior SDE with the same diffusion coefficient~\citep{ha2018adaptive,tzen2019neural,opper2019variational}.
The special case with constant diffusion coefficients was considered more than a decade ago~\cite{archambeau2008variational}.
Notably, computing the KL divergence between two SDEs over a finite time horizon was well-explored in the control literature~\citep{kappen2016adaptive,theodorou2015nonlinear}. 
We include background on this topic in Appendix~\ref{app:latent_sde_bg}.

Bayesian learning and parameter estimation for SDEs have a long history \cite{gupta1974computational}. Techniques which don't require positing a variational family such as the extended Kalman filter and Markov chain Monte Carlo have been considered in the literature \cite{mbalawata2013parameter}.

\vspace{-2mm}
\section{Experiments}
\vspace{-2mm}
The aim of this section is threefold.
We first empirically verify our theory by comparing the gradients obtained by our stochastic adjoint framework against analytically derived gradients for problems having closed-form solutions. 
We then fit latent SDE models with our framework on two synthetic datasets, verifying that the variational inference framework allows learning a generative model of time series. 
Finally, we learn dynamics parameterized by neural networks with a latent SDE from a motion capture dataset, demonstrating competitive performance compared to existing approaches.

We report results based on an implementation of Brownian motion that stores all intermediate queries.
The virtual Brownian tree allowed training with much larger batch sizes on GPUs, but was not necessary for our small-scale experiments.
Notably, our adjoint approach, even when combined with the Brownian motion implementation that stores noise, was able to reduce the memory usage by $1/2$-$1/3$ compared to directly backpropagating through solver operations on the tasks we considered.

\subsection{Numerical Studies}
We consider three test problems (examples 1-3 from ~\cite{rackauckas2017adaptive}; details in Appendix~\ref{app:test_problems}), all of which have closed-form solutions. 
We compare the gradient computed from simulating our stochastic adjoint process using the Milstein scheme against the exact gradient.
Figure~\ref{fig:numerical_study}(a) shows that for test example 2, the error between the adjoint gradient and analytical gradient decreases with step size. 

For all three test problems, the mean squared error across dimensions tends to be smaller as the absolute tolerance of the adaptive solver is reduced (e.g. see Fig. \ref{fig:numerical_study} (b)). 
However, the Number of Function Evaluations (NFEs) tends to be much larger than that in the ODE case~\cite{chen2018neural}.

Additionally, for two out of three test problems, we found that our adjoint approach with the Milstein scheme and fixed step size can be much more time-efficient than regular backpropagation through operations of the Milstein and Euler schemes (see e.g. Fig. \ref{fig:numerical_study}(c)).
Backpropagating through the Euler scheme gives gradients of higher error compared to the Milstein method. 
On the other hand, directly backpropagating through the Milstein solve requires evaluating high-order derivatives and can be costly.

Results for examples 1 and 3 are in Appendix~\ref{app:test_problem_results}.
\begin{figure}[ht]
\begin{minipage}[ht]{\linewidth}
\centering
{\includegraphics[width=0.99\textwidth, clip, trim=14mm 6mm 10mm 6mm]{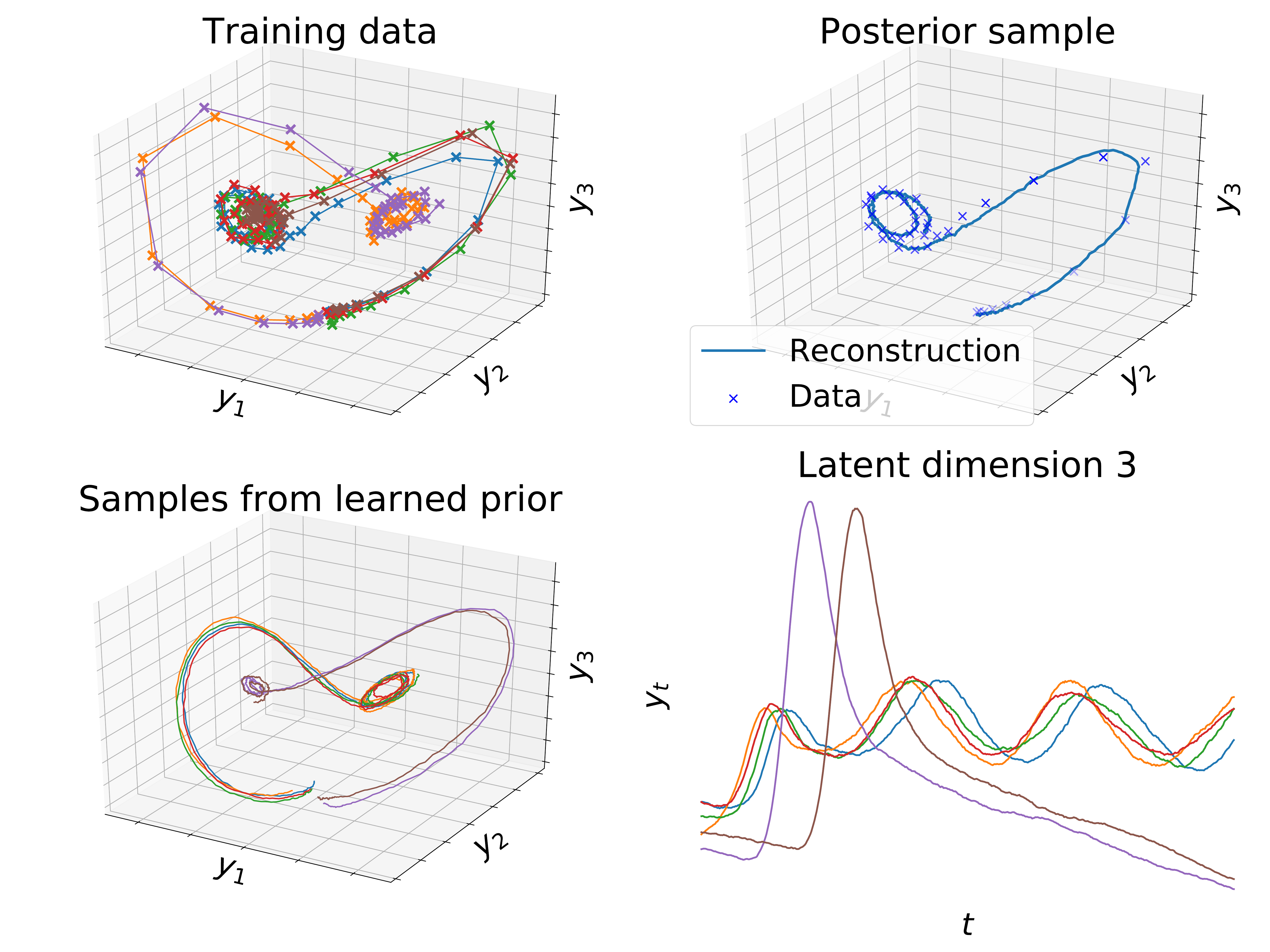}}
\end{minipage}
\caption{
Learned posterior and prior dynamics on data from a stochastic Lorenz attractor.
All samples from our model are continuous-time paths, and form a multi-modal, non-Gaussian distribution.
}
\label{fig:toy_experiments_lorenz}
\end{figure}

\subsection{Synthetic Datasets}
We trained latent SDEs with our adjoint framework to recover (1) a 1D Geometric Brownian motion, and (2) a 3D stochastic Lorenz attractor process. 
The main objective is to verify that the learned posterior can reconstruct the training data, and that the learned priors are not deterministic. 
We jointly optimize the evidence lower bound~(\ref{eq:variational_free_energy}) with respect to parameters of the prior and posterior distributions at the initial latent state $z_0$, the prior and posterior drift, the diffusion function, the encoder, and the decoder.
We include the details of datasets and architectures in Appendix~\ref{app:toy_datasets}.

For the stochastic Lorenz attractor, not only is the model able to reconstruct the data well, but also the learned prior process can produce bimodal samples in both data and latent space. 
This is showcased in the last row of Figure~\ref{fig:toy_experiments_lorenz} where the latent and data space samples cluster around two modes.
This is hard to achieve using a latent ODE with a unimodal Gaussian initial approximate posterior.
We include additional visualizations in Appendix~\ref{app:visualization}.

\newcommand{\rpm}{\raisebox{.2ex}{$\scriptstyle\pm$}}

\subsection{Motion Capture Dataset}
To demonstrate that latent SDEs can learn complex dynamics from real-world datasets, we evaluated their predictive performance on a 50-dimensional motion capture dataset.
The dataset, from~\citet{gan2015deep}, consists of 23 walking sequences of subject 35 partitioned into 16 training, 3 validation, and 4 test sequences. We follow the preprocessing of~\citet{wang2007gaussian}.

In designing the recognition network, we follow~\citet{yildiz2019ode} and use a fully connected network to encode the first three observations of each sequence and thereafter predicted the remaining sequence.
This encoder is chosen for fair comparison to existing models, and could be extended to a recurrent or attention model~\cite{vaswani2017attention}.
The overall architecture is described in Appendix~\ref{app:architecture} and is similar to that of ODE$^2$VAE~\cite{yildiz2019ode}, with a similar number of parameters. 
We also use a fixed step size $1/5$ of smallest interval between any two observations~\cite{yildiz2019ode}.

We train latent ODE and latent SDE models with the Adam optimizer~\cite{kingma2014adam} and its default hyperparameter settings, with an initial learning rate of $0.01$ that is exponentially decayed with rate $0.999$ during each iteration. 
We perform validation over the number of training iterations, KL penalty~\cite{higgins2017beta}, and KL annealing schedule. 
All models were trained for at most $400$ iterations, where we start to observe severe overfitting for most model instances. 
We report the test MSE on future observations following~\citet{yildiz2019ode}. 
We believe that the improved performance is due to the strong regularization in path space, as removing the KL penalty improve training error but caused validation error to deteriorate. 

\begin{table}[h]
\caption{Test MSE on 297 future frames averaged over $50$ samples. $95\%$ confidence interval reported based on t-statistic. $^\dagger$results from~\cite{yildiz2019ode}.
}
\label{tab:mocap_results}
\centering
 \begin{tabular}{l l} 
 \midrule
 Method
 & Test MSE \\
 \midrule
 DTSBN-S~\cite{gan2015deep} & $34.86 \pm 0.02^\dagger$  \\
 {np}ODE~\cite{heinonen2018learning} & $22.96^\dagger$ \\
 {Neural}ODE~\cite{chen2018neural} &  $22.49 \pm 0.88^\dagger$ \\
 $\text{ODE}^2$VAE~\cite{yildiz2019ode} & $10.06 \pm 1.4^\dagger$ \\
 $\text{ODE}^2$VAE-KL~\cite{yildiz2019ode} & $8.09 \pm 1.95^\dagger$\\
 Latent ODE~\cite{chen2018neural,rubanova2019latent} & $5.98 \pm 0.28$ \\
 Latent SDE (this work) & $\mathbf{4.03 \pm 0.20}$ \\
 \midrule
 \end{tabular}
\end{table}

\section{Discussion}
We presented a generalization of the adjoint sensitivity method to compute gradients through solutions of SDEs. 
In contrast to existing approaches, this method has nearly the same time and memory complexity as simply solving the SDE.
We showed how our stochastic adjoint framework can be combined with a gradient-based stochastic variational inference scheme for training latent SDEs.

It is worthwhile to mention that SDEs and the commonly used GP models define two distinct classes of stochastic processes, albeit having a nonempty intersection (e.g. Ornstein-Uhlenbeck processes fall under both).
Computationally, the cost of fitting GPs lies in the matrix inversion, whereas the computational bottleneck of training SDEs is the sequential numerical solve. 
Empirically, another avenue of research is to reduce the variance of gradient estimates.
In the future, we may adopt techniques such as control variates or antithetic paths.

On the application side, our method opens up a broad set of opportunities for fitting any differentiable SDE model, such as Wright-Fisher models with selection and mutation parameters~\cite{ewens2012mathematical}, derivative pricing models in finance, or infinitely-deep Bayesian neural networks~\cite{neuralSDEBNN}.
In addition, the latent SDE model enabled by our framework can be extended to include domain knowledge and structural or stationarity constraints~\cite{ma2015complete} in the prior process for specific applications. 

On the theory side, there remain fundamental questions to be answered. 
Convergence rates of numerical gradients estimated with general schemes are unknown.
Additionally, since our analyses are based on strong orders of schemes, it is natural to question whether convergence results still hold when we consider weak errors, and moreover if the method could be reformulated more coherently with rough paths theory \cite{lyons1998differential}.

\subsubsection*{Acknowledgements}
We thank Yulia Rubanova, Danijar Hafner, Mufan Li, Shengyang Sun, Kenneth R. Jackson, Simo S\"arkk\"a, Daniel Lacker, and Philippe Casgrain for helpful discussions. 
We thank Çağatay Yıldız for helpful discussions regarding evaluation settings of the mocap task. 
We also thank Guodong Zhang, Kevin Swersky, Chris Rackauckas, and members of the Vector Institute for helpful comments on an early draft of this paper.

\bibliographystyle{plainnat}
\bibliography{main}

\newpage
\onecolumn
\section{Appendix}

\subsection{Notation} \label{app:notation}
For a fixed terminal time $T > 0$, we denote by $\mathbb{T} = [0, T] \subseteq \R$ the time horizon. Let 
$C^{\infty}$ be the class of infinitely differentiable functions from $\R^d$ to itself.
Let $C^{p, q}$ be the class of functions from $\R^d \times \mathbb{T}$ to $\R^d$ that are $p$ and $q$ times continuously differentiable in the first and second input, respectively. 
Let $C_b^{p, q} \subseteq C^{p, q}$ be the subclass with bounded derivatives of all possible orders.
For a positive integer $m$, we adopt the shorthand $[m] = \{1, 2, \dots, m\}$. 
We denote the Euclidean norm of a vector $v$ by $| v |$.
For $f \in C^{p, q}$, we denote its Jacobian with respect to the first input by $\nabla f$.
We denote the concatenation of two vectors $u \in \R^{d_1}$ and $v \in \R^{d_2}$ by the simplified notation $(u, v)$, as opposed to the slightly lengthy notation $(u^\top, v^\top)^\top$.

\subsection{Proof of Theorem \ref{lemm:jacobian_dynamics}}
\begin{proof}[Proof of Theorem \ref{lemm:jacobian_dynamics}]
\label{app:jacobian_dynamics}
We have $J_{s, t}(z) = \nabla \widecheck{\Psi}_{s, t}(z)$, where $\widecheck{\Psi}_{s, t}(z)$ is defined in (\ref{eq:backward_stratonovich_sde}).
Now we take the gradient with respect to $z$ on both sides. The solution is differentiable with respect to $z$ and we may differentiate under the stochastic integral~\cite[Proposition 2.4.3]{kunita2019stochastic}. 
Theorem 3.4.3~\cite{kunita2019stochastic} is sufficient for the regularity conditions required.
Since $K_{s, t}(z) = J_{s, t}(z)^{-1}$, applying the Stratonovich version of It\^{o}'s formula to (\ref{eq:dynamics.J}), we have (\ref{eq:dynamics.K}).
\end{proof}

\subsection{Proof of Theorem~\ref{thm:approx.scheme}}\label{app:approx.scheme}

\begin{proof} [Proof of Theorem \ref{thm:approx.scheme}]
By the triangle inequality,
\begin{equation}
\begin{split}
&|\mathsf{F}(\mathsf{G}(z, W_{\cdot}), W_{\cdot}) - \mathsf{F}_h(\mathsf{G}_h(z, W_{\cdot}), W_{\cdot})| \\
\leq &
	\underbrace{
		|\mathsf{F}(\mathsf{G}(z, W_{\cdot}), W_{\cdot}) - \mathsf{F}(\mathsf{G}_h(z, W_{\cdot}), W_{\cdot})|
	}_{I_h^{(1)}} +
	\underbrace{
		|\mathsf{F}(\mathsf{G}_h(z, W_{\cdot}), W_{\cdot}) - \mathsf{F}_h(\mathsf{G}_h(z, W_{\cdot}), W_{\cdot})|
	}_{I_h^{(2)}}.
\end{split}
\end{equation}
We show that both $I_h^{(1)}$ and $I_h^{(2)}$ converge to $0$ in probability as $h \to 0$.
For simplicity, we suppress $z$ and $W_{\cdot}$.

{\it Bounding $I_h^{(1)}$.} 
Let $\epsilon > 0$ be given. 
Since $G_h \rightarrow G$ in probability, there exist $M_1 > 0$ and $h_0 > 0$ such that
\[
\mathbb{P}(|G| > M_1) < \epsilon, \quad \mathbb{P}(|G_h| > 2M_1) < \epsilon, \quad \text{for all } h \leq h_0.
\]
By Lemma 2.1 (iv) of \citet{ocone1989generalized}, which can be easily adapted to our context, there exists a positive random variable $C_1$, finite almost surely, such that $\sup_{|z| \leq 2 M_1} \left| \nabla_z \mathsf{F} \right| \leq C_1$, and there exists $M_2 > 0$ such that $\mathbb{P}( |C_1| > M_2 ) < \epsilon$. Given $M_2$, there exists $h_1 > 0$ such that
\[
\mathbb{P}\bracks{ |G - G_h| > \frac{\epsilon}{M_2} } < \epsilon, \quad \text{for all } h \leq h_1.
\]
Now, suppose $h \leq \min\{h_0, h_1\}$. Then, by the union bound, with probability at least $1 - 4 \epsilon$, we have
\[
|G| \leq M_1, \quad  |G_h| \leq 2M_1, \quad |C_1| \leq M_2, \quad |G - G_h| \leq \frac{\epsilon}{M_2}.
\]	
On this event, we have
\[
I_h^{(1)} = |\mathsf{F}(\mathsf{G}) - \mathsf{F}(\mathsf{G}_h)| \leq C_1 |G - G_h| \leq M_2 \frac{\epsilon}{M_2} = \epsilon.
\]
Thus, we have shown that $I_h^{(1)}$ converges to $0$ in probability as $h \rightarrow 0$.

\medskip

{\it Bounding $I_h^{(2)}$.} The idea is similar. By condition (ii), we have
\[
\lim_{h \rightarrow 0} \sup_{|z_T|\leq M} |\mathsf{F}_h(z_T) - \mathsf{F}(z_T)| = 0
\]
in probability. 
Using this and condition (i), for given $\epsilon > 0$, there exist $M > 0$ and $h_2 > 0$ such that for all $h \le h_2$, we have
\[
|G_h| \leq M \quad \text{and} \quad \sup_{|z_T|\leq M} |\mathsf{F}_h(z_T) - \mathsf{F}(z_T)| < \epsilon
\]
with probability at least $1 - \epsilon$. On this event, we have
\[
|\mathsf{F}(\mathsf{G}_h) - \mathsf{F}_h(\mathsf{G}_h)| \leq \sup_{|z_T|\leq M} |\mathsf{F}_h(z_T) - \mathsf{F}(z_T)| < \epsilon.
\]
Thus $I_h^{(2)}$ also converges to $0$ in probability as $h\to 0$.
\end{proof}

\subsection{Euler-Maruyama Scheme Satisfies Local Uniform Convergence}\label{app:euler_cond2}

Here we verify that the Euler-Maruyama scheme satisfies condition $(ii)$ when $d=1$. 
Our proof can be extended to the case where $d > 1$ assuming an $L^p$ estimate of the error; see the discussion after the proof of Proposition \ref{prop:euler_cond2}.

\begin{prop}\label{prop:euler_cond2}
Let $\mathsf{F}_h(z)$ be the Euler-Maruyama discretization of a $1$-dimensional SDE with mesh size $h$ of $\mathsf{F}(z)$. Then, for any compact $A \subset \mathbb{R}$, we have
\eq{ \label{eqn:convergence.result.claim}
    \plim_{h\to0} \sup_{z \in A} | \mathsf{F}_h(z) - \mathsf{F}(z) | = 0.
}
\end{prop}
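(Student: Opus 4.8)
The plan is to upgrade the classical \emph{pointwise} strong convergence of the Euler--Maruyama scheme to a \emph{locally uniform} statement by combining (a) a rate of convergence at each fixed starting point, uniform over compacts, with (b) spatial regularity estimates for both the true solution map $\mathsf{F}$ and the discretized map $\mathsf{F}_h$ that are uniform in the mesh size $h$, and then running a $\delta$-net covering argument over $A$. Concretely, for any finite net $z_1,\dots,z_M$ of $A$ with spacing $\delta$ and any $z\in A$ with nearest net point $z_i$, one writes
\[
\sup_{z\in A}|\mathsf{F}_h(z)-\mathsf{F}(z)|\ \le\ \max_{1\le i\le M}|\mathsf{F}_h(z_i)-\mathsf{F}(z_i)|\ +\ \sup_{|z-z'|\le\delta}|\mathsf{F}_h(z)-\mathsf{F}_h(z')|\ +\ \sup_{|z-z'|\le\delta}|\mathsf{F}(z)-\mathsf{F}(z')|,
\]
and the three terms are controlled separately: the middle and last by spatial regularity (small once $\delta$ is small, \emph{independently of $h$}), the first by the pointwise rate (small once $h$ is small, for the now-fixed $\delta$).

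First I would record the two ingredients. For (a): since $b,\sigma\in C^{\infty,1}_b$ are in particular globally Lipschitz with linear growth, the standard strong-error estimate gives, for each $p\ge 2$, a constant $C_{p,A}$ with $\sup_{z\in A}\mathbb{E}\big[|\mathsf{F}_h(z)-\mathsf{F}(z)|^p\big]\le C_{p,A}\,h^{p/2}$; uniformity over $z\in A$ is automatic because the Gronwall constants depend on the starting point only through moments of the solution, which are bounded uniformly on the bounded set $A$. For (b): Lipschitzness of the coefficients together with the Burkholder--Davis--Gundy inequality yields $\mathbb{E}\big[|\mathsf{F}(z)-\mathsf{F}(z')|^p\big]\le C_p|z-z'|^p$; the crucial point is that the \emph{same} bound holds for $\mathsf{F}_h$ with a constant independent of $h$. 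This follows by writing the Euler recursion for two initial conditions $z,z'$, taking $p$-th moments, and applying a discrete BDG inequality to the martingale increments together with discrete Gronwall: the per-step amplification factor is $1+O(h)$ over $L=T/h$ steps, so the accumulated constant is $\le e^{CT}$, uniformly in $h$.

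Next, from ingredient (b) and the Garsia--Rodemich--Rumsey (quantitative Kolmogorov continuity) inequality applied in the \emph{spatial} variable — legitimate here because $d=1$, so we may take $p$ as large as needed — both $z\mapsto\mathsf{F}(z)$ and $z\mapsto\mathsf{F}_h(z)$ admit H\"older-continuous modifications on $A$ with a H\"older seminorm whose $p$-th moment is bounded uniformly in $h$. Consequently there is $\gamma\in(0,1)$ and $C$ independent of $h$ with
\[
\mathbb{E}\Big[\sup_{\substack{z,z'\in A\\ |z-z'|\le\delta}}|\mathsf{F}_h(z)-\mathsf{F}_h(z')|\Big]\le C\delta^{\gamma},\qquad \mathbb{E}\Big[\sup_{\substack{z,z'\in A\\ |z-z'|\le\delta}}|\mathsf{F}(z)-\mathsf{F}(z')|\Big]\le C\delta^{\gamma}.
\]
Now fix $\epsilon>0$, choose $\delta$ so small that (by Markov's inequality) the last two terms in the decomposition are each $<\epsilon$ with probability $>1-\epsilon$; then fix a $\delta$-net of size $M\le C_A/\delta$ and estimate $\mathbb{E}\big[\max_i|\mathsf{F}_h(z_i)-\mathsf{F}(z_i)|^2\big]\le\sum_i\mathbb{E}|\mathsf{F}_h(z_i)-\mathsf{F}(z_i)|^2\le M\,C_{2,A}\,h\to0$ as $h\to0$, so the first term is also $<\epsilon$ with high probability once $h$ is small. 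Combining and letting $\epsilon\downarrow0$ gives (\ref{eqn:convergence.result.claim}).

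I expect the main obstacle to be ingredient (b) for the numerical flow: establishing spatial regularity of $z\mapsto\mathsf{F}_h(z)$ with constants that do not secretly blow up as $h\to0$. As noted in the main text, the numerical-SDE literature fixes the initial condition and does not track dependence on it, so the discrete Gronwall/BDG bookkeeping must be done carefully to keep every constant $h$-uniform. The restriction to $d=1$ also enters precisely through the Kolmogorov criterion, which requires the pointwise $L^p$ error bound with $p>d$; for $d>1$ one would simply need the corresponding higher-moment strong error estimate, and the same scheme of proof would go through.
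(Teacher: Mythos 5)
Your proof is correct, but it takes a genuinely different route from the one in the paper. The paper upgrades pointwise to locally uniform convergence via a Sobolev-embedding argument: it forms the weighted random function $\mathsf{H}_h^\alpha(z) = (\mathsf{F}(z)-\mathsf{F}_h(z))/(1+|z|^2)^{1/2+\alpha}$, bounds its expected $W^{1,2}(\mathbb{R})$-norm by $\mathcal{O}(h)$ using a mean-square error estimate for the \emph{augmented} system $(Z_t^z, \nabla_z Z_t^z)$ --- exploiting the fact that the derivative of the Euler scheme is the Euler scheme of the derivative SDE --- and then invokes the embedding $W^{1,2}(\mathbb{R}) \hookrightarrow L^\infty(\mathbb{R})$, valid since $p=2>d=1$. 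Your argument instead runs a $\delta$-net/chaining scheme, replacing control of $\nabla_z(\mathsf{F}-\mathsf{F}_h)$ by an $h$-uniform spatial $L^p$-Lipschitz estimate for the discrete flow combined with the Kolmogorov--Garsia--Rodemich--Rumsey continuity criterion. Both approaches hinge on exactly one estimate absent from the standard numerical-SDE literature (where the starting point is held fixed): the paper needs the strong error of the derivative process with explicit dependence on $z$; you need the $h$-uniform modulus of continuity of $z\mapsto\mathsf{F}_h(z)$, which you correctly identify as the crux and which does follow from the discrete Gronwall/BDG computation you sketch, since the per-step amplification is $1+\mathcal{O}(h)$ over $T/h$ steps. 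Your route has the advantage of never differentiating the numerical scheme, and your order of quantifiers is handled correctly (the $\delta$ controlling the continuity terms is chosen uniformly in $h$ before $h$ is sent to $0$); it also extends to $d>1$ needing only $L^p$ spatial increments of the discrete flow for some $p>d$ together with the $L^2$ pointwise error at the finitely many net points, whereas the paper's Sobolev route requires $L^p$ control of the error \emph{and its spatial derivative} for $p>d$.
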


Usual convergence results in stochastic numerics only control the error for a single fixed starting point. Here, we strengthen the result to local uniform convergence. Our main idea is to apply a Sobolev inequality argument \cite[Part II]{ocone1989generalized}. To do so, we need some preliminary results about the Euler-Maruyama discretization of the original SDE and its derivative.
We first recall a theorem characterizing the expected squared error for general schemes.

\begin{theo}[Mean-square order of convergence {\cite[Theorem 1.1]{milstein2013stochastic}}]\label{theo:fundamental_convergence}
Let $\{Z_t^z\}_{t\ge 0}$ be the solution to an It\^o SDE, and $\{\tilde{Z}_k^z\}_{k\in \mathbb{N}}$ be a numerical discretization with fixed step size $h$, both of which are started at $z \in \R^d$ and defined on the same probability space.
Let the coefficients of the SDE be $C_b^{1, \infty}$. Furthermore, suppose that the numerical scheme has order of accuracy $p_1$ for the expectation of deviation and order of accuracy $p_2$ for the mean-square deviation. If $p_1 \ge p_2 + 1/2$ and $p_2 \ge 1/2$, then, for any $N \in \mathbb{N}$, $k \in [N]$, and $z \in \R^d$
\eq{ \label{eqn:known.estimate}
    \Exp{
        |Z_{t_k}^z - \tilde{Z}_k^z |^2
    } \le
        C
        \bracks{
            1 + |z|^2
        }
        h^{2p_2 - 1},
}
for a constant $C$ that does not depend on $h$ or $z$.
\end{theo}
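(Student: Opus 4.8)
The plan is to reduce the global mean-square error to an accumulation of one-step (local) errors propagated forward by the exact stochastic flow, and then close a discrete Gronwall recursion for $S_k := \Exp{|Z_{t_k}^z - \tilde Z_k^z|^2}$. The two hypotheses enter as local-error estimates: writing the numerical one-step map from a fixed point $x$ at time $t_k$ as $\bar Z_{k+1}^{t_k,x}$ and the exact one-step solution as $Z_{t_{k+1}}^{t_k,x}$, ``order $p_1$ for the expectation of deviation'' and ``order $p_2$ for mean-square deviation'' read
\eq{
\abs{\Exp{Z_{t_{k+1}}^{t_k,x} - \bar Z_{k+1}^{t_k,x}}} \le K(1+|x|^2)^{1/2}h^{p_1}, \quad
\Exp{|Z_{t_{k+1}}^{t_k,x} - \bar Z_{k+1}^{t_k,x}|^2}^{1/2} \le K(1+|x|^2)^{1/2}h^{p_2}.
}

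First I would introduce, at each step, the exact solution run over one step from the numerical value, and split the error into a propagated part and a fresh local part,
\eq{
e_{k+1} := \tilde Z_{k+1}^z - Z_{t_{k+1}}^z = \underbrace{(Z_{t_{k+1}}^{t_k,\tilde Z_k^z} - Z_{t_{k+1}}^{t_k, Z_{t_k}^z})}_{\Delta_{k+1}} + \underbrace{(\tilde Z_{k+1}^z - Z_{t_{k+1}}^{t_k,\tilde Z_k^z})}_{\rho_{k+1}},
}
using the flow property of the exact solution so that $Z_{t_{k+1}}^{t_k, Z_{t_k}^z} = Z_{t_{k+1}}^z$. Since $\tilde Z_k^z$ is $\F_{t_k}$-measurable and the one-step increment depends only on the Brownian increment over $[t_k,t_{k+1}]$ (independent of $\F_{t_k}$), I can condition on $\F_{t_k}$ and apply the local estimates at the random starting point $x=\tilde Z_k^z$. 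Mean-square continuous dependence of the exact flow on initial data over a step of length $h$ gives $\Exp{|\Delta_{k+1}|^2\mid\F_{t_k}}\le(1+Ch)|e_k|^2$, while the local estimates yield $\abs{\Exp{\rho_{k+1}\mid\F_{t_k}}}\le K(1+|\tilde Z_k^z|^2)^{1/2}h^{p_1}$ and $\Exp{|\rho_{k+1}|^2\mid\F_{t_k}}\le K^2(1+|\tilde Z_k^z|^2)h^{2p_2}$.

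The hard part will be the cross term in $\Exp{|e_{k+1}|^2}=\Exp{|\Delta_{k+1}|^2}+2\Exp{\Delta_{k+1}\cdot\rho_{k+1}}+\Exp{|\rho_{k+1}|^2}$. Its dominant piece pairs the accumulated error $e_k$ with the conditional mean $\Exp{\rho_{k+1}\mid\F_{t_k}}$, and this is exactly where the stronger order $p_1$ (not $p_2$) must be used; a naive conditional Cauchy--Schwarz through $p_2$ would lose half an order. I would write $\Delta_{k+1}=e_k+(\Delta_{k+1}-e_k)$ and $\rho_{k+1}=\Exp{\rho_{k+1}\mid\F_{t_k}}+\hat\rho_{k+1}$ with $\hat\rho_{k+1}$ conditionally centered, then bound the leading term $\Exp{e_k\cdot\Exp{\rho_{k+1}\mid\F_{t_k}}}$ by Young's inequality weighted by $h$, contributing $Ch\,|e_k|^2+C(1+|\tilde Z_k^z|^2)h^{2p_1-1}$. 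The hypothesis $p_1\ge p_2+1/2$ is precisely what makes $h^{2p_1-1}\le h^{2p_2}$, so this is dominated by the mean-square local error; the remaining cross pieces (pairings involving $\Delta_{k+1}-e_k$ or $\hat\rho_{k+1}$) are genuinely higher order and dispatched by conditional Cauchy--Schwarz.

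Finally I would assemble the recursion $S_{k+1}\le(1+Ch)S_k+C(1+\Exp{|\tilde Z_k^z|^2})h^{2p_2}$, invoke a uniform second-moment bound $\sup_k\Exp{|\tilde Z_k^z|^2}\le C(1+|z|^2)$ for the scheme (standard from the linear growth of the $C_b^{1,\infty}$ coefficients and stability of the one-step map), and apply discrete Gronwall over $N=T/h$ steps. Since $(1+Ch)^N\le e^{CT}$ stays bounded and the $N$ accumulated local contributions sum to $N\cdot h^{2p_2}=(T/h)h^{2p_2}=T h^{2p_2-1}$, this produces $\Exp{|Z_{t_k}^z-\tilde Z_k^z|^2}\le C(1+|z|^2)h^{2p_2-1}$, with $p_2\ge 1/2$ ensuring the exponent is nonnegative. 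The two places demanding care are this cross-term routing through $p_1$ and the uniform-in-$z$ tracking of the $(1+|z|^2)$ factor, which forces the a priori moment bound to hold with a constant independent of the step size.
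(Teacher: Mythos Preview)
Your proposal is essentially the classical Milstein argument and is correct. Note, however, that the paper does not supply its own proof of this theorem: it is quoted verbatim as Theorem~1.1 of \cite{milstein2013stochastic}, and the paper explicitly refers the reader to that reference for the proof. What you have outlined---decomposing the global error via the exact one-step flow into a propagated piece $\Delta_{k+1}$ and a fresh local error $\rho_{k+1}$, routing the dangerous cross term $\Exp{e_k\cdot\Exp{\rho_{k+1}\mid\F_{t_k}}}$ through the order-$p_1$ bound on the conditional mean (so that $p_1\ge p_2+1/2$ makes $h^{2p_1-1}\le h^{2p_2}$), and closing with discrete Gronwall over $N=T/h$ steps---is precisely the mechanism of Milstein's original proof. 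The two points you flag as delicate (the cross-term handling and the uniform-in-$z$ moment bound for the scheme) are exactly the places where the argument is nontrivial, and your treatment of both is sound.
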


We refer the reader to \citep{milstein2013stochastic} for the precise definitions of orders of accuracy and the proof. Given this theorem, we establish an estimate regarding errors of the discretization and its derivative with respect to the initial position.

\begin{lemm}\label{lemm:estimate}
We have
\eq{
    \Exp{
        | \mathsf{F}(z) - \mathsf{F_h}(z)|^2 + 
        | \nabla_z \mathsf{F}(z) - \nabla_z \mathsf{F_h}(z)|^2
    }
    \le&
    C_1(1 + |z|^2) h, \label{eq:f_h}
}
where $C_1$ is a constant independent of $z$ and $h$.
\end{lemm}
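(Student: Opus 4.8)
The plan is to exhibit both $\mathsf{F}(z)$ and its Jacobian $\nabla_z\mathsf{F}(z)$ as components of the time-$0$ value of a \emph{single} forward It\^o SDE started from $z$, and then read off the estimate from Theorem~\ref{theo:fundamental_convergence}. The Euler--Maruyama scheme has orders of accuracy $p_1 = 2$ and $p_2 = 1$, which satisfy $p_1 \ge p_2 + \tfrac12$ and $p_2 \ge \tfrac12$, so the theorem yields a mean-square error of order $h^{2p_2-1} = h$ — exactly the power of $h$ claimed in (\ref{eq:f_h}).

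First I would set up the driving SDE. The pair $(\widecheck{A}_{\cdot, T}(z), \widecheck{\Psi}_{\cdot, T}(z))$ solves the backward Stratonovich system (\ref{eqn:A.tilde.dynamics}); converting it to It\^o form introduces only the usual $-\tfrac12$ correction terms built from $b, \sigma$ and their first spatial derivatives, all still in $C_b^{\infty,1}$ since $b, \sigma \in C_b^{\infty,1}$, and reparameterizing $s \mapsto T-s$ (with $\widecheck{W}$ as the driving Brownian motion) turns it into a standard forward It\^o SDE whose Euler--Maruyama discretization is precisely $\mathsf{F}_h(z)$. Next I would adjoin the first-variation equations: differentiating this It\^o system with respect to its initial condition $z$ gives an It\^o SDE for $(\widecheck{A}_{\cdot, T}(z), \widecheck{\Psi}_{\cdot, T}(z), \nabla_z\widecheck{A}_{\cdot, T}(z), \nabla_z\widecheck{\Psi}_{\cdot, T}(z))$ with $C_b^{\infty,1}$ coefficients, started from an initial condition that is a smooth, at most linearly growing function of $z$ (using that the loss $\mathcal{L}$ has suitably bounded derivatives, so the initial data has norm $\lesssim 1 + |z|$). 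The key observation is that for the Euler--Maruyama scheme, differentiating the one-step update with respect to the starting point and applying the chain rule produces exactly the Euler--Maruyama discretization of the variational equation; hence $\nabla_z\mathsf{F}_h(z)$ is the numerical solution of the derivative SDE. Applying Theorem~\ref{theo:fundamental_convergence} to this enlarged system and projecting onto the relevant coordinates gives
\[
\mathbb{E}\left[\,|\mathsf{F}(z) - \mathsf{F}_h(z)|^2 + |\nabla_z\mathsf{F}(z) - \nabla_z\mathsf{F}_h(z)|^2\,\right] \le C\,(1 + |z|^2)\,h,
\]
with $C$ independent of $z$ and $h$, which is (\ref{eq:f_h}).

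The main obstacle is not the estimate itself — it is a direct invocation of the cited convergence theorem once the setup is in place — but the bookkeeping: justifying rigorously that ``differentiation with respect to the initial condition commutes with Euler--Maruyama discretization'' (so that $\nabla_z\mathsf{F}_h$ really is the numerical scheme for the variational SDE rather than some other object), and verifying that the Stratonovich-to-It\^o correction terms, the coefficients of the variational equations, and the time-reversal to a forward problem all preserve the smoothness and boundedness hypotheses of Theorem~\ref{theo:fundamental_convergence}. One also has to check that the initial condition of the augmented system retains the $\lesssim 1 + |z|$ growth needed to land on the stated $(1 + |z|^2)$ prefactor.
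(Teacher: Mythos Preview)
Your proposal is correct and follows essentially the same route as the paper: augment the state with its first-variation process, observe that differentiating the Euler--Maruyama iterates in $z$ yields exactly the Euler--Maruyama discretization of the variational SDE, and then invoke Theorem~\ref{theo:fundamental_convergence} on the enlarged system. The only cosmetic discrepancy is that the paper quotes $(p_1,p_2)=(1.5,1.0)$ for Euler--Maruyama whereas you state $p_1=2$; either value satisfies $p_1\ge p_2+\tfrac12$, so the conclusion $h^{2p_2-1}=h$ is unaffected.
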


\begin{proof}[Proof of Lemma \ref{lemm:estimate}]
Since the coefficients of the SDE are of class $C_b^{\infty, 1}$, we may differentiate the SDE in $z$ to get the SDE for the derivative $\nabla_z Z^z_t$ \citep{kunita2019stochastic}. 
Specifically, letting $Y_t^z = \nabla_z Z_t^z$, we have
\eq{
    Y_t^z =
        I_d +
        \int_0^t \nabla b(Z_s^z , s) Y_s^z \ds + 
        \int_0^t \nabla \sigma(Z_s^z, s) Y_s^z \dW_s.
}

Note that the augmented process $(\mathsf{F}(z), \nabla_z \mathsf{F}(z))$ satisfies an SDE with $C_b^{\infty, 1}$ coefficients. 
By the chain rule, one can easily show that the derivative of the Euler-Maruyama discretization $\mathsf{F}_h(z)$ is the discretization of the derivative process $Y_t^z$. 
Thus, $(\mathsf{F}_h(z), \nabla_z \mathsf{F}_h(z))$ is simply the discretization of $(\mathsf{F}(z), \nabla_z \mathsf{F}(z))$.

Since the Euler-Maruyama scheme has orders of accuracy $(p_1, p_2) = (1.5, 1.0)$ \citep[Section 1.1.5]{milstein2013stochastic}, by Theorem \ref{theo:fundamental_convergence}, we have 
\eq{
    \Exp{
        |\mathsf{F}(z) - \mathsf{F}_h(z)|^2 + 
        |\nabla_z \mathsf{F}(z) - \nabla_z \mathsf{F}_h(z)|^2
    }
    \le
    C_1 (1 + |z|^2 ) h, \quad z \in \mathbb{R}^d
}
for some constant $C_1$ that does not depend on $z$ or $h$.
\end{proof}

We also recall a variant of the Sobolev inequality which we will apply for $d = 1$.

\begin{theo}[Sobolev inequality {\cite[Theorem 5.4.1.c]{adams1975sobolev}}]\label{theo:Sobolev}
For any $p > d$, there exists a universal constant $c_p$ such that 
\eq{
    \sup_{x \in \R^d} |f(x)| \le c_p \norm{f}_{1, p},
}
where 
\eq{
    \norm{f}_{1,p}^p := 
        \int_{\R^d} | f(x) |^p \dx + 
        \int_{\R^d} | \nabla_x f(x) |^p \dx,
}
for all continuously differentiable $f: \R^d \to \R$.
\end{theo}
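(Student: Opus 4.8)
The plan is to recall the classical proof of Morrey's inequality, which is exactly the $p > d$ endpoint of the Sobolev embedding theorem. The strategy is to control the value of $f$ at an arbitrary point by its average over a fixed ball, plus an oscillation term governed by a Riesz-type potential of the gradient, and then to exploit the hypothesis $p > d$ precisely to guarantee that the singular kernel appearing in that potential is integrable against the conjugate exponent. By translation invariance it suffices to work with the unit ball centred at the point in question, so all constants will depend only on $d$ and $p$ and never on $x$.

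First I would establish a pointwise oscillation estimate. Fix $x \in \R^d$ and let $B = B(x, 1)$. For $y \in B$, writing $\omega = (y - x)/|y - x|$ and applying the fundamental theorem of calculus along the segment joining $x$ to $y$,
\eq{
|f(y) - f(x)| \le \int_0^{|y - x|} |\nabla f(x + s \omega)| \ds.
}
Integrating this over $y \in B$ and passing to polar coordinates centred at $x$ converts the right-hand side into a Riesz potential, yielding
\eq{
\int_B |f(y) - f(x)| \dy \le C_d \int_B \frac{|\nabla f(y)|}{|y - x|^{d - 1}} \dy
}
for a purely dimensional constant $C_d$.

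Next I would bound the potential integral by Hölder's inequality with exponents $p$ and $p' = p/(p-1)$:
\eq{
\int_B \frac{|\nabla f(y)|}{|y - x|^{d - 1}} \dy \le \norm{\nabla f}_{L^p(B)} \bracks{ \int_B |y - x|^{-(d-1) p'} \dy }^{1/p'}.
}
The decisive point — and the only place the hypothesis enters — is that, in polar coordinates, the last integral is finite exactly when $(d-1) p' < d$, which rearranges to $p > d$. Thus for $p > d$ the bracketed factor is a finite constant, and combining the two displays and dividing by $|B|$ gives
\eq{
\frac{1}{|B|} \int_B |f(y) - f(x)| \dy \le C_{d,p} \norm{\nabla f}_{L^p(B)} \le C_{d,p} \norm{\nabla f}_{L^p(\R^d)}.
}

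Finally I would bound the average of $f$ over $B$ by its $L^p$ norm, again by Hölder:
\eq{
\frac{1}{|B|} \left| \int_B f(y) \dy \right| \le |B|^{-1/p} \norm{f}_{L^p(\R^d)}.
}
Writing $|f(x)| \le |f(x) - \tfrac{1}{|B|}\int_B f| + |\tfrac{1}{|B|}\int_B f|$ and inserting the previous two estimates controls $|f(x)|$ by $\norm{\nabla f}_{L^p} + \norm{f}_{L^p}$, each of which is at most $\norm{f}_{1,p}$. Since $x$ was arbitrary, taking the supremum and absorbing constants yields $\sup_{x \in \R^d} |f(x)| \le c_p \norm{f}_{1,p}$. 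The only genuinely delicate step is the integrability threshold $(d-1)p' < d \iff p > d$; the fundamental theorem of calculus estimate and the two Hölder applications are routine.
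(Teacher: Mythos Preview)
Your argument is the standard Morrey-inequality proof and is correct; the only step where anything could go wrong is the integrability of the kernel $|y-x|^{-(d-1)p'}$ over the unit ball, and you correctly identify this as equivalent to $p>d$.

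However, there is nothing to compare against: the paper does not prove this theorem. It is quoted verbatim as a classical result from Adams' \emph{Sobolev Spaces} (Theorem~5.4.1.c) and then invoked as a black box in the proof of Proposition~\ref{prop:euler_cond2}. So your proposal supplies a self-contained proof where the paper simply cites the literature. That is fine as an exercise, but if your goal is to reproduce the paper's argument you may simply cite the result as the authors do.
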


\begin{proof}[Proof of Proposition \ref{prop:euler_cond2}]
Define $\mathsf{H}^\alpha_h: \Omega \times \R \to \R$, regarded as a random function $\mathsf{H}^\alpha_h(\omega): \mathbb{R} \rightarrow \mathbb{R}$, by 
\eq{
    \mathsf{H}_h^\alpha(z) =
        \frac{\mathsf{F}(z) - \mathsf{F}_h(z)}{
            (1 + |z|^2)^{1/2 + \alpha}
        },
}
where $\alpha > 1/2$ is a fixed constant. Since $\mathsf{H}_h^\alpha$ is continuously differentiable a.s., by Theorem \ref{theo:Sobolev},
\eq{
    | \mathsf{F}(z) - \mathsf{F}_h(z) | 
    \le 
        c_2 (1 + |z|^2 )^{1/2 + \alpha} \norm{\mathsf{H}_h^\alpha}_{1,2},
    \quad \text{for all} \; z \in \R \quad a.s.
}
Without loss of generality, we may let the compact set be $A = \{z : |z| \leq M\}$ where $M > 0$. Then,
\eq{  \label{eqn:bound.from.Sobolev}
    \sup_{|z| \le M}
        | \mathsf{F}(z) - \mathsf{F}_h(z) | 
    \le 
        c_2 (1 + M^2 )^{1/2 + \alpha} \norm{\mathsf{H}_h^\alpha}_{1,2},
    \quad a.s.
}

It remains to estimate $\norm{\mathsf{H}_h^\alpha}_{1,2}$. Starting from the definition of $\norm{\cdot}_{1, p}$, a standard estimation yields
\eq{
    \norm{\mathsf{H}_h^\alpha}_{1,2}^2
    \le&
    C_2 \int_{\mathbb{R}} \frac{
        |\mathsf{F}(z) - \mathsf{F}_h(z)|^2 + |\nabla_z\mathsf{F}(z) - \nabla_z\mathsf{F}_h(z)|^2
    }{
        (1 + |z|^2)^{1 + 2\alpha}
    } \dz,
}
where $C_2$ is a deterministic constant depending only on $\alpha$ (but not $z$ and $h$). 

Now we take expectation on both sides. By Lemma \ref{lemm:estimate}, we have
\eq{
    \Exp{
        \norm{\mathsf{H}_h^\alpha}_{1,2}^2
    }
    \le& C_2 \int_{\mathbb{R}} \frac{\mathbb{E} [|\mathsf{F}(z) - \mathsf{F}_h(z)|^2 + |\nabla_z\mathsf{F}(z) - \nabla_z\mathsf{F}_h(z)|^2] }{(1 + |z|^2)^{1 + 2\alpha}} \dz,\\
    \le& C_1 C_2 h \int_{\mathbb{R}} \frac{1}{(1 + |z|^2)^{2\alpha}} \dz,
}
where the last integral is finite since $\alpha > 1/2$.

We have shown that $\Exp{\norm{\mathsf{H}_h^\alpha}_{1,2}^2} = \mathcal{O}(h)$. Thus $\norm{\mathsf{H}_h^\alpha}_{1,2} \to 0$ in $L^2$, and hence also in probability, as $h\to 0$. From \eqref{eqn:bound.from.Sobolev}, we have that $\sup_{z \in A} | \mathsf{F}_h(z) - \mathsf{F}(z) |$ converges to $0$ in probability as $h \rightarrow 0$.
\end{proof}

It is clear from the above proof that we may generalize to the case where $d > 1$ and other numerical schemes if we can bound the expected $W^{1, p}$-norm of $\mathsf{F}_h - \mathsf{F}$ in terms of $z$ and $h$, for $p > d$,
where $W^{1, p}$ here denotes the Sobolev space consisting of all real-valued functions on $\R^d$ whose weak derivatives are functions in $L^p$.
For the Euler scheme and $d > 1$, we need only bound the $L^p$ norm of the discretization error in terms of $z$ and $h$ for general $p$.
To achieve this, we would need to make explicit the dependence on $z$ for existing estimates (see e.g. \cite[Chapter 10]{kloeden2013numerical}).

Generically extending the argument to other numerical schemes, however, is technically non-trivial.
We plan to address this question in future research.

\subsection{Stochastic Adjoint has Commutative Noise when Original SDE has Diagonal Noise} \label{app:commutativity}
\newcommand{\aug}{ {\text{aug}} }
Recall the Stratonovich SDE~(\ref{eq:stratonovich_sde}) with drift and diffusion functions $b, \sigma_1, \dots, \sigma_m \in \R^d \times \R \to \R^d$ being governed by a set of parameters $\theta \in \R^p$.
Consider the augmented state composed of the state and parameters $Y_t = (Z_t, \theta)$. 
The augmented state satisfies a Stratonovich SDE with the drift function $f(y, t) = ( b(z, t), \vzero_p)$ and diffusion functions $g_i(y, t) = (\sigma_i(z, t), \vzero_p)$ for $i \in [m]$. 
We have omitted the dependence on parameter $\theta$ to reduce notational clutter. 
By (\ref{eq:dynamics.K}) and (\ref{eqn:A.Atilde}), the adjoint process of the augmented state follows the backward Stratonovich SDE:
\eq{
    \widecheck{A}^y_t = 
        \widecheck{A}^y_T + 
        \int_t^T \widecheck{A}_s^y \nabla f(\widecheck{Y}_s, s) \ds + 
        \sum_{i=1}^m \int_t^T \widecheck{A}_s^y \nabla g_i(\widecheck{Y}_s, s) \circ \dee \widecheck{W}_s^{(i)}.
}
By definitions of $f$ and $g_i$, the Jacobian matrices $\nabla f(x, s)$ and $\nabla g_i(x, s)$ have the forms
\eq{
\label{eq:Jacobian_fg}
    \nabla f(y, s) = \begin{pmatrix}
        \nabla b(z, s)    & \vzero_{d \times p} \\
        \vzero_{p \times d} & \vzero_{p \times p}
    \end{pmatrix}\in \R^{(d + p) \times (d + p)}, \quad
    \nabla g_i(y, s) = \begin{pmatrix}
        \nabla \sigma_i(z, s) & \vzero_{d \times p}\\
        \vzero_{p \times d}     & \vzero_{p \times p}
    \end{pmatrix}\in \R^{(d + p) \times (d + p)}.
}
\vspace{-1mm}
Thus, the backward Stratonovich SDEs for the adjoint processes of the state and parameters have the forms
\eq{
\label{eq:adjoint_state_param}
    \widecheck{A}^z_t =& 
        \widecheck{A}^z_T +
        \int_t^T
            \widecheck{A}_s^z \frac{\partial b(z, s)}{\partial z}\bigg|_{z=\widecheck{Z}_s} \ds +
        \sum_{i=1}^m \int_t^T
            \widecheck{A}_s^z \frac{\partial \sigma_i(z, s)}{\partial z}\bigg|_{z=\widecheck{Z}_s} \circ \dee \widecheck{W}_s^{(i)}, \\
    \widecheck{A}^\theta_t =& 
        \widecheck{A}^\theta_T + 
        \int_t^T 
            \widecheck{A}_s^z \frac{\partial b(z, s)}{\partial \theta}\bigg|_{z=\widecheck{Z}_s} \ds +
        \sum_{i=1}^m \int_t^T 
            \widecheck{A}_s^z \frac{\partial \sigma_i(z, s)}{\partial \theta}\bigg|_{z=\widecheck{Z}_s} \circ \dee \widecheck{W}_s^{(i)}.
}
Now assume the original SDE has diagonal noise. Then, $m=d$ and Jacobian matrix $\nabla \sigma_i(z)$ has the form
\eq{
\label{eq:Jacobian_sigma}
    \nabla \sigma_i(z) = 
    \mypmatrix{
        0 & ... & 0 & 0 & 0 & ... & 0 \\
        0 & ... & 0 & \frac{\partial \sigma_{i,i}(z)}{\partial z_i} & 0 & ... & 0 \\
        0 & ... & 0 & 0 & 0 & ... & 0
    }.
}
Consider the adjoint process for the augmented state along with the backward flow of the backward Stratonovich SDE~(\ref{eq:backward_stratonovich_sde}), whose overall state we denote by $\widecheck{X}_t = ( \widecheck{Z}_t, \widecheck{A}_t^z, \widecheck{A}_t^\theta )$. 
By (\ref{eq:adjoint_state_param}) and (\ref{eq:Jacobian_sigma}), $\{\widecheck{X}_t\}_{t \in \mathbb{T}}$ satisfies a backward Stratonovich SDE with a diffusion of the form
\eq{
\label{eq:huge_diffusion_fn}
    G(x) =
    \bracks{
    \begin{array}{c;{2pt/2pt}c c c; {2pt/2pt}c}
        -\sigma_{1,1}(z_1) & 0 & \dots & 0 & 0 \\
        & & \ddots & & \\
        0 & 0 & \dots & 0 & -\sigma_{d,d}(z_d) \\ \hdashline
        \frac{\partial \sigma_{1,1}(z_1)}{\partial z_1} a^z_1 & 0 & \dots & 0 & 0 \\
        & & \ddots & & \\
        0 & 0 & \dots & 0 & \frac{\partial \sigma_{d,d}(z_d)}{\partial z_d} a^z_d \\ \hdashline
        \frac{\partial \sigma_{1,1}(z_1)}{\partial \theta_1} a^z_1 & \dots & \dots & \dots & \frac{\partial \sigma_{d,d}(z_d)}{\partial \theta_1} a^z_d \\
        \dots & \dots & \dots & \dots & \dots \\
        \frac{\partial \sigma_{1,1}(z_1)}{\partial \theta_p} a^z_1 & \dots & \dots & \dots & \frac{\partial \sigma_{d,d}(z_d)}{\partial \theta_p} a^z_d
    \end{array}
    }
    \in \R^{(2d + p) \times d},
}
where $x = (z, a^z, a^\theta)$, and the subscript indexes the dimension. 
Recall, for an SDE with diffusion function $\Sigma(x) \in \R^{d \times m}$, it is said to satisfy the commutativity property~\cite{rossler2004runge} if 
\eq{
\label{eq:commutativity}
    \sum_{i=1}^d \Sigma_{i, j_2} (x) \frac{\partial \Sigma_{k, j_1} (x)}{\partial x_i} = 
    \sum_{i=1}^d \Sigma_{i, j_1} (x) \frac{\partial \Sigma_{k, j_2} (x)}{\partial x_i},
}
for all $j_1, j_2 \in [m]$ and $k \in [d]$.
When an SDE has commutative noise, the computationally intensive double It\^o integrals (and the L\'evy areas) need not be simulated by having the numerical scheme take advantage of the following property of iterated integrals~\cite{ilie2015adaptive}:
\eq{
    \int_s^t \int_s^u \dW_r^{(i)} \dW_u^{(j)} + \int_s^t \int_s^u \dW_r^{(j)} \dW_u^{(i)}
    = \Delta W^{(i)} \Delta W^{(j)},
}
where the Brownian motion increment $\Delta W^{(i)} = W_t^{(i)} - W_s^{(i)}$ for $i \in [m]$ can be easily sampled.

We show the diffusion function (\ref{eq:huge_diffusion_fn}) satisfies the commutativity condition (\ref{eq:commutativity}) with a proof by exhaustion:
\begin{description}
    \item[Case 1: $k = 1, \dots, d$.] 
        Both LHS and RHS are zero unless $j_1 = j_2 = k$, since for $\Sigma_{i, j_2} (x) \frac{\partial \Sigma_{k, j_1} (x)}{\partial x_i}$ to be non-zero, $i = j_1 = j_2 = k$.
    \item[Case 2: $k = d+1 \dots, 2d$.]
        Similar to the case above. 
    \item[Case 3: $k = 2d+1 \dots, 2d + p$.] 
        Write $k = 2d + l$, where $l \in [p]$. Both LHS and RHS are zero unless $j_1 = j_2 = l$, since for $\Sigma_{i, j_2} (x) \frac{\partial \Sigma_{k, j_1} (x)}{\partial x_i}$ to be non-zero $i = l$ or $i= d + l$ and $j_1 = j_2 = l$.
\end{description}
This concludes that the commutativity condition holds.
Finally, we comment that the Milstein scheme for the stochastic adjoint of diagonal noise SDEs can be implemented such that during each iteration of the backward solve, \texttt{vjp} is only called a number of times independent of the dimensionality of the original SDE. 

\subsection{Background on Latent SDE} \label{app:latent_sde_bg}
Consider a filtered probability space $(\Omega, \F, \{\F_t\}_{0 \leq t \leq T}, P)$, where $\mathbb{T} = [0, T]$ is a finite time horizon.

Recall the approximate posterior process that we intend to learn is governed by the SDE:
\eq{
\label{eq:before_girsanov}
\dZ_t = h_{\phi}(Z_t, t) \dt + \sigma(Z_t, t) \dW_t, \quad Z_0 = z_0 \in \R^d.
}

Suppose there exists a measurable function $u(z, t)$ such that 
\begin{itemize}
    \item $\sigma(z, t) u(z, t) = h_\phi(z, t) - h_\theta(z, t)$, and
    \item $u(Z_t, t)$ satisfies Novikov's condition, i.e. 
            $\Exp{ \exp \bracks{ \int_0^T \frac{1}{2} |u(Z_t, t)|^2 \dt } } < \infty$.
\end{itemize}
Novikov's condition ensures that the process
\eq{
	M_t = \exp \bracks{
		- \int_0^t \frac{1}{2} |u(Z_s, s)|^2 \ds
		- \int_0^t u(Z_s, s)^\top \dW_s
	}, \quad 0 \le t \le T,
}
is a $P$-martingale. By Girsanov Theorem II~\cite[Theorem 8.6.4]{oksendal2013stochastic}, the process
$\widehat{W}_t = \int_0^t u(Z_s, s) \ds + W_t$, $0 \leq t \leq T$ is a Wiener process under the probability measure $Q$ defined by 
\eq{
\dee Q = M_T \dee P,       %
}
Moreover, since a simple rewrite shows that 
\eq{
\label{eq:after_girsanov}
    \dZ_t = h_\theta(Z_t, t)\dt + \sigma(Z_t, t) \dee \widehat{W}_t, \quad Z_0 = z_0,
}
we conclude that the $Q$-law of (\ref{eq:after_girsanov}) (or equivalently (\ref{eq:before_girsanov})) is the same as the $P$-law of the prior process. 

\subsubsection{Deriving the Variational Bound}
Let $x_{t_1}, \dots, x_{t_N}$ be observed data at times $t_1, \dots, t_N$, whose conditionals only depend on the respective latent states $z_{t_1}, \dots, z_{t_N}$. 
Since the $Q$-law of the approximate posterior is the same as the $P$-law of the prior,
\eq{
    \log p(x_{t_1}, \dots, x_{t_N} ) =
    \log \E_P \left [
        \prod_{i=1}^N p(x_{t_i} | \tilde{z}_{t_i} )
    \right]
    =&\log \E_Q \left[
        \prod_{i=1}^N p(x_{t_i} | z_{t_i})
    \right] \\
    =& 
    \log \E_P \left[
        \prod_{i=1}^N p(x_{t_i} | z_{t_i}) M_T
    \right] \\
    \ge&
    \E_P \left[
        \sum_{i=1}^N \log p(x_{t_i} | z_{t_i}) + \log M_T
    \right] \\
    =&
    \E_P \left[
        \sum_{i=1}^N \log p(x_{t_i} | z_{t_i}) 
        - \int_0^T \frac{1}{2} |u(Z_t, t)|^2 \dt
        - \int_0^T u(Z_t)^\top \dW_t
    \right]\\
    =&
    \E_P \left[
        \sum_{i=1}^N \log p(x_{t_i} | z_{t_i}) 
        - \int_0^T \frac{1}{2} |u(Z_t, t)|^2 \dt
    \right],
}
where the second line follows from the definition of $Q$ and third line follows from Jensen's inequality. In the last equality we used the fact that the It\^{o} integral $\int_0^{\cdot} u(Z_t)^{\top} dW_t$ is a martingale.

\subsection{Stochastic Adjoint for Latent SDE}\label{app:latent_sde_adjoint}
To simulate the variational lower bound (\ref{eq:variational_free_energy}) with Monte Carlo in the forward pass along with the original dynamics, we need only extend the original augmented state with an extra variable $L_t$ such that the new drift and diffusion functions for the new augmented state $Y_t = (Z_t, \theta, L_t)$ are
\eq{
    f(x, t) = \begin{pmatrix}
    b(z, t) \\
    \vzero_p \\
    \tfrac{1}{2} | u(z, t) |_2^2
    \end{pmatrix} \in \R^{d + p + 1}
    , \quad
    g_i(x, t) = \begin{pmatrix}
    \sigma_i(z, t)\\
    \vzero_p\\
    0
    \end{pmatrix}\in \R^{d + p + 1} 
    , \quad i \in [m].
}
By (\ref{eqn:A.tilde.dynamics}), the backward SDEs of the adjoint processes become
\begin{equation}
\begin{aligned}
\label{eq:latent_sde_adjoint_state_param}
    \widecheck{A}^z_t =&
        \widecheck{A}^z_T +
        \int_t^T \bracks{
            \widecheck{A}_s^z \frac{\partial b(z, s)}{\partial z}\bigg|_{z=\widecheck{Z}_s} + 
            \frac{1}{2} \widecheck{A}_s^l \frac{\partial |{ u(z, s) }|_2^2 }{\partial z}\bigg|_{z=\widecheck{Z}_s}  } \ds +
        \sum_{i=1}^m \int_t^T
            \widecheck{A}_s^z \frac{\partial \sigma_i(z, s)}{\partial z}\bigg|_{z=\widecheck{Z}_s} \circ \dee \widecheck{W}_s^{(i)}, \\
    \widecheck{A}^\theta_t =& 
        \widecheck{A}^\theta_T +
        \int_t^T \bracks{
            \widecheck{A}_s^z \frac{\partial b(z, s)}{\partial \theta}\bigg|_{z=\widecheck{Z}_s} + 
            \frac{1}{2} \widecheck{A}_s^l \frac{\partial |{ u(z, s) }|_2^2 }{\partial \theta}\bigg|_{z=\widecheck{Z}_s} } \ds +
        \sum_{i=1}^m \int_t^T
            \widecheck{A}_s^z \frac{\partial \sigma_i(z, s)}{\partial \theta}\bigg|_{z=\widecheck{Z}_s} \circ \dee \widecheck{W}_s^{(i)}, \\
    \widecheck{A}^l_t =& \widecheck{A}^l_T.
\end{aligned}
\end{equation}
In this case, neither does one need to simulate the backward SDE of the extra variable nor does one need to simulate its adjoint. Moreover, when considered as a single system for the augmented adjoint state, the diffusion function of the backward SDE (\ref{eq:latent_sde_adjoint_state_param}) satisfies the commutativity property (\ref{eq:commutativity}).

\subsection{Test Problems}\label{app:test_problems}
In the following, $\alpha, \beta$, and $p$ are parameters of SDEs, and $x_0$ is a fixed initial value.

\paragraph{Example 1.}
\eq{
\dX_{t}=\alpha X_{t} \dt+\beta X_{t} \dW_{t}, \quad X_0 = x_0.
}
Analytical solution:
\eq{
X_{t}=X_{0} e^{\left(\beta-\frac{\alpha^{2}}{2}\right) t+\alpha W_{t}}.
}

\paragraph{Example 2.}
\eq{
\dX_{t}=&
-\left(p^2\right)^{2} \sin \left(X_{t}\right) \cos ^{3}\left(X_{t}\right) \dt +p \cos ^{2}\left(X_{t}\right) \dW_{t}, \quad X_0 = x_0.
}
Analytical solution:
\eq{
X_{t}=\arctan \left(p W_{t}+\tan \left(X_{0}\right)\right).
}

\paragraph{Example 3.}
\eq{
\dX_{t}=&
    \left(\frac{\beta}{\sqrt{1+t}}-\frac{1}{2(1+t)} X_{t}\right) \dt+
    \frac{\alpha \beta}{\sqrt{1+t}} \dW_{t}, \quad X_0 = x_0.
}
Analytical solution:
\eq{
X_{t}=\frac{1}{\sqrt{1+t}} X_{0}+\frac{\beta}{\sqrt{1+t}}\left(t+\alpha W_{t}\right).
}
In each numerical experiment, we duplicate the equation 10 times to obtain a system of SDEs where each dimension had their own parameter values sampled from the standard Gaussian distribution and then passed through a sigmoid to ensure positivity. Moreover, we also sample the initial value for each dimension from a Gaussian distribution. 
\subsection{Results for Example 1 and 3}\label{app:test_problem_results}

\begin{figure*}[ht]\label{fig:additional_numerical_study}
\begin{minipage}[t]{0.333\linewidth}
\centering
{\includegraphics[width=0.98\textwidth]{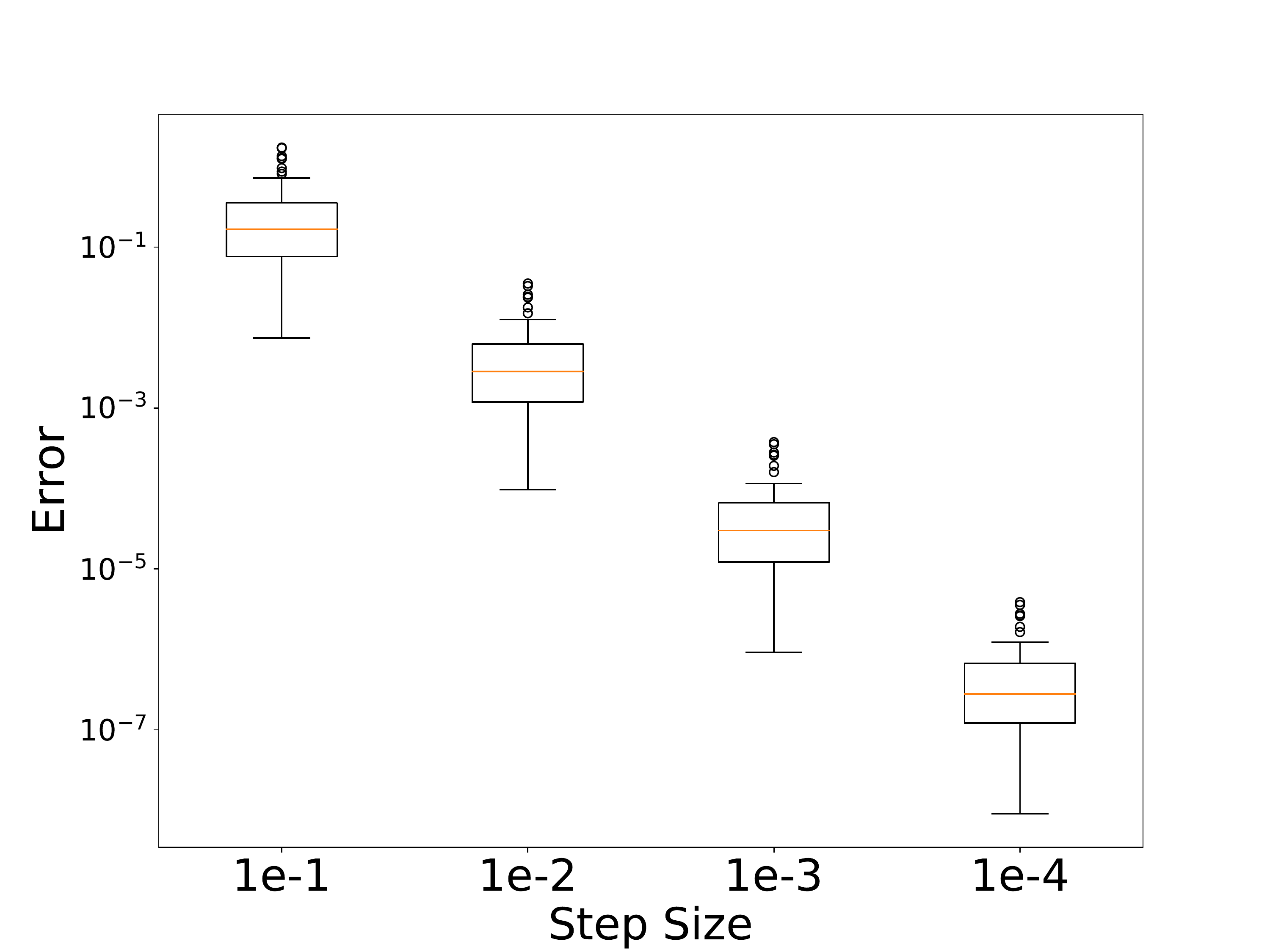}} \\ \vspace{-0.10cm}
(a) \footnotesize{Fixed Step Size vs Error}
\end{minipage}
\begin{minipage}[t]{0.333\linewidth}
\centering
\includegraphics[width=0.98\textwidth]{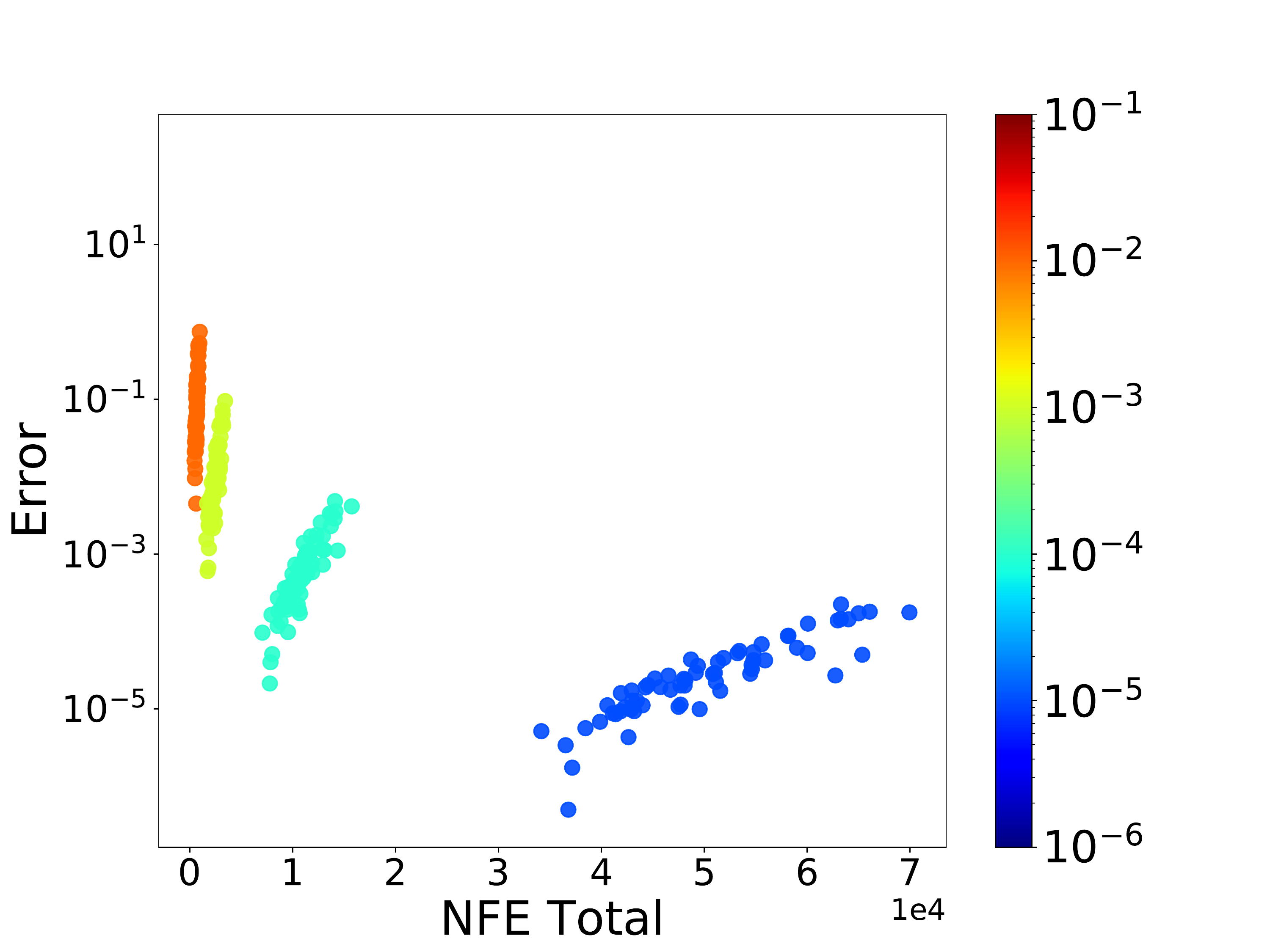} \\ \vspace{-0.10cm}
(b) \footnotesize{Total NFE vs Error}
\end{minipage}
\begin{minipage}[t]{0.333\linewidth}
\centering
\includegraphics[width=0.98\textwidth]{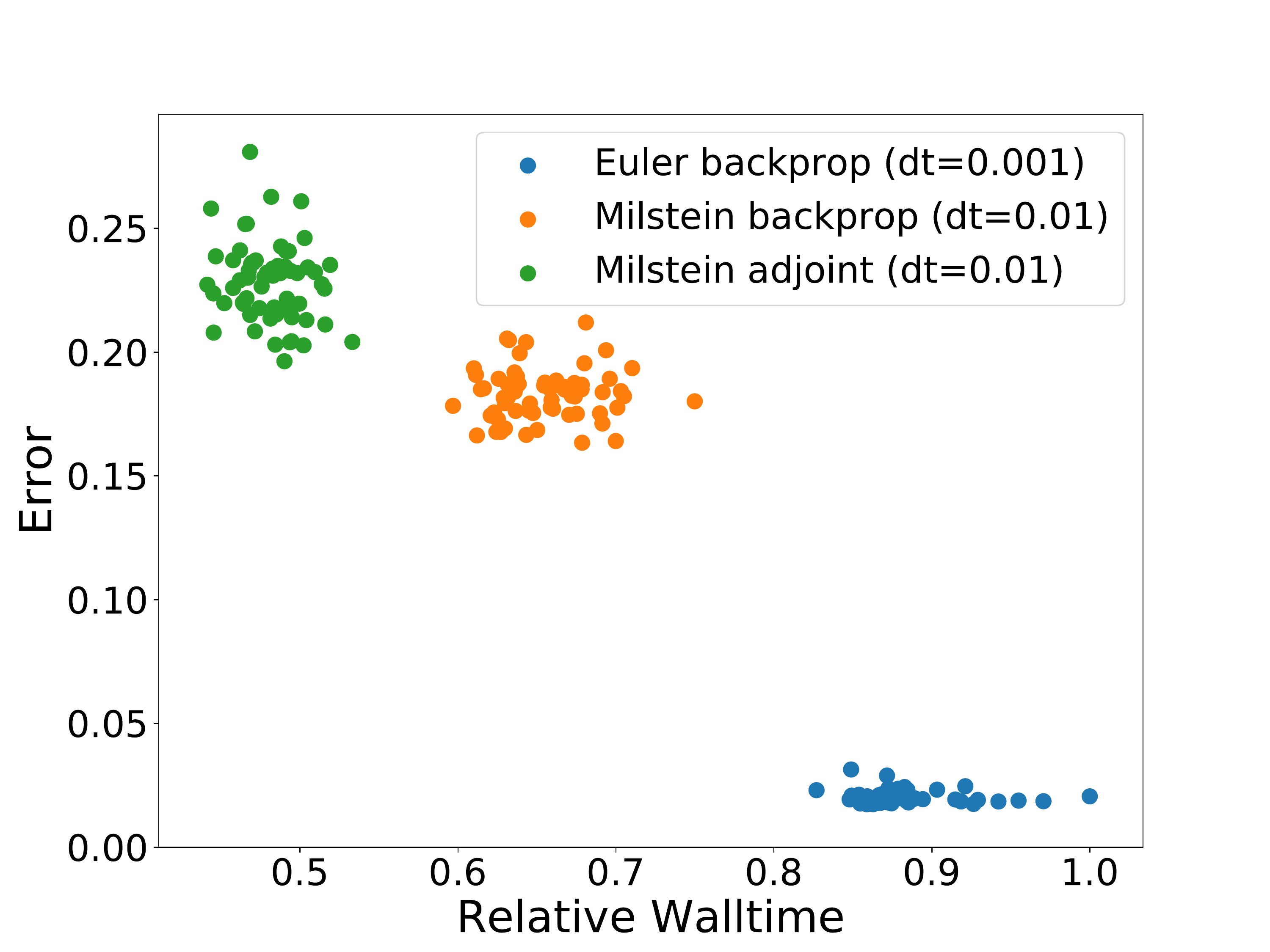} \\ \vspace{-0.10cm}
(c) \footnotesize{Efficiency Comparison}
\end{minipage}
\begin{minipage}[t]{0.333\linewidth}
\centering
{\includegraphics[width=0.98\textwidth]{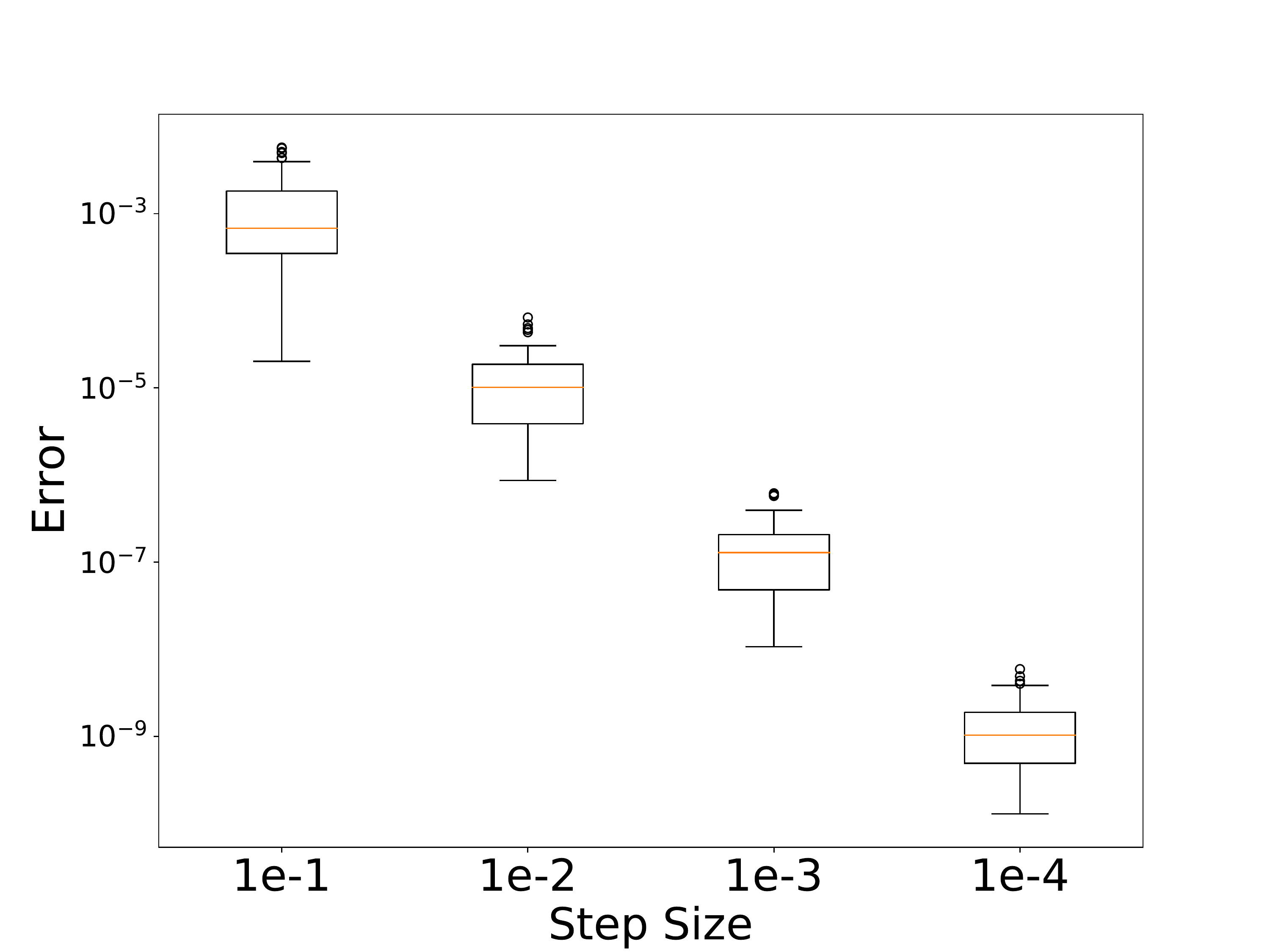}} \\ \vspace{-0.10cm}
(d) \footnotesize{Fixed Step Size vs Error}
\end{minipage}
\begin{minipage}[t]{0.333\linewidth}
\centering
\includegraphics[width=0.98\textwidth]{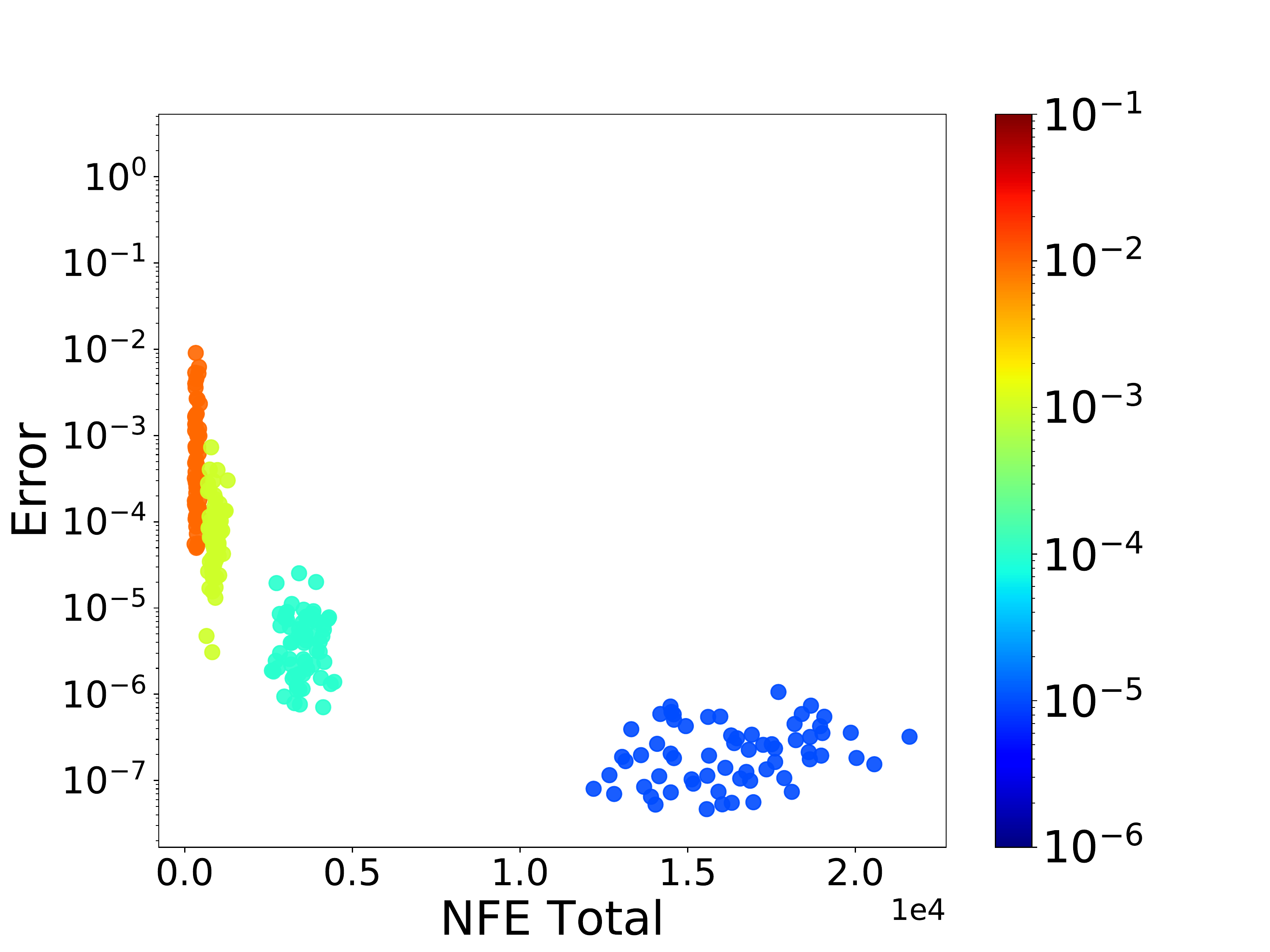} \\ \vspace{-0.10cm}
(e) \footnotesize{Total NFE vs Error}
\end{minipage}
\begin{minipage}[t]{0.333\linewidth}
\centering
\includegraphics[width=0.98\textwidth]{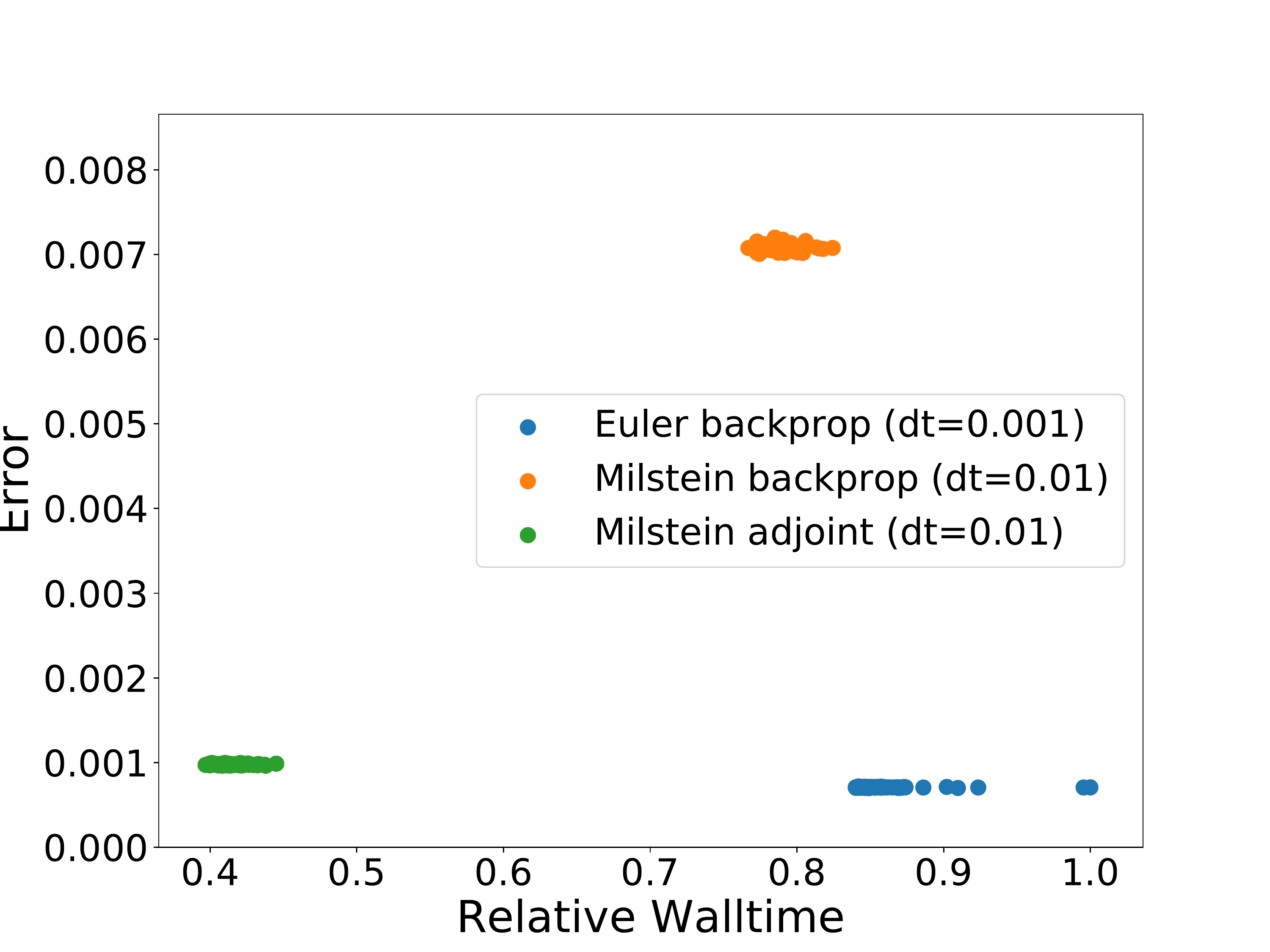} \\ \vspace{-0.10cm}
(f) \footnotesize{Efficiency Comparison}
\end{minipage}
\caption{
(a-c) Example 1. (d-f) Example 3.
}
\end{figure*}

\subsection{Toy Datasets Configuration}\label{app:toy_datasets}
\subsubsection{Geometric Brownian Motion}
Consider a geometric Brownian motion SDE:
\eq{
	\dX_t = \mu X_t \dt + \sigma X_t \dW_t, \quad X_0 = x_0.
}
We use $\mu=1$, $\sigma=0.5$, and $x_0=0.1 + \epsilon$ as the ground-truth model, where $\epsilon \sim \N(0, 0.03^2)$. We sample $1024$ time series, each of which is observed at intervals of 0.02 from time 0 to time 1. We corrupt this data using Gaussian noise with mean zero and standard deviation $0.01$.

To recover the dynamics, we use a GRU-based~\cite{cho2014learning} latent SDE model where the GRU has 1 layer and 100 hidden units, the prior and posterior drift functions are MLPs with 1 hidden layer of 100 units, and the diffusion function is an MLP with 1 hidden layer of 100 hidden units and the sigmoid activation applied at the end. 
The drift function in the posterior is time-inhomogenous in the sense that it takes in a context vector of size 1 at each observation that is output by the GRU from running backwards after processing all future observations.
The decoder is a linear mapping from a 4 dimensional latent space to observation space. 
For all nonlinearities, we use the softplus function.
We fix the observation model to be Gaussian with noise standard deviation $0.01$.

We optimize the model jointly with respect to the parameters of a Gaussian distribution for initial latent state distribution, the prior and posterior drift functions, the diffusion function, the GRU encoder, and the decoder. 
We use a fixed discretization with step size of $0.01$ in both the forward and backward pass.
We use the Adam optimizer~\cite{kingma2014adam} with an initial learning rate of $0.01$ that is decay by a factor of $0.999$ after each iteration.
We use a linear KL annealing schedule over the first 50 iterations. 

\subsubsection{Stochastic Lorenz Attractor}
Consider a stochastic Lorenz attractor SDE with diagonal noise:
\eq{
	\dX_t=& \sigma\left(Y_t-X_t \right) \dt + \alpha_x \dW_t, \quad X_0 = x_0, \\
	\dY_t=& \bracks{ X_t \left(\rho-Z_t \right)-Y_t } \dt + \alpha_y \dW_t, \quad Y_0 = y_0, \\
	\dZ_t=& \bracks{ X_t Y_t - \beta Z_t} \dt + \alpha_z \dW_t, \quad Z_0 = z_0. 
}
We use $\sigma=10$, $\rho=28$, $\beta=8/3$, $(\alpha_x, \alpha_y, \alpha_z) = (.15, .15., .15)$, and $(x_0, y_0, z_0)$ sampled from the standard Gaussian distribution as the ground-truth model. We sample $1024$ time series, each of which is observed at intervals of $0.025$ from time 0 to time 1. We normalize these samples by their mean and standard deviation across each dimension and corrupt this data by Gaussian noise with mean zero and standard deviation $0.01$.

We use the same architecture and training procedure for the latent SDE model as in the geometric Brownian motion section, except that the diffusion function consists of four small neural networks, each for a single dimension of the latent SDE.

\subsection{Additional Visualization}\label{app:visualization}

\begin{figure*}[ht]
\begin{minipage}[ht]{\linewidth}
\centering
{\includegraphics[width=0.98\textwidth, clip, trim=4mm 0mm 4mm 0mm]{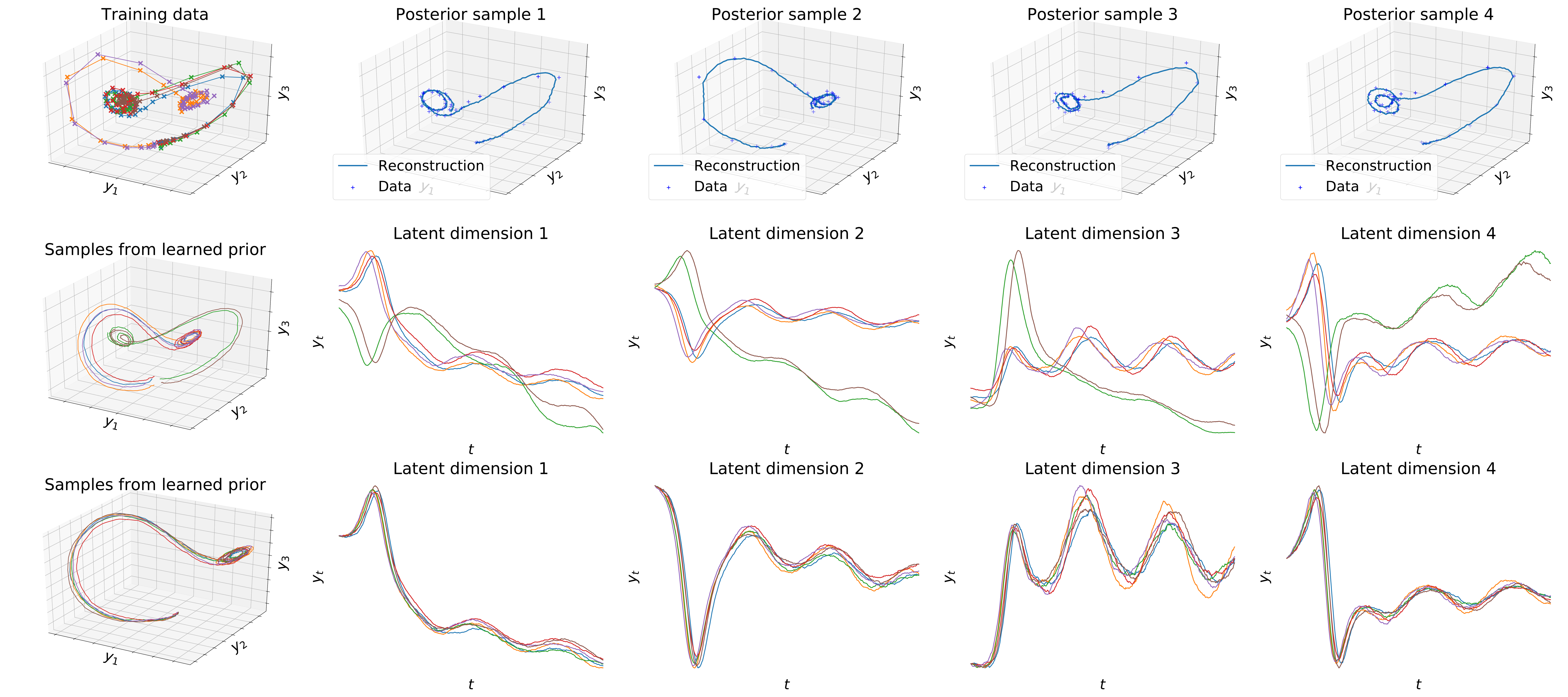}} \\ \vspace{-0.10cm}
\end{minipage}
\caption{
Additional visualizations of learned posterior and prior dynamics on the synthetic stochastic Lorenz attractor dataset.
First row displays the true data and posterior reconstructions. 
Second row displays samples with initial latent state for each trajectory is sampled independently. 
Third row displays samples with initial latent state sampled and fixed to be the same for different trajectories. 
}
\label{fig:toy_experiments_additional_lorenz}
\end{figure*}

\begin{figure*}[ht]
\begin{minipage}[ht]{\linewidth}
\centering
{\includegraphics[width=0.98\textwidth]{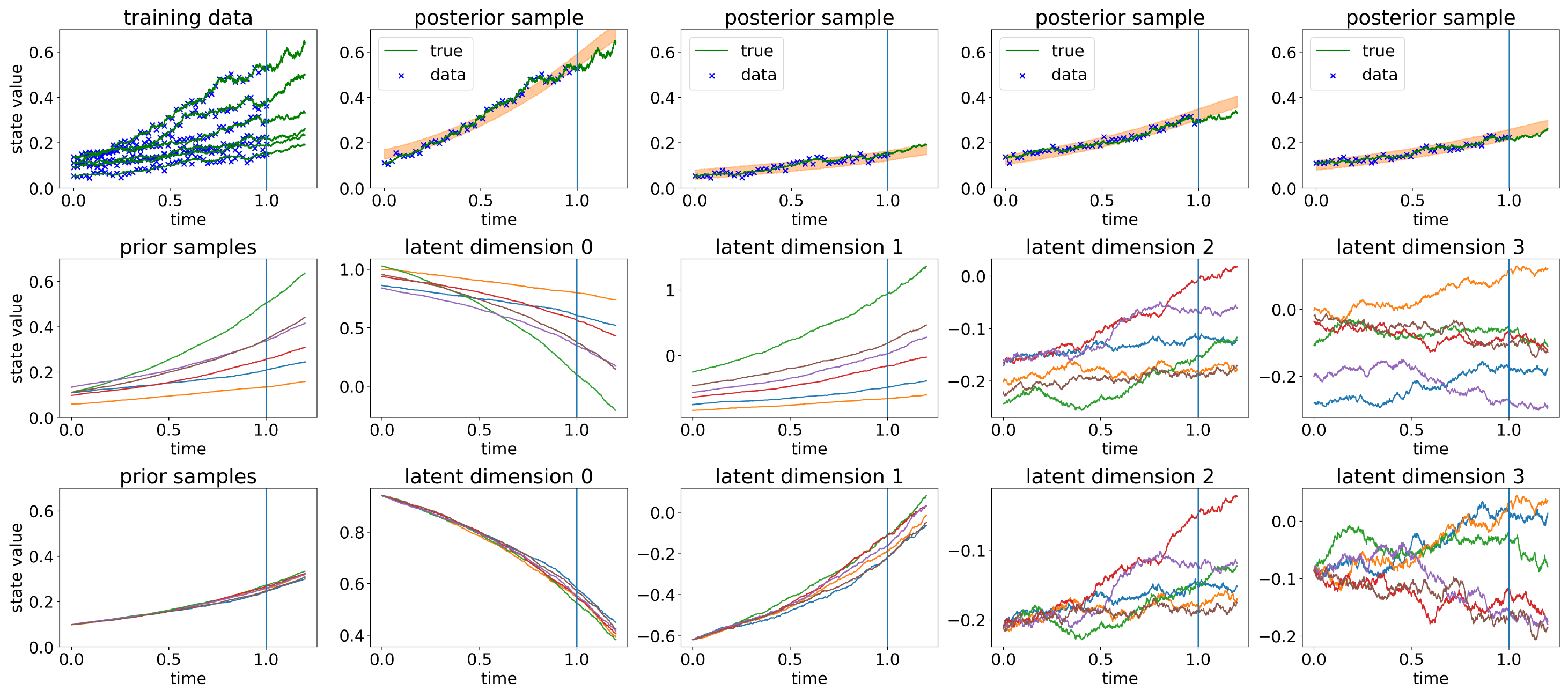}} \\ \vspace{-0.10cm}
\end{minipage}
\caption{
Visualizations of learned posterior and prior dynamics on the synthetic geometric Brownian motion dataset.
First row displays the true data and posterior reconstructions. Orange contour covers $95\%$ of 512 samples.
Second row displays samples with initial latent state for each trajectory is sampled independently. 
Third row displays samples with initial latent state sampled and fixed to be the same for different trajectories. 
}
\label{fig:toy_experiments_additional_gbm}
\end{figure*}

See Figure~\ref{fig:toy_experiments_additional_lorenz} for additional visualization on the synthetic Lorenz attractor dataset.
See Figure~\ref{fig:toy_experiments_additional_gbm} for visualization on the synthetic geometric Brownian motion dataset.
We comment that for the second example, the posterior reconstructs the data well, and the prior process exhibit behavior of the data. However, from the third row, we can observe that the prior process is learned such that most of the uncertainty is account for in the initial latent state. We leave the investigation of more interpretable prior process for future work.

\begin{figure}[ht]
\begin{minipage}[t]{\linewidth}
\centering
{\includegraphics[width=0.8\textwidth]{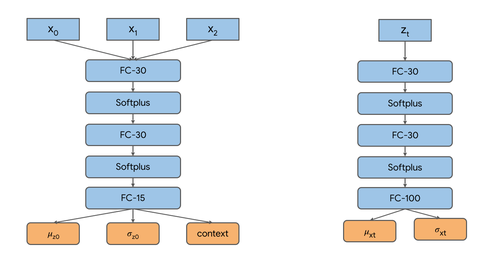}}
\\ \vspace{-0.10cm}
\end{minipage}
\begin{minipage}[t]{\linewidth}
\centering
\includegraphics[width=0.8\textwidth]{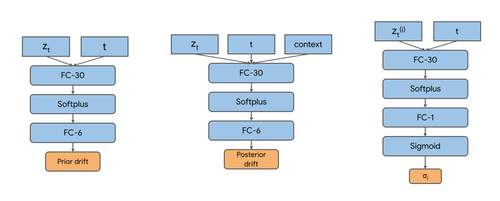}
\\ \vspace{-0.10cm}
\end{minipage}
\caption{
Architecture specifics for the latent SDE model used to train on the mocap dataset.
First row from left to right are the encoder and decoder. 
Second row from left to right are the prior drift, posterior drift, and diffusion functions.
}
\label{fig:architecture}
\end{figure}

\subsection{Model Architecture for Learning from Motion Capture Dataset}\label{app:architecture}
We use a latent SDE model with an MLP encoder which takes in the first three frames and outputs the mean and log-variance of the variational distribution of the initial latent state and a context vector. 
The decoder has a similar architecture as that for the ODE$^2$VAE model~\cite{yildiz2019ode} and projects the $6$-dimensional latent state into the $50$-dimensional observation space. 
The posterior drift function takes in a $3$-dimensional context vector output by the encoder and the current state and time, whereas the prior drift only takes in the current state and time. 
The diffusion function is composed of multiple small neural nets, each producing a scalar for the corresponding dimension such that the posterior SDE has diagonal noise. 
We use the same observation likelihood as that of the ODE$^2$VAE model~\cite{yildiz2019ode}.
We comment that the overall parameter count of our model (11605) is smaller than that of ODE$^2$VAE for the same task (12157).

The latent ODE baseline was implemented with a similar architecture, except is does not have the diffusion and prior drift components, and its vector field defining the ODE does not take in a context vector. Therefore, the model has slightly fewer parameters (10573) than the latent SDE model. See Figure~\ref{fig:architecture} for overall details of the architecture.

The main hyperparameter we tuned was the coefficient for reweighting the KL. For both the latent ODE and SDE, we considered training the model with a reweighting coefficient in $\{1, 0.1, 0.01, 0.001\}$, either with or without a linear KL annealing schedule that increased from $0$ to the prescribed value over the first $200$ iterations of training.

\clearpage
\subsection{Stochastic Adjoint Implementation}\label{app:adjoint_code}
We include the core implementation of the stochastic adjoint, assuming access to a callable Brownian motion \texttt{bm}, an Euler-Maruyama integrator \texttt{ito\_int\_diag} for diagonal noise SDEs, and several helper functions whose purposes can be inferred from their names.

\begin{minted}[fontsize=\footnotesize,]{python}
class _SdeintAdjointMethod(torch.autograd.Function):

  @staticmethod
  def forward(ctx, *args):
    (y0, f, g, ts, flat_params_f, flat_params_g, dt, bm) = (
      args[:-8], args[-7], args[-6], args[-5], args[-4], args[-3], args[-2], args[-1])
    ctx.f, ctx.g, ctx.dt, ctx.bm = f, g, dt, bm

    def g_prod(t, y, noise):
      g_eval = g(t=t, y=y)
      g_prod_eval = tuple(
        g_eval_i * noise_i for g_eval_i, noise_i in _zip(g_eval, noise))
      return g_prod_eval

    with torch.no_grad():
      ans = ito_int_diag(f, g_prod, y0, ts, dt, bm)
    ctx.save_for_backward(ts, flat_params_f, flat_params_g, *ans)
    return ans

  @staticmethod
  def backward(ctx, *grad_outputs):
    ts, flat_params_f, flat_params_g, *ans = ctx.saved_tensors
    f, g, dt, bm = ctx.f, ctx.g, ctx.dt, ctx.bm
    f_params, g_params = tuple(f.parameters()), tuple(g.parameters())
    n_tensors = len(ans)

    def aug_f(t, y_aug):
      y, adj_y = y_aug[:n_tensors], y_aug[n_tensors:2 * n_tensors]

      with torch.enable_grad():
        y = tuple(y_.detach().requires_grad_(True) for y_ in y)
        adj_y = tuple(adj_y_.detach() for adj_y_ in adj_y)

        g_eval = g(t=-t, y=y)
        gdg = torch.autograd.grad(
          outputs=g_eval, inputs=y,
          grad_outputs=g_eval,
          create_graph=True)
        f_eval = f(t=-t, y=y)
        f_eval = _sequence_subtract(gdg, f_eval)  # -f + gdg.

        vjp_y_and_params = torch.autograd.grad(
          outputs=f_eval, inputs=y + f_params + g_params,
          grad_outputs=tuple(-adj_y_ for adj_y_ in adj_y),
          retain_graph=True, allow_unused=True)
        vjp_y = vjp_y_and_params[:n_tensors]
        vjp_f = vjp_y_and_params[-len(f_params + g_params):-len(g_params)]
        vjp_g = vjp_y_and_params[-len(g_params):]

        vjp_y = tuple(torch.zeros_like(y_) 
          if vjp_y_ is None  else vjp_y_ for vjp_y_, y_ in zip(vjp_y, y))

        adj_times_dgdx = torch.autograd.grad(
          outputs=g_eval, inputs=y,
          grad_outputs=adj_y,
          create_graph=True)
        extra_vjp_y_and_params = torch.autograd.grad(
          outputs=g_eval, inputs=y + f_params + g_params,
          grad_outputs=adj_times_dgdx,
          allow_unused=True)
        extra_vjp_y = extra_vjp_y_and_params[:n_tensors]
        extra_vjp_f = extra_vjp_y_and_params[-len(f_params + g_params):-len(g_params)]
        extra_vjp_g = extra_vjp_y_and_params[-len(g_params):]

        extra_vjp_y = tuple(
          torch.zeros_like(y_) if extra_vjp_y_ is None 
          else extra_vjp_y_ for extra_vjp_y_, y_ in zip(extra_vjp_y, y))

        vjp_y = _sequence_add(vjp_y, extra_vjp_y)
        vjp_f = vjp_f + extra_vjp_f
        vjp_g = vjp_g + extra_vjp_g

      return (*f_eval, *vjp_y, vjp_f, vjp_g)

    def aug_g_prod(t, y_aug, noise):
      y, adj_y = y_aug[:n_tensors], y_aug[n_tensors:2 * n_tensors]

      with torch.enable_grad():
        y = tuple(y_.detach().requires_grad_(True) for y_ in y)
        adj_y = tuple(adj_y_.detach() for adj_y_ in adj_y)

        g_eval = tuple(-g_ for g_ in g(t=-t, y=y))
        vjp_y_and_params = torch.autograd.grad(
          outputs=g_eval, inputs=y + f_params + g_params,
          grad_outputs=tuple(-noise_ * adj_y_ for noise_, adj_y_ in zip(noise, adj_y)),
          allow_unused=True)
        vjp_y = vjp_y_and_params[:n_tensors]
        vjp_f = vjp_y_and_params[-len(f_params + g_params):-len(g_params)]
        vjp_g = vjp_y_and_params[-len(g_params):]

        vjp_y = tuple(
          torch.zeros_like(y_) if vjp_y_ is None 
          else vjp_y_ for vjp_y_, y_ in zip(vjp_y, y)
        )
        g_prod_eval = _sequence_multiply(g_eval, noise)

      return (*g_prod_eval, *vjp_y, vjp_f, vjp_g)

    def aug_bm(t):
      return tuple(-bmi for bmi in bm(-t))

    T = ans[0].size(0)
    with torch.no_grad():
      adj_y = tuple(grad_outputs_[-1] for grad_outputs_ in grad_outputs)
      adj_params_f = torch.zeros_like(flat_params_f)
      adj_params_g = torch.zeros_like(flat_params_g)

      for i in range(T - 1, 0, -1):
        ans_i = tuple(ans_[i] for ans_ in ans)
        aug_y0 = (*ans_i, *adj_y, adj_params_f, adj_params_g)
        aug_ans = ito_int_diag(
          f=aug_f, g_prod=aug_g_prod, y0=aug_y0,
          ts=torch.tensor([-ts[i], -ts[i - 1]]).to(ts),
          dt=dt, bm=aug_bm)
        adj_y = aug_ans[n_tensors:2 * n_tensors]
        adj_params_f, adj_params_g = aug_ans[-2], aug_ans[-1]

        # Take the result at the end time.
        adj_y = tuple(adj_y_[1] for adj_y_ in adj_y)
        adj_params_f, adj_params_g = adj_params_f[1], adj_params_g[1]

        # Accumulate gradients at intermediate points.
        adj_y = _sequence_add(
          adj_y, tuple(grad_outputs_[i - 1] for grad_outputs_ in grad_outputs)
        )
    return (*adj_y, None, None, None, adj_params_f, adj_params_g, None, None)
\end{minted}

\end{document}